\newcommand{\R}{\mathbb{R}}
\renewcommand{\d}{\, \mathrm{d}}
\newcommand{\ex}{\mathbb E}
\newcommand{\zb}[1]{\ensuremath{\boldsymbol{#1}}}
\newcommand{\supp}{\mathrm{supp}}
\DeclareMathOperator{\U}{\mathcal U}
\DeclareMathOperator{\F}{\mathcal F}
\newcommand{\lebesgue}{\Lambda}
\theoremstyle{plain}
\newtheorem{theorem}{Theorem}[section]
\newtheorem{proposition}[theorem]{Proposition}
\newtheorem{lemma}[theorem]{Lemma}
\newtheorem{corollary}[theorem]{Corollary}
\theoremstyle{definition}
\newtheorem{definition}[theorem]{Definition}
\theoremstyle{remark}
\newtheorem{remark}[theorem]{Remark}
\icmltitlerunning{Adapting Noise to Data: Generative Flows from learned 1D Processes}
\begin{document}

\twocolumn[
  \icmltitle{Adapting Noise to Data: Generative Flows from Learned 1D Processes}

  % It is OKAY to include author information, even for blind submissions: the
  % style file will automatically remove it for you unless you've provided
  % the [accepted] option to the icml2026 package.

  % List of affiliations: The first argument should be a (short) identifier you
  % will use later to specify author affiliations Academic affiliations
  % should list Department, University, City, Region, Country Industry
  % affiliations should list Company, City, Region, Country

  % You can specify symbols, otherwise they are numbered in order. Ideally, you
  % should not use this facility. Affiliations will be numbered in order of
  % appearance and this is the preferred way.
  \icmlsetsymbol{equal}{*}

   \begin{icmlauthorlist}
    \icmlauthor{Jannis Chemseddine}{equal,yyy}
    \icmlauthor{Gregor Kornhardt}{equal,yyy}
    \icmlauthor{Richard Duong}{equal,yyy}
    \icmlauthor{Gabriele Steidl}{yyy}
  \end{icmlauthorlist}

  \icmlaffiliation{yyy}{TU Berlin,
Stra{\ss}e des 17. Juni 136, 
10623 Berlin, Germany}

  \icmlcorrespondingauthor{Jannis Chemseddine}{chemseddine@math.tu-berlin.de}
  \icmlcorrespondingauthor{Gregor Kornhardt}{kornhardt@math.tu-berlin.de}
  \icmlcorrespondingauthor{Richard Duong}{duong@math.tu-berlin.de}
  \icmlcorrespondingauthor{Gabriele Steidl}{Steidl@math.tu-berlin.de}

  % You may provide any keywords that you find helpful for describing your
  % paper; these are used to populate the "keywords" metadata in the PDF but
  % will not be shown in the document
  \icmlkeywords{Machine Learning; flow matching; diffusion models; prior learning; learned prior; non-Gaussian latent distribution; non-Gaussian prior distribution; data-adaptive noise; heavy-tailed generative modeling; quantile functions; optimal transport; Wasserstein distance; ICML}

  \vskip 0.3in
]

% this must go after the closing bracket ] following \twocolumn[ ...

% This command actually creates the footnote in the first column listing the
% affiliations and the copyright notice. The command takes one argument, which
% is text to display at the start of the footnote. The \icmlEqualContribution
% command is standard text for equal contribution. Remove it (just {}) if you
% do not need this facility.

% Use ONE of the following lines. DO NOT remove the command.
% If you have no special notice, KEEP empty braces:
\printAffiliationsAndNotice{}  % no special notice (required even if empty)
% Or, if applicable, use the standard equal contribution text:
% \printAffiliationsAndNotice{\icmlEqualContribution}

\begin{abstract}
The default Gaussian latent in flow-based generative models poses challenges when learning certain distributions such as heavy-tailed ones.
We introduce a general framework for learning data-adaptive parametric prior distributions (latent noise) using one-dimensional quantile functions, optimized via the Wasserstein distance between noise and data. The quantile-based prior parameterization naturally adapts to both heavy-tailed and compactly supported distributions and shortens transport paths. Numerical results on heavy-tailed weather and image datasets confirm the method’s flexibility and effectiveness achieved with negligible computational overhead.
\end{abstract}
%
%=============================================================================
% SECTION 1: INTRODUCTION
%=============================================================================
\section{Introduction}\label{sec:intro}

Flow-based generative models have become a dominant paradigm in modern generative modeling. In particular, score-based diffusion models \cite{pmlr-v37-sohl-dickstein15, SE2019, song2021scorebasedgenerativemodelingstochastic}, flow matching (FM) methods \cite{albergo2023stochastic, lipman2023flow, liu2022rectified}, and more recently few-step approaches such as consistency-style models \cite{consistency_song, geng2025meanflowsonestepgenerative, boffi2025buildconsistencymodellearning} achieve state-of-the-art performance across a wide range of domains, including image synthesis and molecular generation \cite{hoogeboom2022equivariantdiffusionmoleculegeneration}, as well as discrete modalities such as  text \cite{austin2023structureddenoisingdiffusionmodels}.

In flow-based models, the default choice of noise distribution is Gaussian, which can cause difficulties when learning, for example, multimodal or heavy-tailed target distributions; we refer to the extensive literature \cite{wiese2019copula, HN2021, SBDD2022, pandey2024heavytaileddiffusionmodels, ghane2025concentration, tam2025statistical}.
 As a typical example, see the Neal's funnel in Figure \ref{plots_learned_funnel}.

To enable sampling from heavy-tailed target distributions, recent state-of-the-art approaches typically replace the standard Gaussian reference with a heavy-tailed noise distribution. However, these methods still require manually tuning the tail behavior to match the data.
For instance, \cite{pandey2024heavytaileddiffusionmodels} proposes a Student-\(t\) noise distribution with a tunable degrees-of-freedom parameter \(\nu\).
Similarly, \cite{shariatian2025heavytaileddiffusiondenoisinglevy} extends the SDE framework to heavy-tailed settings by driving the dynamics with \(\alpha\)-stable noise.

\begin{figure}[t]
  \centering
  \begin{minipage}{0.49\linewidth}
    \centering
    \includegraphics[width=\linewidth]{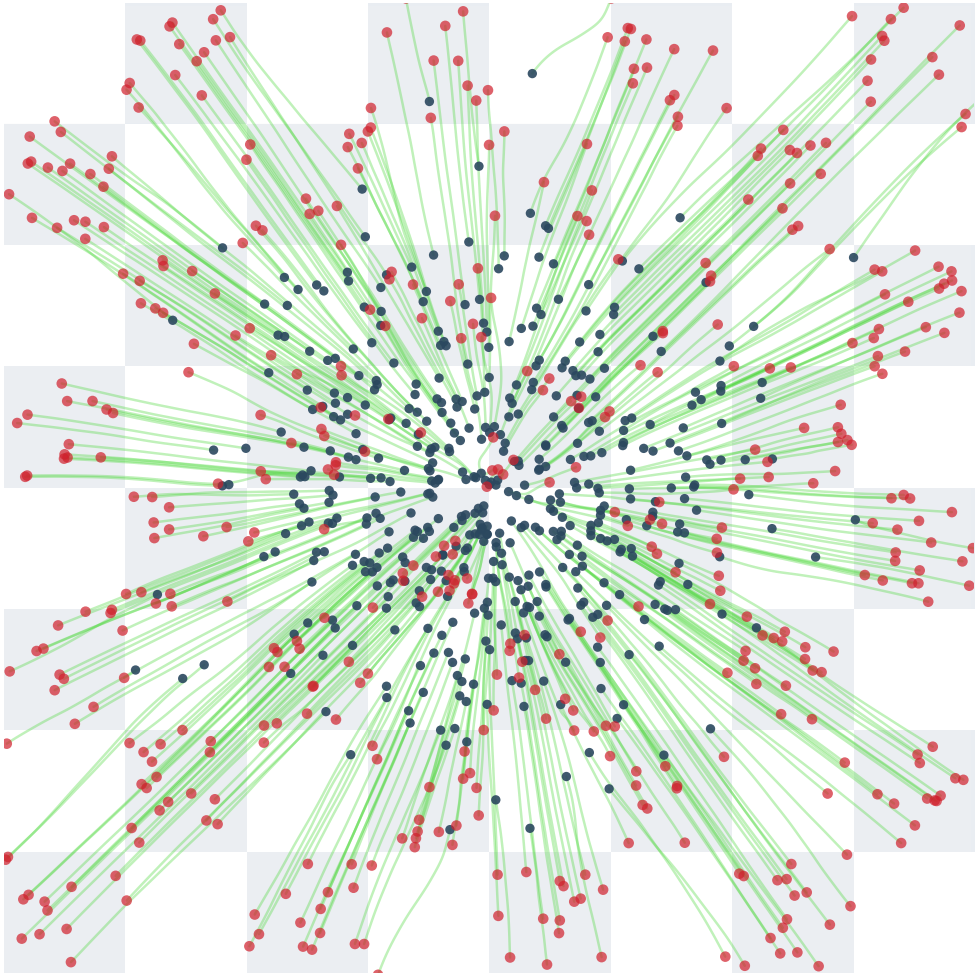}
  \end{minipage}\hfill
  \begin{minipage}{0.49\linewidth}
    \centering
    \includegraphics[width=\linewidth]{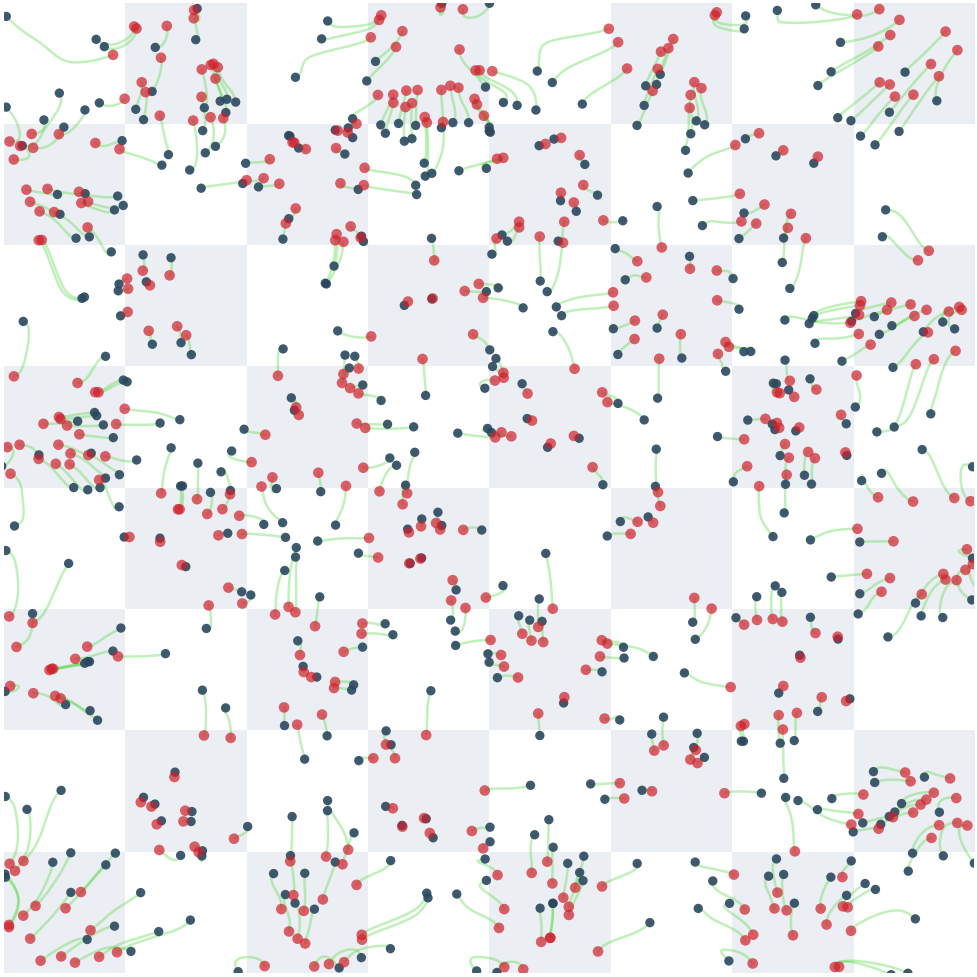}
  \end{minipage}

  \caption{FM via optimal coupling with Gaussian noise (left) and our learned noise (right). Latent samples are shown in black, generated in red and transportation paths in green. Starting from the learned latent drastically shortens the paths.}
  \label{fig:Checker}
  \vspace{-1.25\baselineskip}
\end{figure}

In this paper, we propose a novel \emph{lightweight} approach to design a noise distribution tailored to the data by \emph{learning} it directly from samples.

Motivated by the componentwise independent structure of the Gaussian and the Brownian motion, 
we construct \emph{multidimensional} flows from \emph{one-dimensional} ones. This enables us to address the latter ones by their \emph{quantile functions}. 
While quantile functions can represent any 1D probability distribution, their monotonicity makes them particularly simple to parameterize, for example by using rational quadratic splines \cite{GD1982}. 
More precisely, for quantile functions $Q^i$, $i=1,\ldots,d$,
we consider 
the \emph{quantile processes}
\begin{equation}
    X_t^i = (1-t)\,X_0^i + t\,Q^i(U^i), \qquad  t\in[0,1],
\end{equation}
with i.i.d. $U^1,\ldots,U^d \sim \mathcal{U}[0,1]$.
We learn individual quantile functions $Q^i_\phi$ such that their
componentwise concatenation 
$\mathbf{Q}_\phi(\mathbf{U}) \coloneqq (Q_\phi^1(U^1),...,Q_\phi^d(U^d))$ is \emph{"close"} to the data 
by minimizing the Wasserstein distance between the data distribution and the noise. Note that this is a constrained optimization problem, since the noise is learned as a product of 1D measures. 
The simplicity of quantile functions gives us a flexible tool, which enables us to simultaneously learn the noising process and apply the FM framework.

Our approach allows us to i)  avoid the limitations of Gaussian base noise, ii)  eliminate manual fine-tuning of the noise distribution, and iii) drastically shorten the resulting transport paths as illustrated in Figure~\ref{fig:Checker}. Further, it can be extended to fit into few-step models, e.g. inductive moment matching (IMM) \cite{IMM2025, boffi_consistency} using more general mean reverting processes.

\begin{figure*}[t]
    \centering
    \includegraphics[width=0.24\textwidth]{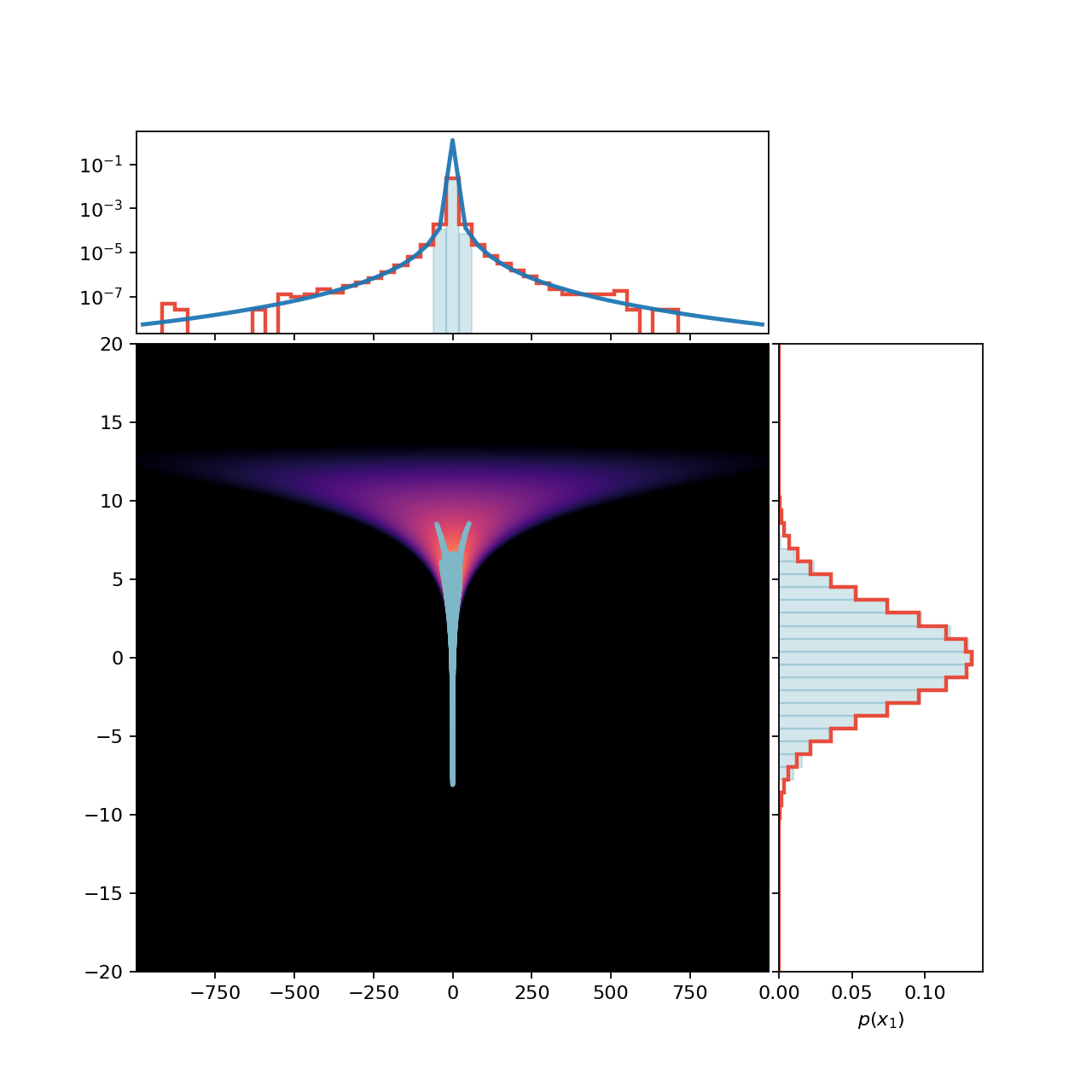}\hfill
    \includegraphics[width=0.24\textwidth]{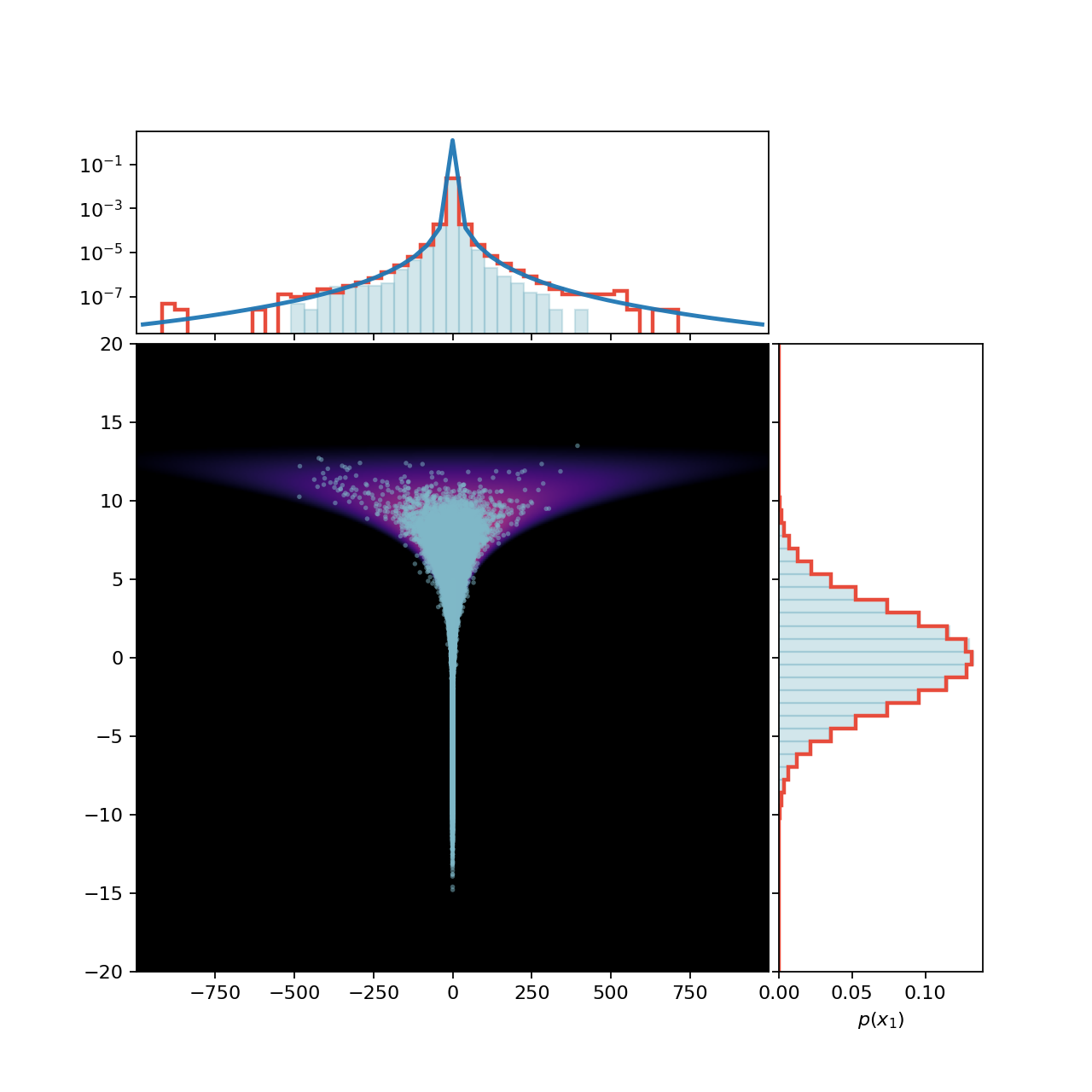}\hfill
    \includegraphics[width=0.24\textwidth]{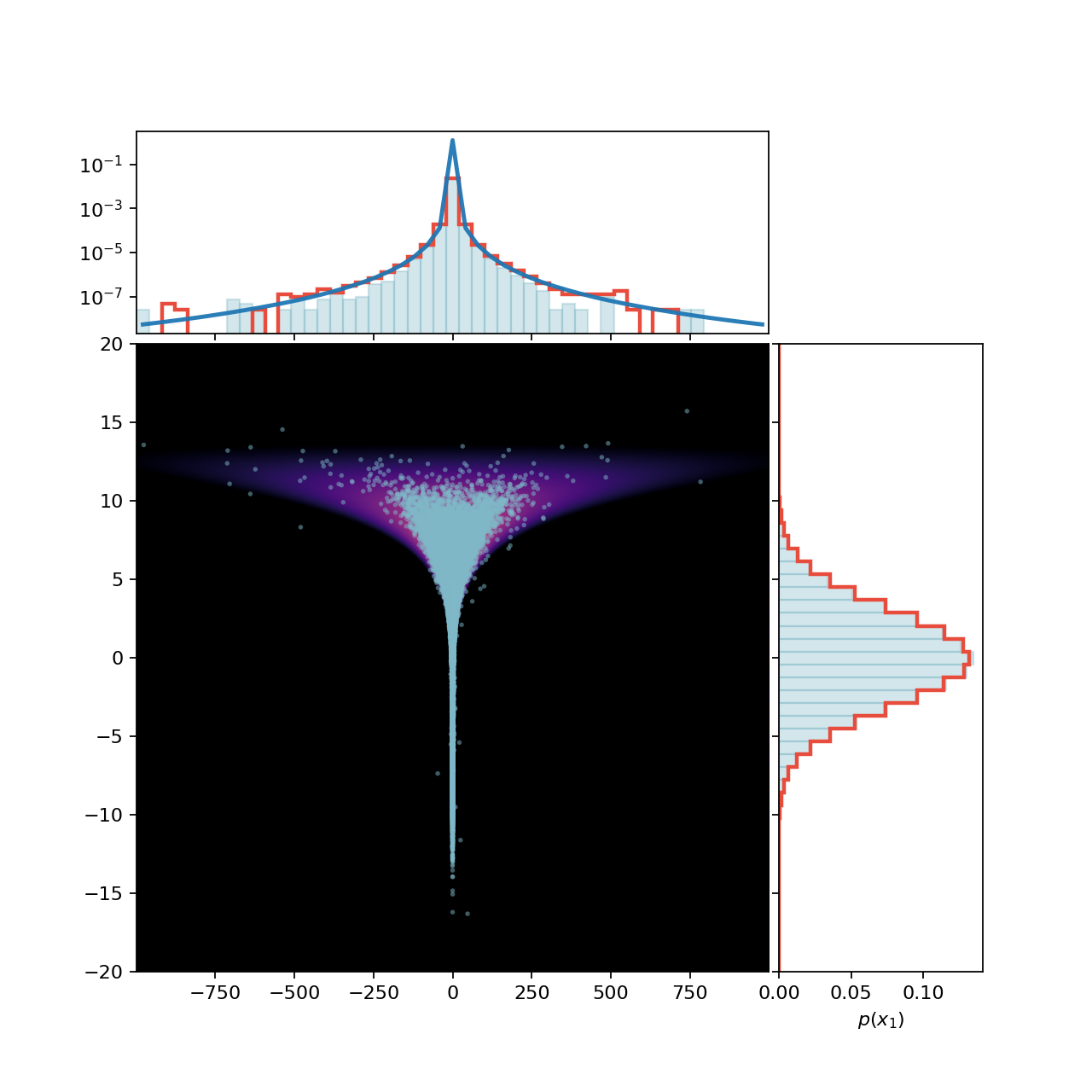}\hfill
    \includegraphics[width=0.24\textwidth]{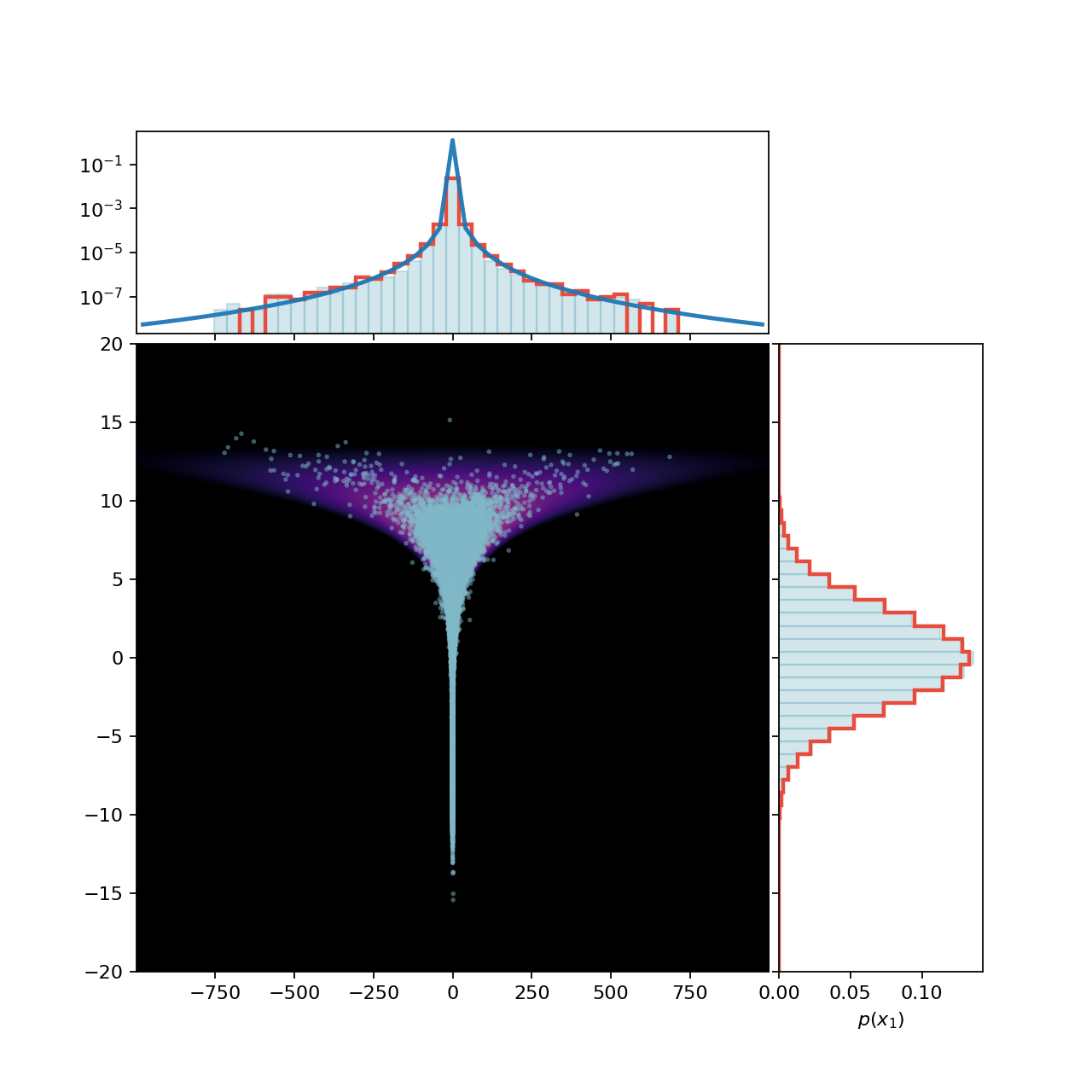}
    \caption{
    Sampling of Neal's funnel with different latent distributions.\protect\footnotemark\,
    Left to Right: \emph{uniform} on $[-1,1]$, \emph{standard Gaussian, Student-T} (with parameters $(20,4)$ inspired by the choice in \cite{pandey2024heavytaileddiffusionmodels}) and our \emph{learned distribution}. The last two heavy-tailed noises perform significantly better.}
    \label{plots_learned_funnel}
\end{figure*}
\footnotetext{Note that we used the independent coupling for training of these models. We also used z-score normalization.}
\paragraph{Contributions.} 
1. We introduce a general construction method for generative neural flows  by decomposing multi-dimensional flows into one-dimensional components and applying a general
mean reverting process. Ultimately, this allows us to work with 1D noising processes in the FM framework.

2. We examine interesting 1D noising processes besides Brownian motion: the {physics-inspired}  Kac process and the  MMD gradient flow, leading to compactly supported noise and a better regularity of the FM velocity field. 

3. We show how our framework can be generalized for few-step models via so-called quantile interpolants.  Furthermore, we offer an alternative method to construct multi-dimensional processes from one-dimensional ones using radially symmetric processes.

4.
As main contribution, we propose to \emph{learn the 1D noise distributions}  within the FM framework in a data adapted way, by parameterizing them through quantile functions and  employing the Wasserstein distance.  

5. Numerical experiments demonstrate that our method efficiently handles diverse marginal structures including heavy-tailed, compact, and multi-modal distributions. Learned quantiles shorten transport paths by capturing per-coordinate structure, while delegating cross-dimensional dependencies to the velocity field. For heavy-tailed datasets, the quantile prior removes the need to hand-tune the noise parameters while yielding consistently better results.

%=============================================================================
% SECTION 2: Flow Matching and Stochastic Processes
%=============================================================================
\section{Background on Flow Matching}\label{sec:prelim}
%----------
In general, flow models can be described mathematically via curves in Wasserstein space, see, e.g. \cite{BookAmGiSa05} and \cite{LHHS2024,WS2025}. In the following, we denote the target distribution by $\mu_0 \in \mathcal{P}(\R^d)$ and the noise distribution by $\mu_1 \in \mathcal{P}(\R^d)$.

\subsection{Curves in Wasserstein Spaces}
Let $(\mathcal P_2(\R^d),W_2)$ denote the complete metric space of probability measures with finite second moments equipped with the Wasserstein distance  
$$
W_2^2(\mu,\nu) := \min_{\pi \in \Pi(\mu,\nu)} \int_{\R^d \times \R^d} \|x-y\|^2 \d \pi(x,y) \big.
$$
Here $\Pi(\mu,\nu)$ denotes the set of all probability measures 
on $\R^d \times \R^d$ having marginals $\mu$ and $\nu$.
By $\pi_o \in \Pi(\mu,\nu)$ we denote the minimizer of the right-hand side.
The push-forward measure of $\mu \in \mathcal P_2(\R^d)$ by a measurable map $\mathcal T:\R^d \to \R^d$ is defined by
$\mathcal T_\sharp \mu := \mu \circ \mathcal T^{-1}$.
A narrowly continuous curve $\mu_t:[0,1] \to \mathcal P_2(\R^d)$ is absolutely continuous,
iff there exists a Borel measurable vector field $v: [0,1]\times \R^d \to \R^d$ with
$
\|v_t\|_{L_2(\R^d, \R^d, \mu_t)} \in L_2([0,1])
$
such that 
$(\mu_t,v_t)$ satisfies the continuity equation
\begin{align}\label{eq:ce}
\partial_t \mu_t + 
\nabla_x \cdot (\mu_t v_t)=0
\end{align}
in the sense of distributions. 

\subsection{From the Continuity Equation to the ODE}
If the vector field fulfills
$
\int_0^1 ~\, \sup_{x \in B} \|v_t(x)\|+\mathrm{Lip}(v_t,B)\d t<\infty 
$
for all compact $B\subset \R^d$, then the ODE
\begin{align}\label{eq:flow_ode} 
\partial_t \varphi(t,x)=v_t(\varphi(t,x)), \qquad \varphi(0,x)=x,
\end{align}
has a solution
$\varphi:I\times\R^d\to \R^d$ 
and 
$\mu_t = \varphi(t,\cdot)_\sharp\mu_0$.

Starting in the target distribution $\mu_0$ and ending in a simple latent distribution $\mu_1$, as usual in diffusion models,
we can reverse the flow from the latent to the target distribution in inference using just the opposite velocity field $-v_{1-t}$ in the ODE \eqref{eq:flow_ode}.
If the velocity field $v_t$ is learned, we can sample from the target distribution by starting in a sample from the latent one and then applying our favorite ODE solver in \eqref{eq:flow_ode}.

\subsection{Flow Matching: Learning the Velocity Field} 
Flow matching learns the velocity field $v_t$ with a neural network $v_t^\theta$ by minimizing the loss
\begin{equation}
        \mathcal{L}(\theta) \coloneqq
        \mathbb{E}_{t,   x \sim \mu_t} \left[ \left\| v_t^\theta(x) - v_t(x) \right\|^2 \right]
\end{equation}
with uniformly sampled time in $[0,1]$.
Indeed,  this intractable loss can be rewritten as
$
\mathcal{L}(\theta) =  \mathcal{L}_{\mathrm{CFM}}(\theta) + \text{const}
$
with the conditional flow matching (CFM) loss 
\begin{align*} %\label{fm_loss}
     \mathcal{L}_{\mathrm{CFM}}(\theta) \!\coloneqq \! \mathbb{E}_{ t ,  x_0 \sim \mu_0,  x \sim \mu_t(\cdot| x_0)} \left[ \left\| v_t^\theta(x) - v_t(x| x_0) \right\|^2 \right].   
\end{align*}  

\begin{remark}[Flow Matching with Couplings.]\label{ot_coup}
Consider a coupling $\pi \in \Pi(\mu_0,\mu_1)$ and the induced curve $\mu_t \coloneqq (e_t)_ \sharp \pi$ with $e_t(x,y):= (1-t)x +ty$.
Then it is known that $\mathcal L_{\mathrm{CFM}}(\theta)$
can be rewritten as
\begin{align} \label{ot_fm_loss}
\mathcal{L}_{\mathrm{CFM}}(\theta) \! = \!\mathbb{E}_{t,\, (x,y) \sim \pi}
 \Big[\big\|v^\theta_t \big( (1-t)x + ty \big) - (y - x)\big\|_2^2\Big].
\end{align} 
Taking the optimal coupling
$\pi_o$ instead of the usually applied
independent coupling
$\pi = \mu_0 \otimes \mu_1$, 
leads to a reduced variance in training and both shorter and straighter paths, see \cite{tong2024improvinggeneralizingflowbasedgenerative,pooladian2023multisampleflowmatchingstraightening}. 
\end{remark}

%-------------------------------------------------
{\subsection{Velocity Fields via Stochastic Processes}\label{sec:cooking}}
%-------------------------------------------------
Absolutely continuous curves and their velocity fields can be alternatively derived via stochastic processes:
for a probability space $(\Omega, \Sigma, \mathbb P)$ and a differentiable stochastic processes $(\mathbf X_t)_{t \in [0,1]}$ with $\mathbf X_t \in L_2(\Omega,\mathbb R^d, \mathbb P)$, we have that $(\mu_t,v_t)$ with
\begin{equation}\label{eq:Ce_process}
\mathbf X_t \sim \mu_t \coloneqq \mathbf X_{t,\sharp} \mathbb P 
\quad \text{and} \quad v_t \coloneqq \mathbb E [ \mathbf{\dot X}_t| \mathbf X_t = \cdot]
\end{equation}
satisfies the continuity equation.
We introduce a special mean reverting process which generalizes previous standard concepts, see Remark \ref{rem:x}.

\textbf{Mean Reverting Processes.} 
Let us consider a continuously differentiable (noising) process $(\mathbf{N}_t)_t$ with $\mathbf{N}_0 \equiv 0 \in \R^d$ and $\mathbf{N}_t \sim \mu^{\mathbf{N}}_t$
with associated velocity field $v^{\mathbf{N}}_t$.
In the next section, we will show how to get access to such velocity fields.
Then we define the \emph{mean‐reverting} process by
\begin{equation}\label{eq:mean-rev-kac}
  \mathbf{X}_t \;\coloneqq\; f(t)\,\mathbf{X}_0 \;+\; \mathbf{N}_{g(t)},
  \quad t\in[0,1],
\end{equation}
with smooth \emph{scheduling functions} \(f,g\) fulfilling
\begin{equation}\label{schedules}
  f(0)=1,\quad f(1)=0 \quad \text{and} \quad g(0)=0,\quad g(1) = 1.  
\end{equation}
Then \(\mathbf{X}_1=\mathbf{N}_{1}\) and  the process $\mathbf{X}_t$ starts in \(\mathbf{X}_0\).  Differentiation of \eqref{eq:mean-rev-kac} results in
\[
  \dot {\mathbf{X}}_t = \dot f(t)\,\mathbf{X}_0 \;+\; \dot g(t)\,\dot {\mathbf{N}}_{g(t)}.
\]
Then, the conditional velocity field of $\mathbf{X}_t$ is given by  
\begin{align}\label{eq:velo-OU}
  &v^{\mathbf{X}}_t(x \mid x_0)
  = \mathbb{E}\bigl[\dot {\mathbf{X}}_t \mid \mathbf{X}_t=x,\;\mathbf{X}_0=x_0\bigr]\nonumber\\
  &=~ \mathbb{E}\bigl[\dot f(t)\, x_0 + \dot g(t)\,\dot {\mathbf{N}}_{g(t)}
    \,\bigm|\,\mathbf{N}_{g(t)}=x-f(t)x_0\bigr]\nonumber\\
  &=~ \dot f(t)\,x_0 \,+\, \dot g(t)\,v^{\mathbf{N}}_{g(t)}\bigl(x - f(t)x_0 \bigr).
\end{align}
Now, the conditional flow matching loss 
$\mathcal{L}_{\text{CFM}}$ can be minimized  regarding $\mathbf{X}_t \sim \mu_t$: namely sampling $y \sim \mathbf{N}_{g(t)}$
we obtain a sample $x \sim (\mathbf{X}_t \mid \mathbf{X}_0 = x_0)$ from \eqref{eq:mean-rev-kac} satisfying 
$$v^{\mathbf{X}}_t(x\mid x_0) = \dot f(t)\,x_0 \,+\, \dot g(t)\,v^{\mathbf{N}}_{g(t)}\bigl(y \bigr).$$

\begin{remark}[Relation to FM and Diffusion] \label{rem:x}
Consider the stochastic process 
\begin{equation}\label{DDIM_process}
     \mathbf{X}_t^{\rm FM} = \alpha_t  \mathbf{X}_0 + \sigma_t \mathbf{X}_1, \qquad \mathbf{X}_1 \sim \mathcal{N}(0,I_d).
\end{equation}
Choosing $f(t) \coloneqq \alpha_t$, $g(t) \coloneqq \sigma_t^2$ and the standard Brownian motion  $\mathbf{N}_t=\mathbf{W}_t$, it holds the equality in distribution 
\begin{equation}\label{equal-distribution}
   \mathbf{X}_t^{\rm FM} \overset{d}{=} f(t)\mathbf{X}_0 + \mathbf{W}_{g(t)} 
   = \mathbf{X}_t .
\end{equation}
Then $f(t) \coloneqq 1-t$, $g(t) \coloneqq t^2$ yields (independent) FM \cite{lipman2023flow}, and  
$f(t) \coloneqq \exp\left( -\frac{h(t)}{2} \right)$, $g(t) \coloneqq 1 - \exp\left( -h(t) \right)$, where $h(t) \coloneqq \int_0^t \beta_{\rm min} + s(\beta_{\rm max} - \beta_{\rm min}) \d s$ with, e.g.,  $\beta_{\rm min} = 0.1, \, \beta_{\rm max} = 20$, corresponds to processes used in score-based generative models \cite{song2021scorebasedgenerativemodelingstochastic}, see Appendix \ref{sec:connection-FM}.
\end{remark}

%
%=============================================================================
% SECTION 3: From One-Dimensional to Multi-Dimensional Flows
%=============================================================================
\section{Reduction to One-Dimensional Flows}\label{sec:dD}
%----------------------------------------
%Motivated by the construction of multi-dimensional Wiener processes,
Recall that in standard FM and diffusion models, isotropic Gaussian noise is gradually added to the data, i.e.\
\begin{equation}\label{motivation-isotropic}
    \mathbf{X}_t = \alpha_t \mathbf{X}_0 +  \mathbf{N}_t,
\end{equation}
where $\mathbf{N}_t=(N_t^1,\dots, N_t^d)$ with i.i.d.\ $N_t^i \sim \mathcal{N}(0, \sigma_t^2)$.
Motivated by this, 
we restrict ourselves to noising processes $\mathbf{N}_t$ that decompose into one-dimensional components. This allows us to propose {a general construction method for} accessible \emph{conditional} flows in FM.

Let $N_t^1,\dots,N_t^d$ be a family of independent 1D stochastic processes with laws $\mu_t^i \in \mathcal{P}_2(\mathbb{R})$. For each $i=1,\dots,d$, let $v_t^i \colon \mathbb{R}\to\mathbb{R}$ be the associated velocity field such that the pair $(\mu_t^i, v_t^i)$ satisfies the one-dimensional continuity equation \eqref{eq:ce}. 
Then, the law of the $d$-dimensional process $\mathbf{N}_t \coloneqq (N_t^1,\dots,N_t^d)$ is the product measure  
\begin{align}\label{comp}
  \mu_t(x) \;=\; \prod_{i=1}^d \mu_t^i(x^i),
  \quad x=(x^1,\dots,x^d)\in\mathbb{R}^d,
\end{align}
and the corresponding $d$-dimensional velocity field is given componentwise, see
\cite{DCFS2025}.

\begin{proposition} \label{cont_eq_decomposes}
Let $\mu_t$ be given by \eqref{comp},
where the  $\mu^i_t$  are absolutely continuous curves in $\R$ with velocity fields $v^i_t$.
Then $\mu_t$ satisfies 
a {multi-dimensional} continuity equation  \eqref{eq:ce} with a velocity field which decomposes into the univariate velocities 
%\begin{align} \label{velo-decomposition}
$v_t(x) := \left( v^1_t(x^1), \dots, v^d_t(x^d) \right)$.
%\end{align}
\end{proposition}

In summary, assuming access to the 1D velocities,
we can \textbf{construct accessible conditional flows for 
FM} as follows:
\begin{enumerate}
\item
\emph{One-dimensional noise:}
Start with {a 1D process and} an {associated} 
{curve} 
$\mu_t$ {with} $\mu_0=\delta_0$, $0 \in \R$ and an accessible velocity field $v_t$ in the 1D continuity equation
\begin{align}\label{eq:ce_1d}
\partial_t \mu_t + 
\partial_x  (\mu_t v_t)=0, \qquad \mu_0 = \delta_0.
\end{align}
\item
\emph{Multi-dimensional noise:}
Set up a multi-dimensional conditional flow model starting in $\mu_0 = \delta_0$, $0 \in \R^d$ with  possibly different, but independent 1D processes
as described in \eqref{comp}.
\item \emph{Incorporating the data:}
Construct a multi-dimensional conditional flow model starting in $\mu_0 = \delta_{x_0}$ for any data point $x_0 \sim \mu_0$ by mean-reversion as shown in
Section \ref{sec:cooking}.
\end{enumerate}

\begin{remark}[Learning Cross Dimensional Correlation of the Data.]
    As usual in diffusion or flow matching models, the {velocity field} \eqref{eq:Ce_process} (or score function in diffusion) is responsible for learning cross-dimensional correlations and dependencies of the target $\mathbf{X}_0$.
\end{remark}
Concerning step 1, we explore three interesting 1D (noising) processes in connection with their respective PDEs in the Appendix \ref{sec:ingredients1D}, namely:
\\[1ex]
- Wiener process $W_t$ and diffusion equation,
\\
- Kac process $K_t$ and damped wave equation,
\\
- Uniform process $U_t$ and  MMD gradient flow.

In each case, we explicitly calculate the respective conditional measure flow and its conditional velocity field in Appendix \ref{sec:ingredients1D}, such that the conditional flow matching loss $\mathcal{L}_{\text{CFM}}$ can be minimized.
In each case, the absolutely continuous curve  starting in $\delta_0$ and the corresponding velocity field can be calculated analytically. 
In contrast to the usual Wiener process $W_t$  in diffusion and FM models, the latter two processes $K_t, U_t$ do not enjoy a trivial analogue in multiple dimensions: in case of $K_t$ the corresponding PDE (damped wave equation) is no longer mass-conserving in dimension $d \ge 3$, see \cite{TL2016}; in case of $U_t$ the mere existence of the MMD gradient flow in multiple dimensions is unclear due to the lack of convexity of the MMD, see \cite{HGBS2022}. Our general construction method makes these 1D processes accessible for generative modeling in arbitrary dimensions{, hinting at a wide range of suitable noising processes}. For example, the recent work \cite{han2025distillkacfewstepimagegeneration} utilizes the Kac framework for model distillation. 

As an alternative to above componentwise construction, we describe an approach using radially symmetric processes in Appendix \ref{sec:radial}.

\begin{figure*}[t]
    \centering
    \includegraphics[width = 0.15\textwidth]{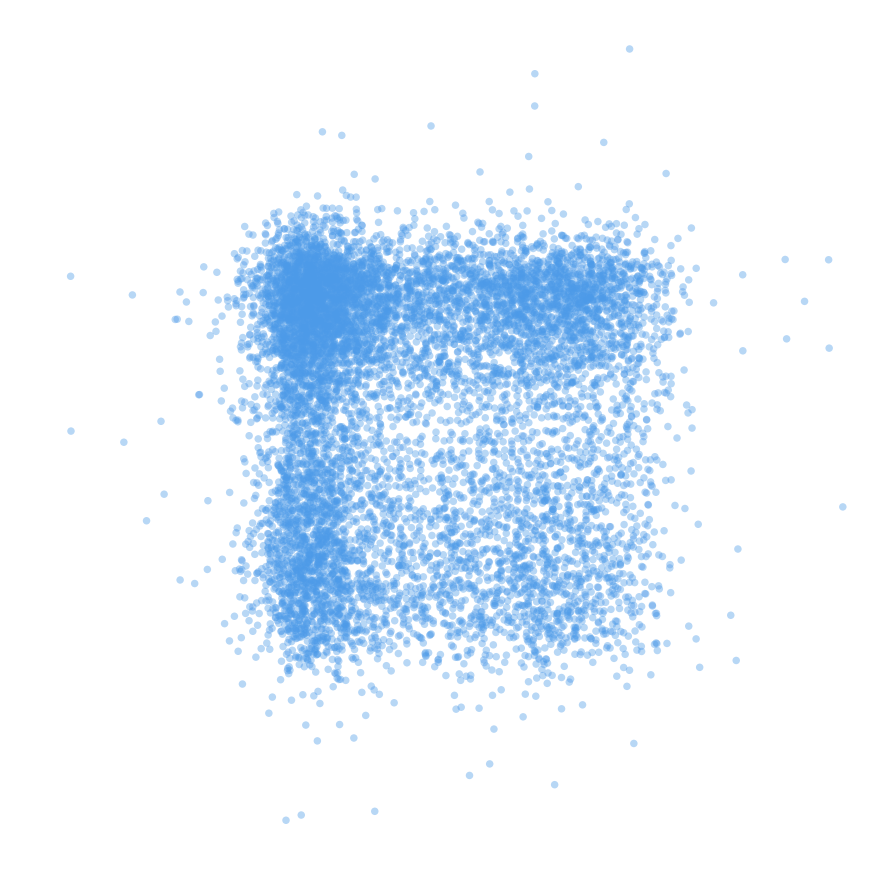}
    \includegraphics[width = 0.15\textwidth]{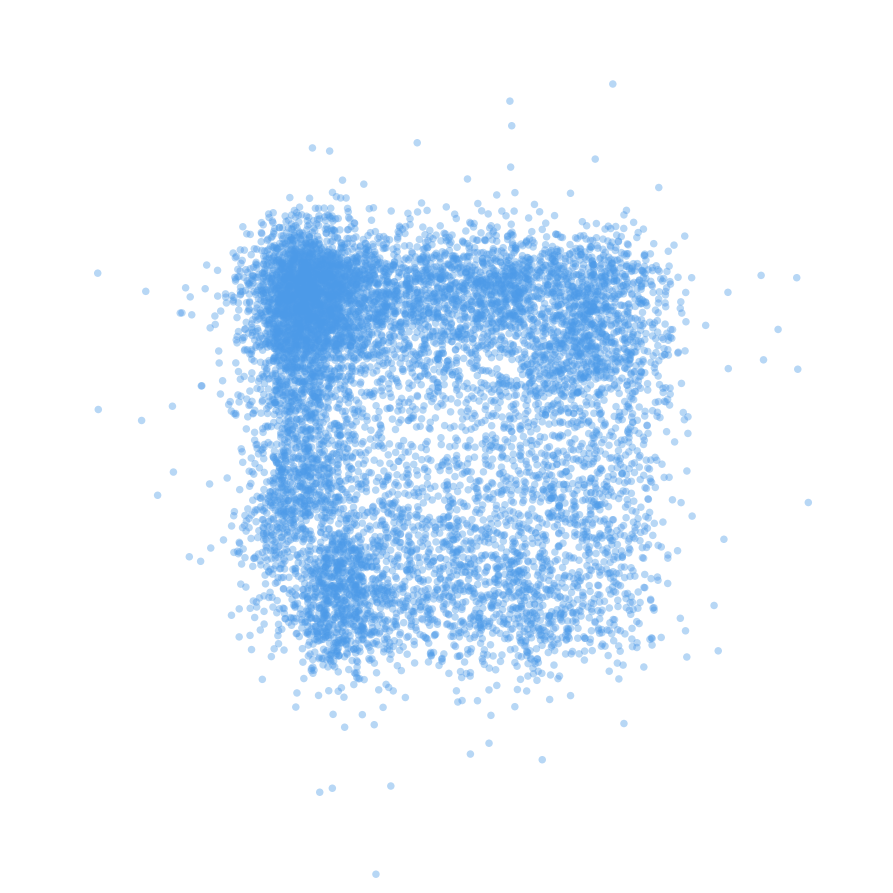}
    \includegraphics[width = 0.15\textwidth]{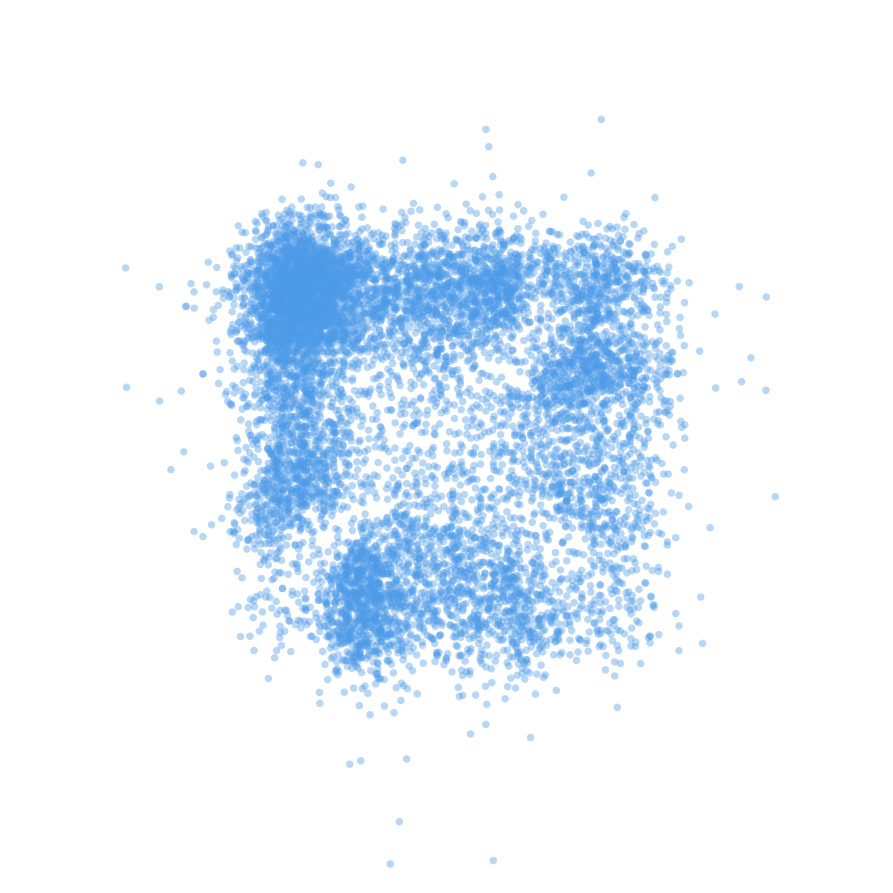}
    \includegraphics[width = 0.15\textwidth]{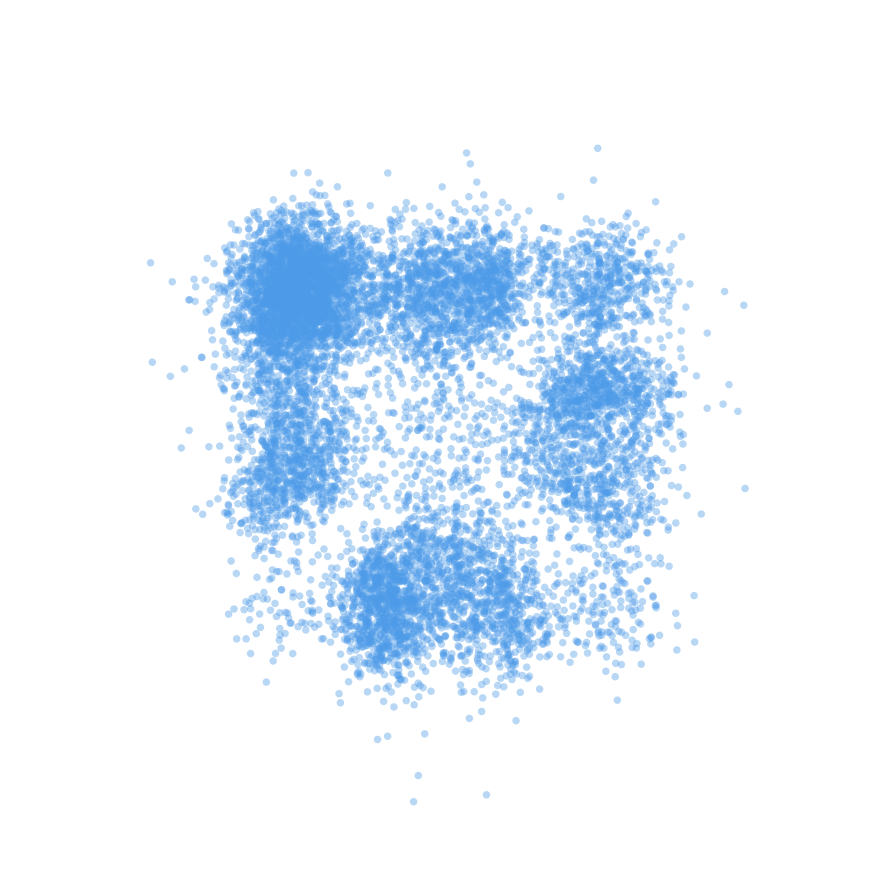}
    \includegraphics[width = 0.15\textwidth]{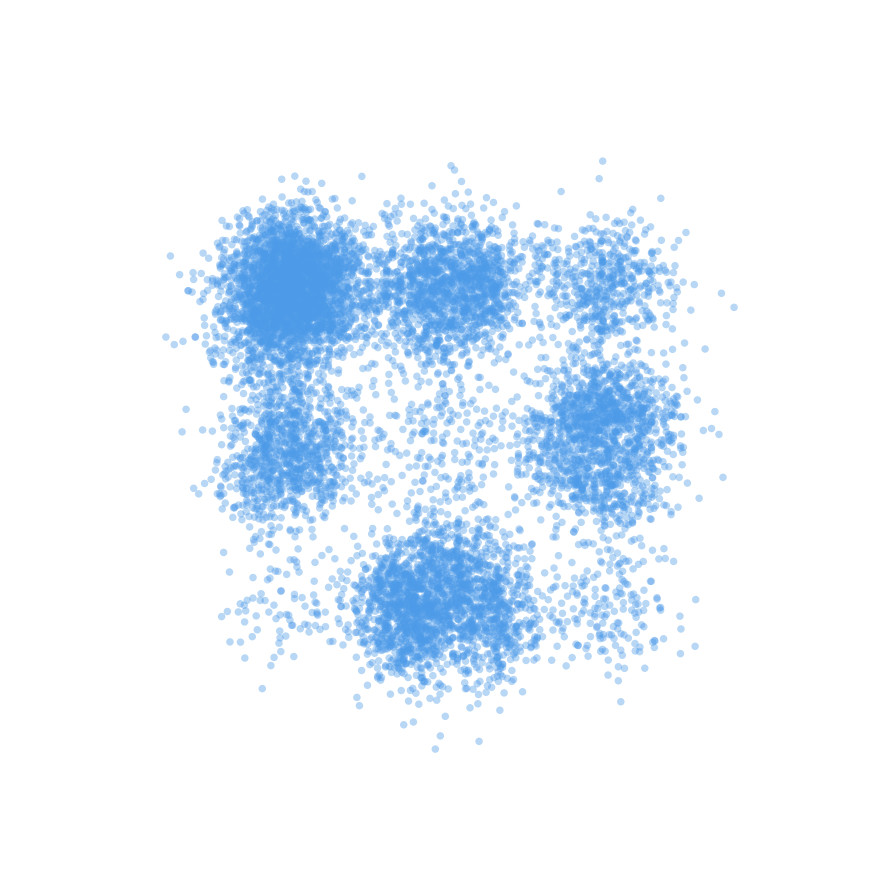}    
    \includegraphics[width = 0.15\textwidth]{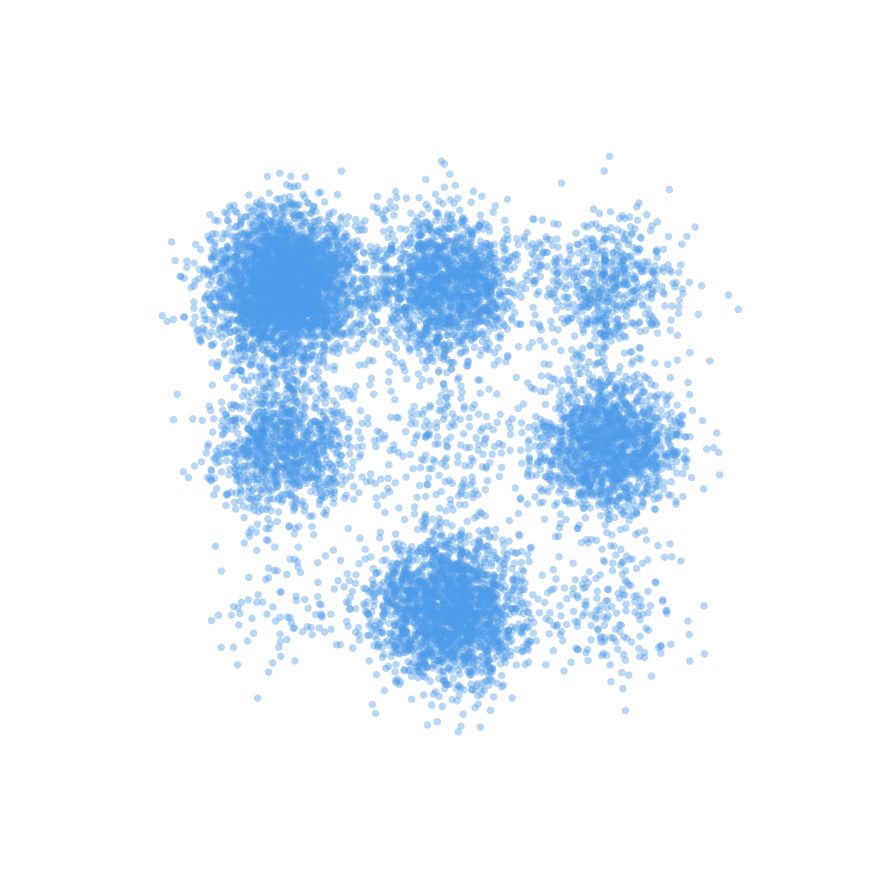}
   
    \caption{A generated trajectory from the learned quantile latent (left) to the unevenly weighted Gaussian mixture target (right). The \emph{adapted latent} $\mathbf Q_\phi$ (left image) is already close to the target distribution $\mathbf X_0\sim\mu_0$.}
    \label{plot_traj_GMM}
\end{figure*}
%=============================================================================
% SECTION 4: Adapting Noise to Data
%=============================================================================
\section{One-Dimensional Flows with Quantiles}\label{sec:1d}

%-------------------------------------------------------------------
We have outlined how to construct accessible conditional flows for FM using 1D noising processes. So far, we have considered three fixed 1D processes. However, every 1D distribution can be described by its quantile functions. This will enable us in Section \ref{sec:learning_q} to construct \emph{arbitrary noising processes} and to learn the noise, see also \cite{CKDS2026}.

To this end, we revisit the connection between 1D distributions and their quantile functions. Furthermore, we  introduce quantile processes and quantile interpolants. 
%------------------------
\subsection{Background on Quantile Functions}\label{sec:qprocess}
%------------------------
The restriction to componentwise noising processes $\mathbf N_t = (N_t^1,\dots,N_t^d)$ in 
\eqref{motivation-isotropic}\footnote{Besides componentwise 1D processes we may also use triangular decompositions, not addressed in this paper.}
allows us to use the quantile functions of the 1D components.
The \emph{cumulative distribution function} (CDF)  $R_\mu$
of $\mu \in \mathcal P_2(\R)$ 
and its
\emph{quantile function} 
$Q_\mu$ are given by
\begin{align}\label{cdf-and-quantile}
R_\mu(x) &\coloneqq \mu\big( (-\infty, x] \big), \quad x \in \R, \\
Q_{\mu}(u) &\coloneqq \min\{ x \in \R:  R_\mu(x) \ge u \}, \quad u \in (0,1).
\end{align}
The quantile functions form a closed, convex cone 
\begin{align}\label{quantile_cone}
    \mathcal C \coloneqq \{ f \in L_2(0,1): f \text{ increasing } a.e.\}   
\end{align}
in $L_2(0,1)$. The mapping
$\mu \mapsto Q_\mu$
is an  isometric embedding of 
$(\mathcal P_2(\R),W_2)$ into
$(L_2(0,1),\|\cdot\|_{L_2})$,
meaning that
\begin{equation}    \label{eq:1d_continuous_quantile_formula}
    W_2^2(\mu, \nu) = \int_0^1 \big|Q_\mu(s) - Q_{\nu}(s)\big|^2 \, \d s
\end{equation}
and $\mu = Q_{\mu, \sharp} \mathcal{U}_{(0,1)}$.
Let $U \sim \mathcal{U}[0,1]$ be uniformly distributed on $[0,1]$.
Now, any probability measure flow $\mu_t$ can be described by their quantile flow $Q_t \coloneqq Q_{\mu_t}$, such that $\mu_t = Q_{t, \sharp} \mathcal{U}_{(0,1)}$ and $Q_t \circ U$  is a stochastic process with marginals $\mu_t$. 

\textbf{Parameterization.} Quantile functions are both universal by being able to express any 1D distribution, and simple to parametrize, since they only need to satisfy the requirements from \eqref{quantile_cone}. For example quantile functions can be modeled using rational quadratic splines \cite{GD1982,durkan2019neural}, as described in Section~\ref{subsec:RQS}. Moreover, Dirac measures $\delta_a$ are easily represented in this framework via constant quantile functions $Q \equiv a$.

\subsection{Quantile Processes}
We can therefore model any \emph{multi-dimensional} noising process, that decomposes into its components $\mathbf{N}_t = (N_t^1,\dots,N_t^d)$, via quantile functions. Namely let $X_0$ be any component $\mathbf{X}_0^i$ of  $\mathbf{X}_0 \sim \mu_0$, and  $f,g : [0,1] \to \mathbb{R}$ smooth schedules  fulfilling \eqref{schedules}.
We assume that we are given a flow $(Q_t)_t$ of quantile functions
$Q_t : (0,1) \to \mathbb{R}$, $t \in [0,1]$, 
which fulfill $Q_0 \equiv 0$.
Let $U \sim \mathcal U(0,1)$. We introduce the \emph{quantile process}
\begin{equation}\label{quantile-process}
    Z_t = f(t) X_0 + Q_{g(t)}(U), \quad  t \in [0,1].
\end{equation}
By choosing the quantile process $Q_{t}^i(U)\overset{d}{=} t\cdot Q_{\mathcal{N}(0,1)}(U)$, the resulting process coincides with the standard Gaussian interpolation process in \eqref{motivation-isotropic}.
.

\subsection{Quantile Interpolants.}
Assuming $Q_t$ is invertible on its image, the inverse is given 
by the CDF $R_t : Q_t(0,1) \to [0,1]$.
Let us briefly mention how our setting fits into the framework of consistency models. 
To this end, we define the \emph{quantile interpolants} for $s,t \in [0,1]$ by
\begin{equation}\label{quantile-interpolant}
    I_{s,t}(x, y) = f(s) x + Q_{g(s)}\big(R_{g(t)}(y - f(t)x) \big).
\end{equation}
This generalizes the interpolants used in Denoising Diffusion Implicit Models (DDIM), see  Remark \ref{rem:ddim}.

\begin{proposition}\label{lem:consistent}
For all $ x, y \in \mathbb{R}$ and  all $s,r,t \in [0,1]$, it holds
   $I_{0,t}(x, y) = x$, $I_{t,t}(x, y) =  y$,
and 
\begin{align}
    I_{s,r}(x,  I_{r,t}(x,y)) 
    = I_{s,t}(x,y).
\end{align}
Furthermore, inserting the quantile process \eqref{quantile-process} yields
$I_{s,t}(Z_0, Z_t) = Z_s$.
\end{proposition}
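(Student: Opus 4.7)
The plan is to verify each of the four claims by direct substitution into the definition of the quantile interpolant $I_{s,t}(x,y) = f(s)x + Q_{g(s)}(R_{g(t)}(y - f(t)x))$, using only the boundary conditions $f(0)=1,\ g(0)=0$, the assumption $Q_0 \equiv 0$, and the mutual inversion $Q_t \circ R_t = \mathrm{id}$ on $Q_t(0,1)$ and $R_t \circ Q_t = \mathrm{id}$ on $(0,1)$, as stated in the setup preceding \eqref{quantile-process}.

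First I would handle the two boundary identities. For $I_{0,t}(x,y)$, plugging in $s=0$ gives $f(0)x + Q_{g(0)}(R_{g(t)}(y - f(t)x)) = x + Q_0(\cdot) = x$, using $f(0)=1$ and $Q_0 \equiv 0$. For $I_{t,t}(x,y)$, setting $s=t$ yields $f(t)x + Q_{g(t)}(R_{g(t)}(y - f(t)x)) = f(t)x + (y - f(t)x) = y$, provided the inner argument $y - f(t)x$ lies in the image of $Q_{g(t)}$; this is the domain on which $R_{g(t)}$ is defined per the hypothesis.

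Next, for the composition identity I would first isolate the noise component of $I_{r,t}(x,y)$, namely $I_{r,t}(x,y) - f(r)x = Q_{g(r)}(R_{g(t)}(y - f(t)x))$, which by construction belongs to $Q_{g(r)}(0,1)$ and is therefore a valid input for $R_{g(r)}$. Substituting into $I_{s,r}$ then yields
\begin{align}
I_{s,r}(x, I_{r,t}(x,y))
&= f(s)x + Q_{g(s)}\bigl(R_{g(r)}\bigl(Q_{g(r)}(R_{g(t)}(y - f(t)x))\bigr)\bigr) \\
&= f(s)x + Q_{g(s)}\bigl(R_{g(t)}(y - f(t)x)\bigr) = I_{s,t}(x,y),
\end{align}
where the middle cancellation uses $R_{g(r)} \circ Q_{g(r)} = \mathrm{id}$ on $(0,1)$.

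Finally, for the claim $I_{s,t}(Z_0, Z_t) = Z_s$ I would use that $Z_0 = f(0)X_0 + Q_{g(0)}(U) = X_0$ and $Z_t - f(t)Z_0 = Q_{g(t)}(U)$, so $R_{g(t)}(Z_t - f(t)Z_0) = R_{g(t)}(Q_{g(t)}(U)) = U$ almost surely. Therefore $I_{s,t}(Z_0, Z_t) = f(s)X_0 + Q_{g(s)}(U) = Z_s$. The only subtlety anywhere in the argument is making sure the arguments fed into $R_{g(t)}$ lie in $Q_{g(t)}(0,1)$; this is automatic for (2)--(4) because the relevant quantity is itself the output of the corresponding quantile function, and for the abstract statement (1) this is part of the standing assumption on the admissible inputs $y$.
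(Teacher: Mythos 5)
Your proof is correct and takes essentially the same approach as the paper: direct verification by substitution into the definition of $I_{s,t}$, using $f(0)=1$, $Q_0\equiv 0$, and the mutual inversion of $Q_{g(\cdot)}$ and $R_{g(\cdot)}$. The only difference is that you make the domain bookkeeping explicit (checking that arguments passed to $R_{g(t)}$ lie in $Q_{g(t)}(0,1)$), which the paper glosses over but which is indeed automatic in cases (2)--(4) as you note.
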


The proof is given in Appendix \ref{app:IMM}. 
Proposition \ref{lem:consistent} allows us to also
apply the concept of consistency models to our quantile process \eqref{quantile-process}. 
In the Appendix \ref{app:IMM}, we demonstrate { this} by means of the recently proposed \emph{inductive moment matching} (IMM) \cite{IMM2025}.

%----------------------------------
\section{Adapting Noise to Data}\label{sec:learning_q}
%------------------------------------

As noted in many papers, see also the related work in Section~\ref{sec:related work}, the choice of the noise distribution can have a significant impact on sampling performance. In Figure~\ref{fig:Checker}, we visualize the path length under the learned vector field for the checkerboard distribution. In Figure~\ref{plots_learned_funnel}, we evaluate the Neal's Funnel \cite{neal2003slice} distribution under different noise distributions, ranging from compact-support to heavy-tailed.

Rather than manually selecting a noise family and tuning its parameters, we propose to \emph{learn} the noise (prior distribution) directly from the data. We
focus on the simple setting  $\mathbf{Q}_{g(t)} = g(t) \, \mathbf{Q}$ and schedules $f(t) = 1-t$,  $g(t) = t$. This corresponds to the standard linear interpolation in FM with conditional velocity $v^{\mathbf{Q}}_t(x) = x/t$; see Appendix~\ref{app:scaled_latent}.

We restrict to noise $\mathbf{Q}$  with independent components having law from the set
\begin{align*}
    S \coloneqq \big\{\nu \in \mathcal P_2(\R^d): \nu(x) = \prod_{i=1}^d \nu^i(x^i) \big\},
\end{align*}
corresponding to quantile processes of the form
\begin{equation}\label{eq:learnnoise-quantile-process}
X_t^{i} = (1-t)\,X_0^{i} + t\, Q^{i}(U^i), \quad t\in[0,1],
\end{equation}
with $\mathbf{Q}(\mathbf{u}) := (Q^1(u^1), \ldots, Q^d(u^d))$. The quantile functions $Q^i$ determine the scale, support, and tail behavior of each marginal of $\mathbf{Q}(\mathbf{U})$.
  
\paragraph{Learning Objective.} We learn the noise $\mathbf{Q}_\phi$ by minimizing the  Wasserstein distance between $\mu_0$ and $\nu_\phi := (\mathbf{Q}_\phi)_\# \mathcal{U}([0,1]^d)$, 
\begin{equation}\label{loss1:quantile-loss}
    \mathcal{L}_{\mathrm{AN}}(\phi) = W_2^2(\mu_0, \nu_\phi).
\end{equation}

Note that due to the restriction of our quantiles to the class $S$, the minimizer of \eqref{loss1:quantile-loss} is in general \emph{not}  $\mu_0$. Crucially, the independence constraint restricts $(\mathbf Q_\phi)_\#\,\U([0,1]^d)$ to per-coordinate adaptation and prevents encoding \emph{cross-dimensional} correlations. The latter are introduced via the optimal transport coupling $(x,y)$ and modeled by the velocity field through the target $(y-x)$. By this separation the latent remains simple and computationally efficient, while delegating dependencies to the flow.

\paragraph{Entropy Regularization.}
In high-dimensional settings and given fixed batch sizes, the signal for the quantile function can be noisy, potentially leading to degenerate solutions. To mitigate this, we add a regularization term to the loss that penalizes the expected negative log-determinant of the Jacobian of the quantile. Since the quantile maps from a uniform distribution $U \sim \mathcal U[0,1]$, this term equals the differential entropy $h$ of the learned latent
\begin{align*}
 \mathcal{R}(\phi):= h(\mathbf Q_\phi(U)) &= h(U) + \mathbb{E}[\log |\det J_{\mathbf Q_\phi}(U)|] \\
    &= \mathbb{E}[\log |\det J_{\mathbf Q_\phi}(U)|].
\end{align*}
Access to analytic derivatives makes this efficient, for more details see \ref{app:regularization}.

\paragraph{Joint Training Objective.} Although our quantiles can be trained independently, in order to provide an aligned training signal for the velocity field, we propose to train \(\mathbf Q_\phi\) \,\emph{jointly} with velocity \(v^\theta\). Hence, we aim to minimize the loss
\begin{align}\label{eq:our loss}
\mathcal L(\theta,\phi)=\mathcal {\mathcal L}_{\mathsf{CFM}}(\theta,\phi)+\lambda  {\mathcal L}_{\operatorname{AN}}(\phi) - \beta \mathcal R(\phi),
\end{align}
where $\beta \in [0,\infty)$ is a hyperparameter controlling the regularizer, for stability of this choice see Figure~\ref{fig:CIFAR} and Section~\ref{sec: Further Experimental Results},  and $\lambda$ is weighting the quantile loss.
The velocity field training loss  
${\mathcal L}_{\mathsf{CFM}}(\theta, \phi)$ is given by
\begin{align}
\ex_{t, (x,y_{\phi}) \sim \pi_\phi}
\Big[\big\|v^\theta_t\big((1-t)x +t y_{\phi}\big)-
( \operatorname{{sg}}(y_{\phi})-x)\big\|_2^2\Big],
\end{align}
where $\pi_\phi \in \Pi_o(\mu_0, \nu_\phi)$  and \(\operatorname{sg}(\cdot)\) denotes the stop-gradient operator. We visualize the effect of choosing $\lambda =0$ in Figure \ref{ablation:cifar_all}. The stop-gradient on $X_1-X_0$ means that the quantile network only receives gradients through the interpolated states $X_t=(1-t)X_0+tX_1$. This discourages trivial collapse, since the quantile cannot reduce the FM loss simply by shrinking the endpoint displacement.

\paragraph{Implementation Details.} In practice, we optimize via minibatches; see Appendix~\ref{appendix:mini-batch} for details. We compute a minibatch OT map \(T\) minimizing \(\sum_{j=1}^B \|\mathbf x_0^{(j)}-\mathbf y^{(T(j))}\|_2^2\) for batches from $\mathbf{X}_0$ and $\mathbf Q_\phi(\mathbf U)$. Crucially, this coupling is reused both for optimizing the quantile and for OT-FM, see Remark~\ref{ot_coup}. We train the quantile jointly for a fixed number of iterations before freezing its parameters; thereafter only the velocity field is optimized.

\begin{algorithm}[h]
  \caption{Joint learning of 1D quantiles and FM velocity}
  \label{alg:learn-quantile-fm}
  \begin{algorithmic}
    \STATE {\bfseries Input:} dataset $\mathcal D$, batch size $B$, weights $\lambda,\beta$, iterations $K$; quantile model $\mathbf Q_\phi$, velocity model $v^\theta$
    \FOR{$k=1$ {\bfseries to} $K$}
      \STATE Sample $\{\mathbf x_i\}_{i=1}^B \sim \mathcal D$, $\{\mathbf u_j\}_{j=1}^B \sim \mathcal U([0,1]^d)$, $\{t_j\}_{j=1}^B \sim \mathcal U(0,1)$
      \STATE $C_{ij} \gets \|\mathbf x_i-\mathbf Q_\phi(\mathbf u_j)\|_2^2$
      \STATE $T \gets \arg\min_{T}\sum_{i=1}^B C_{i,T(i)}$ %\hfill // assignment (e.g., Hungarian)
      \STATE Define $P$ such that $P(j)=i$ iff $T(i)=j$
      \FOR{$j=1$ {\bfseries to} $B$}
        \STATE $\mathbf z_j \gets (1-t_j)\mathbf x_{P(j)} + t_j\,\mathbf Q_\phi(\mathbf u_j)$
        \STATE $\mathbf v_{\mathrm{target},j} \gets \operatorname{sg}\!\big(\mathbf Q_\phi(\mathbf u_j) - \mathbf x_{P(j)}\big)$
      \ENDFOR
      \STATE $\widehat{\mathcal L}_{\mathrm{AN}} \gets \frac{1}{B}\sum_{j=1}^B \|\mathbf x_{P(j)}-\mathbf Q_\phi(\mathbf u_j)\|_2^2$
      \STATE $\widehat{\mathcal R} \gets \frac{1}{B}\sum_{j=1}^B \log \left|\det J_{\mathbf Q_\phi}(\mathbf u_j)\right|$
      \STATE $\widehat{\mathcal L}_{\mathsf{CFM}} \gets \frac{1}{B}\sum_{j=1}^B \|v^\theta(\mathbf z_j,t_j) - \mathbf v_{\mathrm{target},j}\|_2^2$
      \STATE $\widehat{\mathcal L} \gets \widehat{\mathcal L}_{\mathsf{CFM}} + \lambda\,\widehat{\mathcal L}_{\mathrm{AN}} - \beta\,\widehat{\mathcal R}$
      \STATE Update $(\theta,\phi)$ by a gradient step on $\widehat{\mathcal L}$
    \ENDFOR
    \STATE {\bfseries Output:} learned parameters $(\theta,\phi)$
  \end{algorithmic}
\end{algorithm}

%--------------------------------------------
\section{Experiments} \label{sec:numerics}
%--------------------------------------------
To {provide intuition and} validate our proposed method, we conduct experiments on synthetic, {image} and weather datasets. First we briefly outline how to efficiently parametrize quantile functions. The code is available at \url{https://github.com/TUB-Angewandte-Mathematik/Adapting-Noise}.

\subsection{Rational Quadratic Splines}\label{subsec:RQS}
We parameterize each quantile function $Q^i$ using a rational quadratic 
spline (RQS)~\cite{GD1982, durkan2019neural}. Unlike normalizing flows 
that compose many RQS layers within coupling architectures, we use a 
single RQS per coordinate with directly optimized parameters.

For coordinate $i$, the spline $S^i_\phi: [-B, B] \to [-B, B]$ is defined 
by $K$ knots with associated bin widths $\{w_k\}_{k=1}^K$, bin heights 
$\{h_k\}_{k=1}^K$, and slopes $\{s_k\}_{k=0}^K$. We enforce positivity 
via softplus and normalize widths and heights to span $2B$. This 
guarantees strict monotonicity. Outside $[-B, B]$, we extend $S^i_\phi$ 
linearly with $C^1$ continuity at the boundaries.

To map from $(0,1)$ to the spline domain, we apply $\psi(u) = 
\operatorname{logit}(u)$ or $\psi(u) = B(2u - 1)$, yielding the quantile
\begin{equation}
    Q^i_\phi(u) = a_i \cdot S^i_\phi(\psi(u)) + b_i,
\end{equation}
where the scale $a_i > 0$ and bias $b_i$ are learned per coordinate.

\paragraph{Computational Overhead.} The total parameter count is $\mathcal{O}(Kd)$: for $K = 32$ bins and $d = 3072$ (CIFAR-10), approximately 300k parameters. 
This lightweight parametrization, combined with reusing the minibatch OT coupling and freezing the quantile after 55k iterations, introduces minimal overhead compared to standard Gaussian OT-FM. On CIFAR-10 (NVIDIA RTX 5090), we measure approximately $2.7\%$ overhead during joint training and $0.5\%$ after freezing.

\subsection{Synthetic Datasets}\label{subsec:exp Synthetic Datasets}
We begin by qualitatively analyzing our algorithm on several synthetic 2D distributions, see also Appendix \ref{sec:toy-targets}, each designed to highlight a specific aspect of our approach. We provide intuition about the learned latent distribution and demonstrate that it is closer to the data in the Wasserstein sense, yields shorter transport paths, and successfully captures the tail behavior.
\paragraph{Gaussian Mixture Model (GMM).}
We first consider a 2D GMM with nine unevenly weighted modes, as visualized in Figure \ref{plot_traj_GMM}. Due to the independence assumption inherent in our factorized quantile function, the learned latent cannot perfectly replicate the target's joint distribution and is \emph{not the product of the correct marginals}, see also Example ~\ref{example:Not Marginals}. Instead, it approximates a distribution where the components cannot further independently improve the transport cost to the target. 

\paragraph{Funnel Distribution.}
The funnel distribution (Figure~\ref{plots_learned_funnel}) presents a challenge due to its heavy-tailed, conditional structure. Several methods have been proposed in the diffusion context~\cite{pandey2024heavytaileddiffusionmodels, shariatian2025heavytaileddiffusiondenoisinglevy, HTDL2025}. We compare to~\cite{pandey2024heavytaileddiffusionmodels}, where Student-$t$ parameters were hand-selected per dimension. Using a capacity-constrained network (three layers, width 64, no positional embeddings), we observe that a compact latent performs worst, followed by the Gaussian, while our learned latent successfully \emph{adapts to the target's heavy tails} (see Figure~\ref{latent_funnel}).\footnote{Due to the high variance when sampling minibatch from the funnel, we pre-train the quantile and use the independent coupling for all models.}

\paragraph{Checkerboard Distribution.}
The checkerboard in Figure~\ref{plot_traj_checker} features compact support. Our method learns a latent approximating a uniform distribution over the target's support. Combined with OT coupling, the resulting \emph{transport paths are substantially shorter} than starting from a Gaussian as  visualized in Figure~\ref{fig:Checker}.  In addition, in training, the vector field converges faster, see Figure~\ref{fig:Checker after 20k}. This underscores our central claim: combining a data-dependent latent with a data-dependent coupling can significantly improve performance.
\subsection{Image Datasets}\label{subsec: Exp Image}
Next, we analyze our method on standard image generation benchmarks. Our quantile is extremely lightweight compared to the U-Net architecture used for the flow model. In Table~\ref{tab:noise_match}, the statistical fit between the noise and data distributions is compared for the Gaussian and learned quantile noise.

\begin{figure}[h]
  \centering
  \includegraphics[width=0.42\textwidth]{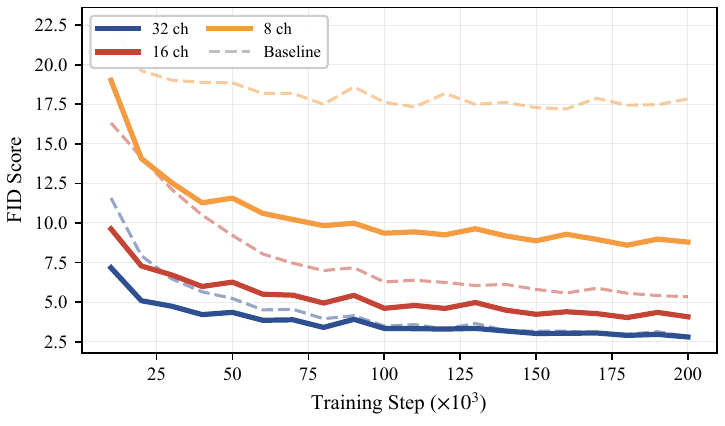}
  \caption{Ablation on U-Net capacity for MNIST using channels $8,16,32$. The FID curves show that our method achieves significantly lower FIDs when using less channels.}
  \label{fig:mnist_cap_const}
\end{figure}

\begin{figure}[t] 
  \centering
  \includegraphics[width=0.23\textwidth]{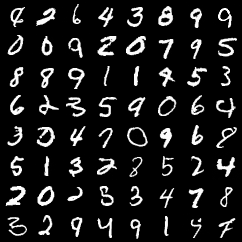} 
   \hspace{0.1cm}
  \includegraphics[width=0.23\textwidth]{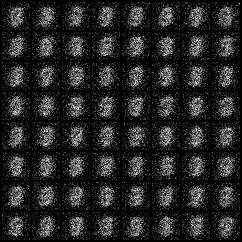}
  \hspace{0.1cm}
 \includegraphics[width=.23\textwidth]{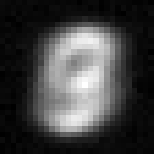} 
   \hspace{0.1cm}
 \includegraphics[width=.23\textwidth]{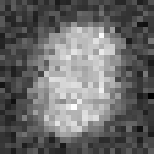} 
 \caption{Left to Right: Generated samples, samples from the learned latent and mean and standard deviation of the learned latent.}
  \label{fig:mnist_prelim}
\end{figure}

\begin{figure*}[t]
    \centering
    \begin{minipage}{0.25\textwidth}
        \centering
        \includegraphics[width=\linewidth]{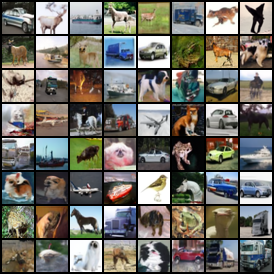} \\
        Baseline
    \end{minipage}
    \hspace{0.03\textwidth}
    \begin{minipage}{0.25\textwidth}
        \centering
        \includegraphics[width=\linewidth]{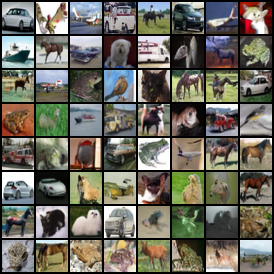} \\
        $\beta=0.8$
    \end{minipage}
    \hspace{0.03\textwidth}
    \begin{minipage}{0.15\textwidth}
        \begin{tabular}{lcc}
        \toprule
         Euler steps $\rightarrow$ & \textbf{20 } & \textbf{100 } \\
        \cmidrule(lr){2-3}
        $\boldsymbol{\beta}$-logdet $\downarrow$ & \textbf{FID} & \textbf{FID} \\
        \midrule
        0.2 & 7.81 & 4.75 \\
        0.3 & \textbf{7.48} & 4.53 \\
        0.5 & 7.66 & 4.49 \\
        0.8 & 7.77 & \textbf{4.42} \\
        1.0 & 8.35 & 4.66 \\
        \midrule
        \textbf{Baseline} & 8.42 & 4.63 \\
        \bottomrule
        \end{tabular}
    \end{minipage}      
    \caption{CIFAR results for a selection of regularization parameters and for the baseline, for complete results see Figure \ref{fid_complete}. Our method reached the best validation FID after 320k steps, while the baseline took 340k. We evaluated the FID using 5 seeds and report the mean. We used those checkpoints for the evaluation. The visualized samples were generated using 100 Euler steps.}
    \label{fig:CIFAR}
\end{figure*}

\paragraph{MNIST.}

{MNIST} exhibits strong marginal structure: center pixels are frequently active while border pixels are nearly always zero. Our learned quantile captures these statistics, concentrating mass in regions corresponding to active pixels as shown in Figure~\ref{fig:mnist_prelim}. Where a pixel is essentially always black, the learned quantile concentrates around that value; in center regions with higher uncertainty, the quantiles remain spread around zero (gray), accurately reflecting the data variability. We compare learned and empirical quantiles at different pixel locations in Figure~\ref{hist_mnist}.

This dataset is well-suited for our method since the marginal structure provides a strong learning signal. In Figure~\ref{fig:mnist_cap_const}, we compare performance under capacity constraints using channels $8, 16, 32$ ($\lambda=1$, $\beta=0.1$). Our learned latent achieves significantly lower FIDs across all capacity levels. See Figure \ref{ablation_mnist} in the Appendix for the total amount of parameters used for each model. By removing redundant information in the noise distribution, the network can allocate its parameters more efficiently toward modeling the relevant structure. The independence assumption prevents capturing spatial correlations (e.g., digit shapes).

\paragraph{CIFAR-10.}
We evaluate on {CIFAR-10} to demonstrate that our method scales to natural images and integrates seamlessly with standard architectures. We follow the setup of~\cite{tong2024improvinggeneralizingflowbasedgenerative}, using the U-Net architecture from~\cite{nichol2021improved} with $35M$ parameter, to enable a fair comparison. 

Figure~\ref{fig:CIFAR} reports results for varying $\beta$ at fixed $\lambda=1$. Our method successfully learns a data-adapted latent while maintaining competitive FID scores, confirming that the quantile training remains stable at scale. See also Figure~\ref{ablation:cifar_all} for details on the training stability. Improvements are marginal as expected, since CIFAR-10 features strong spatial and inter-channel correlations that a product measure cannot fully capture. Using a larger $55$M-parameter model, we improve the FID to $\mathbf{3.25}$ for the quantile prior, compared to $3.37$ for the Gaussian one.

\subsection{Weather Dataset}
\paragraph{HRRR-mini.} We include a large scale dataset with strongly heavy-tailed precipitation statistics, we are using the \textit{HRRR-mini} \cite{mardani2024residualcorrectivediffusionmodeling,nvidia_hrrr_mini} dataset, which consists of 100k samples, where each pixel corresponds to a spatial resolution of around $3\text{km}$. The resolution is $64\times 64$, and consists of temperature, east-west and north-south wind, and total precipitation per pixel. The main objective is to correctly model the tail behavior of the total precipitation. 

For this experiment, we use exactly the same setup and U-Net network with 35M parameters, and only interchange the noise distribution. The experiment configurations are in \ref{subsec:Experiment Setup: Weather Experiments}. The models are compared on their performance in matching the tail statistics of the true precipitation data. For this, the following metrics are used; for an extended description, see Section~\ref{subsec:Experiment Setup: Weather Experiments}. The \textit{extreme event frequency error} is the relative error for $\mathbb{P}(X \ge q_{0.999})$, where $q_{0.999}$ is the value of the quantile at the $99.9\%$ percentile. The \textit{extreme event magnitude error} is the relative error in the conditional expectation $\mathbb{E}[X \mid X \ge q_{0.999}]$. The \textit{spectral distance} is the $\ell_1$ distance between the radially averaged log power spectra. The last three metrics are also used in \cite{pandey2024heavytaileddiffusionmodels} and are defined as follows. The \textit{tail KS distance} is the mean KS statistic restricted to beyond the $0.1\%$ and $99.9\%$ percentiles. The \textit{kurtosis deviation} is given by $\lvert 1 - k_\text{gen}/k_\text{real} \rvert$, and the \textit{skewness deviation} is given by $\lvert 1 - \gamma_\text{gen}/\gamma_\text{real} \rvert$.
\begin{table}[h]
\centering
\scriptsize
\setlength{\tabcolsep}{4pt}
\renewcommand{\arraystretch}{1.12}
\begin{tabular}{p{0.40\columnwidth}ccc}
\toprule
Metric 
& \makecell{Gaussian\\baseline}
& \makecell{Student-$t$\\baseline}
& \makecell{Quantile\\(Ours)} \\
\midrule
Extreme event frequency error $\downarrow$
& $0.9689$
& $0.8859$
& $\mathbf{0.7550}$ \\

Extreme event magnitude error $\downarrow$
& $0.2455$
& $0.1482$
& $\mathbf{0.0634}$ \\

Spectral distance $\downarrow$
& $3.1836$
& $2.0719$
& $\mathbf{1.1063}$ \\

Tail KS distance $\downarrow$
& $0.2067$
& $0.1014$
& $\mathbf{0.0393}$ \\

Kurtosis deviation $\downarrow$
& $4.930$
& $2.890$
& $\mathbf{1.588}$ \\

Skewness deviation $\downarrow$
& $1.157$
& $0.830$
& $\mathbf{0.580}$ \\
\bottomrule
\end{tabular}
\vspace{2pt}
\caption{Tail-centric HRRR64-Mini precipitation evaluation. Lower is better for all metrics.}
\label{tab:hrrr_tail_metrics}
\vspace{-0.5em}
\end{table}

In Table~\ref{tab:hrrr_tail_metrics}, the different noise distributions are compared. As already discussed theoretically in other papers (see Section~\ref{sec:related work} on related work), using a noise distribution with light tails does not work well for heavy-tailed data. In our experiment, we obtain the same result: the Gaussian noise produces much worse results than the heavy-tailed Student's~$t$ and the learned noise distribution. Compared with the Student's~$t$, our learned quantile distribution learns a heavy-tailed distribution without the need to hand-tune it along every dimension, as is required for the Student's~$t$. The learned noise clearly outperforms the other two noise distributions.
%--------------------------------------------
\section{Related Work} \label{sec:related work}
%---------------

\paragraph{Generative Models.}
Current state-of-the-art methods largely rely on corrupting images with Gaussian noise. Diffusion models~\cite{song2021scorebasedgenerativemodelingstochastic, ho2020denoisingdiffusionprobabilisticmodels, song2022denoisingdiffusionimplicitmodels} and the closely related flow matching models~\cite{liu2023flow, lipman2023flow, albergo2023stochastic} form the basis of most approaches. To further improve inference speed, few-step methods have achieved remarkable results~\cite{consistency_song, geng2025meanflowsonestepgenerative, IMM2025}.

\paragraph{Learning the Flow.}
The following papers keep the Gaussian distribution as a starting point, but updated the interpolation path.
There exist only few approaches to learn the noising process, \cite{bartosh2025neuralflowdiffusionmodels} fit the forward diffusion process to the backward via a learned invertible map that is trained end-to-end, \cite{kapusniak2024metric} use metric flow matching, i.e., a neural network to adapt the path to a underlying Riemannian metric. In a related approach \cite{sahoo2024mulan} learns a input-conditioned componentwise Gaussian noise schedule. In the setting of sampling from unnormalized target densities, \cite{Blessing2025UDB} learn the latent noise by optimizing the mean and covariance of a Gaussian prior. A Gaussian mixture prior has been used both for sampling \cite{blessing2025endtoendlearninggaussianmixture} and for generative modeling \cite{issachar2025designingconditionalpriordistribution}. \newline
Along the lines of normalizing flows and VAEs, there is a large body of work on learning more flexible noise distributions and priors~\cite{stimper2022resamplingbasedistributionsnormalizing, bauer2019resampledpriorsvariationalautoencoders, hickling2025flexible, amiri2024practical}.

\paragraph{Limitations of Gaussian Noise.}
From a complementary perspective, \cite{wiese2019copula} propose to 
separate marginal modeling from dependence structure using copula and marginal flows, recognizing that standard architectures struggle with tail asymptotics, a motivation conceptually aligned with our componentwise quantile approach.
On the other hand \cite{pandey2024heavytaileddiffusionmodels,zhang2024longtailed} design
heavy-tailed diffusions using Student-$t$ latent distributions, and \cite{HTDL2025} extend the framework to the family of $\alpha$-stable distributions. \newline
Recent work by \cite{ghane2025concentration} explains why heavy-tailed sampling can be challenging: diffusion models driven by Gaussian noise satisfy a concentration-of-measure property. In particular, when initialized from a Gaussian distribution, the generated distribution inherits many Gaussian-like properties. Moreover, by \cite{tam2025statistical}, 
GANs, VAEs and diffusion models with Gaussian or log-concave noise distributions can only generate light-tailed 
samples and are not universal generators. See Figure \ref{plots_learned_funnel} for a heavy-tailed example, where Gaussian noise fails.

%--------------------------------------------
\section{Conclusions} \label{sec:conc}
%--------------------------------------------
{ We provide} a "quantile sandbox" for building generative models: a unifying theory and a practical toolkit that turns noise selection into a data-driven design element.
Our construction plugs seamlessly into standard objectives including flow matching and few-step models. Furthermore, our experiments demonstrate that it is possible to learn extremely efficient, freely parametrized latent distributions beyond the usual smooth transformations of Gaussians. Our work opens promising directions for future research. Extensions include developing well suited objectives to learn time-dependent quantile functions to optimize the entire path distribution, or designing more sophisticated conditional quantile functions for tasks like class-conditional or text-to-image generation. 

\section*{Acknowledgments} GS and RD acknowledge funding by the German Research Foundation (DFG)
within the Excellence Cluster MATH+ and JC by project STE 571/17-2 within the The Mathematics
of Deep Learning. GK acknowledges funding by the BMBF VIScreenPRO (ID: 100715327).

\section*{Impact Statement}
This paper presents work whose goal is to advance the field of machine learning. There are many potential societal consequences of our work, none of which we feel must be specifically highlighted here.

\section*{Limitations}
Our learned latent is a product distribution and therefore does not model
correlated noise across dimensions. While our experiments include CIFAR-10 and
HRRR64, we have not systematically tested scaling to very high-dimensional
settings.

\bibliography{references}

@article{CKDS2026,
title={Adapting Noise to Data by Quantile Learning},
author = {J Chemseddine and G Kornhardt and R Duong and G Steidl},
journal = {ICLR 2026 2nd Workshop on Deep Generative Model in Machine Learning: Theory, Principle and Efficacy},
year = {2026}
}

@misc{LHHS2024,
      title={Flow Matching Guide and Code}, 
      author={Yaron Lipman and Marton Havasi and Peter Holderrieth and Neta Shaul and Matt Le and Brian Karrer and Ricky T. Q. Chen and David Lopez-Paz and Heli Ben-Hamu and Itai Gat},
      year={2024},
      eprint={2412.06264},
      archivePrefix={arXiv},
      primaryClass={cs.LG},
      url={https://arxiv.org/abs/2412.06264}, 
}

@article{DCFS2025,
  title={Telegrapher's generative model via {K}ac flows},
  author={R. Duong and J. Chemseddine and P. Friz and G. Steidl},
  journal={arXiv preprint arXiv::2506.20641},
  year={2025}
}

@misc{han2025distillkacfewstepimagegeneration,
      title={DistillKac: Few-Step Image Generation via Damped Wave Equations}, 
      author={Weiqiao Han and Chenlin Meng and Christopher D. Manning and Stefano Ermon},
      year={2025},
      eprint={2509.21513},
      archivePrefix={arXiv},
      primaryClass={cs.LG},
      url={https://arxiv.org/abs/2509.21513}, 
}

@misc{bartosh2025neuralflowdiffusionmodels,
      title={Neural Flow Diffusion Models: Learnable Forward Process for Improved Diffusion Modelling}, 
      author={G. Bartosh and D. Vetrov and C. A. Naesseth},
      year={2025},
      eprint={2404.12940},
      archivePrefix={arXiv},
      primaryClass={stat.ML},
      url={https://arxiv.org/abs/2404.12940}, 
}

@misc{mardani2024residualcorrectivediffusionmodeling,
      title={Residual Corrective Diffusion Modeling for Km-scale Atmospheric Downscaling}, 
      author={Morteza Mardani and Noah Brenowitz and Yair Cohen and Jaideep Pathak and Chieh-Yu Chen and Cheng-Chin Liu and Arash Vahdat and Mohammad Amin Nabian and Tao Ge and Akshay Subramaniam and Karthik Kashinath and Jan Kautz and Mike Pritchard},
      year={2024},
      eprint={2309.15214},
      archivePrefix={arXiv},
      primaryClass={cs.LG},
      url={https://arxiv.org/abs/2309.15214}, 
}

@misc{nvidia_hrrr_mini,
  author       = {{NVIDIA Corporation}},
  title        = {{Modulus Datasets: HRRR-Mini}},
  howpublished = {{NVIDIA NGC Catalog}},
  note         = {Version 1. Accessed: 2026-05-26},
  url          = {https://catalog.ngc.nvidia.com/orgs/nvidia/teams/modulus/resources/modulus_datasets-hrrr_mini?version=1}
}

@misc{pooladian2023multisampleflowmatchingstraightening,
      title={Multisample Flow Matching: Straightening Flows with batch Couplings}, 
      author={Aram-Alexandre Pooladian and Heli Ben-Hamu and Carles Domingo-Enrich and Brandon Amos and Yaron Lipman and Ricky T. Q. Chen},
      year={2023},
      eprint={2304.14772},
      archivePrefix={arXiv},
      primaryClass={cs.LG},
      url={https://arxiv.org/abs/2304.14772}, 
}

@misc{tong2024improvinggeneralizingflowbasedgenerative,
      title={Improving and generalizing flow-based generative models with minibatch optimal transport}, 
      author={Alexander Tong and Kilian Fatras and Nikolay Malkin and Guillaume Huguet and Yanlei Zhang and Jarrid Rector-Brooks and Guy Wolf and Yoshua Bengio},
      year={2024},
      eprint={2302.00482},
      archivePrefix={arXiv},
      primaryClass={cs.LG},
      url={https://arxiv.org/abs/2302.00482}, 
}

@misc{shariatian2025heavytaileddiffusiondenoisinglevy,
      title={Heavy-Tailed Diffusion with Denoising {L}\'evy Probabilistic Models}, 
      author={D. Shariatian and U. Simsekli and A. Durmus},
      year={2025},
      eprint={2407.18609},
      archivePrefix={arXiv},
      primaryClass={cs.LG},
      url={https://arxiv.org/abs/2407.18609}, 
}

@inproceedings{
zhang2024longtailed,
title={Long-tailed Diffusion Models with Oriented Calibration},
author={T. Zhang and H. Zheng and J. Yao and X. Wang and M. Zhou and Y. Zhang and Y. Wang},
booktitle={The Twelfth International Conference on Learning Representations},
year={2024},
url={https://openreview.net/forum?id=NW2s5XXwXU}
}

@article{kapusniak2024metric,
  title={Metric Flow Matching for Smooth Interpolations on the Data Manifold},
  author={Kapusniak, K. and Potaptchik, P. and Reu, T. and Zhang, L. and Tong, A. and Bronstein, M. and Bose, A. J. and Di Giovanni, F.},
  journal={arXiv preprint arXiv:2405.14780},
  year={2024}
}

@InProceedings{pmlr-v37-sohl-dickstein15,
  title = 	 {Deep Unsupervised Learning using Nonequilibrium Thermodynamics},
  author = 	 {Sohl-Dickstein, J. and Weiss, E. and Maheswaranathan, N. and Ganguli, S.},
  booktitle = 	 {Proceedings of the 32nd International Conference on Machine Learning},
  pages = 	 {2256--2265},
  year = 	 {2015},
  editor = 	 {Bach, Francis and Blei, David},
  volume = 	 {37},
  series = 	 {Proceedings of Machine Learning Research},
  address = 	 {Lille, France},
  month = 	 {07--09 Jul},
  publisher =    {PMLR},
  pdf = 	 {http://proceedings.mlr.press/v37/sohl-dickstein15.pdf},
  url = 	 {https://proceedings.mlr.press/v37/sohl-dickstein15.html},  
}

@misc{torchdiffeq,
	author={Chen, R. T. Q.},
	title={torchdiffeq},
	year={2018},
	url={https://github.com/rtqichen/torchdiffeq},
}

@article{liu2022rectified,
  title={Rectified flow: A marginal preserving approach to optimal transport},
  author={Liu, Q.},
  journal={arXiv preprint arXiv:2209.14577},
  year={2022}
}

@article{SE2019,
author={Song, Y. and Ermon, St.},
  title={Generative modeling by estimating gradients of the data distribution},
  journal={ArXiv 1907.05600},
  year={2019}
}

@misc{boffi2025buildconsistencymodellearning,
      title={How to build a consistency model: Learning flow maps via self-distillation}, 
      author={Nicholas M. Boffi and Michael S. Albergo and Eric Vanden-Eijnden},
      year={2025},
      eprint={2505.18825},
      archivePrefix={arXiv},
      primaryClass={cs.LG},
      url={https://arxiv.org/abs/2505.18825}, 
}

@misc{pandey2024heavytaileddiffusionmodels,
      title={Heavy-Tailed Diffusion Models}, 
      author={K. Pandey and J. Pathak and Y. Xu and S. Mandt and M. Pritchard and A. Vahdat and M. Mardani},
      year={2024},
      eprint={2410.14171},
      archivePrefix={arXiv},
      primaryClass={cs.LG},
      url={https://arxiv.org/abs/2410.14171}, 
}

@ARTICLE{J1990,
journal={Journal of Theoretical Probability}, 
Volume = {3},
number = {2},
title={The distance between the {Kac} process and
the {Wiener} process with applications to generalized
telegraph equations},
pages = {349--360},
author={A. Janssen},
 year={1990}
}

@article{DSBHS2024,
      title={{W}asserstein Gradient Flows of {MMD} Functionals with Distance Kernel and {C}auchy Problems on Quantile Functions}, 
      author={R. Duong and V. Stein and R. Beinert and J. Hertrich and G. Steidl},
      year={2024},
      eprint={2408.07498},
      journal={ArXiv:2408.07498},
      primaryClass={cs.LG}
}

@article{lipman2023flow,
title={Flow matching for generative modeling},
author={Y. Lipman and R. Chen and H. Ben-Hamu and M. Nickel and M. Le},
journal={ICLR},
year={2023}
}

@article{liu2023flow,
title={Flow Straight and Fast: Learning to Generate and Transfer Data with Rectified Flow},
author={X. Liu and Ch. Gong and Q. Liu},
journal={ICLR},
year={2023},
}

@incollection{WS2025,
  title={{Flow Matching: Markov kernels, stochastic processes and transport plans}},
  author={Wald, C. and Steidl, G.},
  booktitle={Variational and Information Flows in Machine Learning and Optimal Transport, Oberwolfach Seminars. Vol. 56},
editors = {Wuchen Li and
Bernhard Schmitzer and
Gabriele Steidl and
Francois-Xavier Vialard and
Christian Wald},
publisher = {Birkh\"auser},
  pages={185--254},
  year={2025}
}

@article{C1958,
author={C. Cattaneo}, 
title={Sur une forme de l'équation de la chaleur éliminant le paradoxe d'une propagation instantanée}, 
journal = {Comptes Rendus.}, 
volume = {247},
year = {1958}
}

@article{V1958,
author={P. Vernotte}, 
title={Les paradoxes de la theorie continue de l'équation de la chaleur}, 
journal = {Comptes Rendus.}, 
volume = {246},
year = {1958}
}

@article{C1963,
author={M. Chester}, 
title={Second sound in solids}, 
journal = {Physical Review}, 
volume = {131},
year = {1963}
}

@article{song2021scorebasedgenerativemodelingstochastic,
      title={Score-based generative Modeling through stochastic differential equations}, 
      author={Y. Song and J. Sohl-Dickstein and D. P. Kingma and A. Kumar and S. Ermon and B. Poole},
      year={2021},
      journal={ICLR},     
}

@article{soft2021,
      title={{Soft truncation: A universal training technique of score-based diffusion model for high precision score estimation}}, 
      author={D. Kim and S. Shin and K. Song and W. Kang and I.-C. Moon},
      year={2022},
      
      journal={ICML},
      primaryClass={stat.ML}
}

@article{KAC1974,
  title={A STOCHASTIC MODEL RELATED TO THE TELEGRAPHER'S EQUATION},
  author={Kac, M.},
  journal={Rocky Mountain Journal of Mathematics},
  Volume = {4},
  year={1974}
}

@article{P2022,
      title={Score-Based Generative Models Detect Manifolds}, 
      author={J. Pidstrigach},
      year={2022},
      journal={NeurIPS},     
}

@article{GH1971,
  title={Theory of Random Evolutions with Applications to Partial Differential Equations},
  author = {R. Griego and R. Hersh},
journal = {Transactions of the American Mathematical Society},
volume = {156},
year = {1971}
}

@article{TL2016,
  title={Application of the three-dimensional telegraph equation to cosmic-ray transport},
  author = {R. C. Tautz and I. Lerche},
journal = {Research in Astronomy and Astrophysics},
year = {2016}
}

@article{SBDD2022,
title =	{Can push-forward generative models fit multimodal distributions?},
author =	 {A. Salmona and V. De Bortoli and J. Delon and A. Desolneux},
journal =	 {Advances in Neural Information Processing Systems},
volume = {35}, 
pages = {0766--10779},
year = {2022}
}

@article{HN2021,
  author =	 {P. L. Hagemann and S. Neumayer},
  title =	 {Stabilizing invertible neural networks using mixture
models}, 
  journal =	 {Inverse Problems},
volume = {37},
number = {7},
pages = {085002},
  year =	 {2021}
}

@article{HGBS2022,
  title={Wasserstein steepest descent flows of discrepancies with {R}iesz kernels},
  author={Hertrich, J. and Gr{\"a}f, M. and Beinert, R. and Steidl, G.},
  journal={Journal of Mathematical Analysis and Applications},
  volume={531},
  number={1},
  pages={127829},
  year={2024},
  publisher={Elsevier}
}

@book{BookAmGiSa05,
    author = {L. Ambrosio and N. Gigli and G. Savaré},
    title = {Gradient Flows},
    subtitle = {in Metric Spaces and in the Space of Probability Measures},
    publisher = {Birkh\"auser, Basel},
    year = {2008},
    series = {Lectures in Mathematics ETH Zürich},
    edition = {2nd},
    doi = {10.1007/978-3-7643-8722-8}
   }

@article{durkan2019neural,
  title={Neural spline flows},
  author={Durkan, C. and Bekasov, A. and Murray, I. and
Papamakarios, G.},
  journal={Advances in Neural Information Processing Systems},
  volume={32},
  year={2019}
}

@article{albergo2023stochastic,
  title={Stochastic interpolants: A unifying framework for flows and diffusions},
  author={Albergo, M. S. and Boffi, N. M. and Vanden-Eijnden, E.},
  journal={arXiv preprint arXiv:2303.08797},
  year={2023}
}

@article{DDIM2020,
  title={Denoising diffusion implicit models},
  author={Song, J. and Meng, C. and Ermon, S.},
  journal={arXiv preprint arXiv:2010.02502},
  year={2020}
}

@article{IMM2025,
  title={Inductive Moment Matching},
  author={Zhou, L. and Ermon, S. and Song, J.},
  journal={ICML 2025},
  year={2025}
}

@article{HTDL2025,
  title={Heavy-Tailed Diffusion with Denoising Levy Probabilistic Models},
  author={Shariatian, D. and Simsekli, U. and Durmus, A. O.},
  journal={ICLR 2025},
  year={2025}
}

@article{neal2003slice,
  title   = {Slice Sampling},
  author  = {Neal, R. M.},
  journal = {The Annals of Statistics},
  year    = {2003},
  volume  = {31},
  number  = {3},
  pages   = {705--767},
  publisher = {Institute of Mathematical Statistics},
  doi     = {10.1214/aos/1056562461}
}

@inproceedings{HRUNH2017,
 author = {Heusel, M. and Ramsauer, H. and Unterthiner, T. and Nessler, B. and Hochreiter, S.},
 booktitle = {Advances in Neural Information Processing Systems},
 editor = {I. Guyon and U. Von Luxburg and S. Bengio and H. Wallach and R. Fergus and S. Vishwanathan and R. Garnett},
 pages = {},
 publisher = {Curran Associates, Inc.},
 title = {{GANs} Trained by a Two Time-Scale Update Rule Converge to a Local {N}ash Equilibrium},
 volume = {30},
 year = {2017}
}

@article{flamary2021pot,
  title={Pot: Python {Optimal Transport}},
  author={Flamary, R. and Courty, N. and Gramfort, A. and Alaya, M. Z. and Boisbunon, A. and Chambon, S. and Chapel, L. and Corenflos, A. and Fatras, K. and Fournier, N. and others},
  journal={Journal of Machine Learning Research (JMLR)},
  volume={22},
  number={1},
  pages={3571--3578},
  year={2021},
  publisher={JMLRORG}
}

@article{GD1982,
    author = {J. Gregory and R. Delbourgo},
    title = {Piecewise Rational Quadratic Interpolation to Monotonic Data},
    journal = {IMA Journal of Numerical Analysis},
    volume = {2},
    number = {2},
    pages = {123-130},
    year = {1982},
  
}

@misc{ho2020denoisingdiffusionprobabilisticmodels,
      title={Denoising Diffusion Probabilistic Models}, 
      author={Jonathan Ho and Ajay Jain and Pieter Abbeel},
      year={2020},
      eprint={2006.11239},
      archivePrefix={arXiv},
      primaryClass={cs.LG},
      url={https://arxiv.org/abs/2006.11239}, 
}

@misc{song2022denoisingdiffusionimplicitmodels,
      title={Denoising Diffusion Implicit Models}, 
      author={Jiaming Song and Chenlin Meng and Stefano Ermon},
      year={2022},
      eprint={2010.02502},
      archivePrefix={arXiv},
      primaryClass={cs.LG},
      url={https://arxiv.org/abs/2010.02502}, 
}

@misc{stimper2022resamplingbasedistributionsnormalizing,
      title={Resampling Base Distributions of Normalizing Flows}, 
      author={Vincent Stimper and Bernhard Schölkopf and José Miguel Hernández-Lobato},
      year={2022},
      eprint={2110.15828},
      archivePrefix={arXiv},
      primaryClass={stat.ML},
      url={https://arxiv.org/abs/2110.15828}, 
}

@article{amiri2024practical,
  title={Practical synthesis of mixed-tailed data with normalizing flows},
  author={Amiri, Saba and Nalisnick, Eric and Belloum, Adam and Klous, Sander and Gommans, Leon},
  journal={Transactions on Machine Learning Research},
  year={2024}
}

@misc{austin2023structureddenoisingdiffusionmodels,
      title={Structured Denoising Diffusion Models in Discrete State-Spaces}, 
      author={Jacob Austin and Daniel D. Johnson and Jonathan Ho and Daniel Tarlow and Rianne van den Berg},
      year={2023},
      eprint={2107.03006},
      archivePrefix={arXiv},
      primaryClass={cs.LG},
      url={https://arxiv.org/abs/2107.03006}, 
}

@misc{bauer2019resampledpriorsvariationalautoencoders,
      title={Resampled Priors for Variational Autoencoders}, 
      author={Matthias Bauer and Andriy Mnih},
      year={2019},
      eprint={1810.11428},
      archivePrefix={arXiv},
      primaryClass={stat.ML},
      url={https://arxiv.org/abs/1810.11428}, 
}

@inproceedings{
hickling2025flexible,
title={Flexible Tails for Normalizing Flows},
author={Tennessee Hickling and Dennis Prangle},
booktitle={Forty-second International Conference on Machine Learning},
year={2025},
url={https://openreview.net/forum?id=Z6RsbHAJk5}
}

@misc{geng2025meanflowsonestepgenerative,
      title={Mean Flows for One-step Generative Modeling}, 
      author={Zhengyang Geng and Mingyang Deng and Xingjian Bai and J. Zico Kolter and Kaiming He},
      year={2025},
      eprint={2505.13447},
      archivePrefix={arXiv},
      primaryClass={cs.LG},
      url={https://arxiv.org/abs/2505.13447}, 
}

@misc{hoogeboom2022equivariantdiffusionmoleculegeneration,
      title={Equivariant Diffusion for Molecule Generation in 3D}, 
      author={Emiel Hoogeboom and Victor Garcia Satorras and Clément Vignac and Max Welling},
      year={2022},
      eprint={2203.17003},
      archivePrefix={arXiv},
      primaryClass={cs.LG},
      url={https://arxiv.org/abs/2203.17003}, 
}

@misc{issachar2025designingconditionalpriordistribution,
      title={Designing a Conditional Prior Distribution for Flow-Based Generative Models}, 
      author={Noam Issachar and Mohammad Salama and Raanan Fattal and Sagie Benaim},
      year={2025},
      eprint={2502.09611},
      archivePrefix={arXiv},
      primaryClass={cs.LG},
      url={https://arxiv.org/abs/2502.09611}, 
}

@inproceedings{sahoo2024mulan,
  title     = {Diffusion Models with Learned Adaptive Noise},
  author    = {Sahoo, Subham Sekhar and Gokaslan, Aaron and De Sa, Christopher and Kuleshov, Volodymyr},
  booktitle = {Advances in Neural Information Processing Systems (NeurIPS)},
  year      = {2024},
  eprint    = {2312.13236},
  archivePrefix = {arXiv}
}

@inproceedings{Blessing2025UDB,
  title     = {Underdamped Diffusion Bridges with Applications to Sampling},
  author    = {Denis Blessing and Julius Berner and Lorenz Richter and Gerhard Neumann},
  booktitle = {International Conference on Learning Representations (ICLR)},
  year      = {2025},
  url       = {https://arxiv.org/abs/2503.01006}
}

@misc{blessing2025endtoendlearninggaussianmixture,
      title={End-To-End Learning of Gaussian Mixture Priors for Diffusion Sampler}, 
      author={Denis Blessing and Xiaogang Jia and Gerhard Neumann},
      year={2025},
      eprint={2503.00524},
      archivePrefix={arXiv},
      primaryClass={cs.LG},
      url={https://arxiv.org/abs/2503.00524}, 
}

@InProceedings{nichol2021improved,
  title = 	 {Improved Denoising Diffusion Probabilistic Models},
  author =       {Nichol, Alexander Quinn and Dhariwal, Prafulla},
  booktitle = 	 {Proceedings of the 38th International Conference on Machine Learning},
  pages = 	 {8162--8171},
  year = 	 {2021},
  editor = 	 {Meila, Marina and Zhang, Tong},
  volume = 	 {139},
  series = 	 {Proceedings of Machine Learning Research},
  month = 	 {18--24 Jul},
  publisher =    {PMLR},
  pdf = 	 {http://proceedings.mlr.press/v139/nichol21a/nichol21a.pdf},
  url = 	 {https://proceedings.mlr.press/v139/nichol21a.html},
  abstract = 	 {Denoising diffusion probabilistic models (DDPM) are a class of generative models which have recently been shown to produce excellent samples. We show that with a few simple modifications, DDPMs can also achieve competitive log-likelihoods while maintaining high sample quality. Additionally, we find that learning variances of the reverse diffusion process allows sampling with an order of magnitude fewer forward passes with a negligible difference in sample quality, which is important for the practical deployment of these models. We additionally use precision and recall to compare how well DDPMs and GANs cover the target distribution. Finally, we show that the sample quality and likelihood of these models scale smoothly with model capacity and training compute, making them easily scalable. We release our code and pre-trained models at https://github.com/openai/improved-diffusion.}
}

@misc{ghane2025concentration,
      title={Gaussian Universality for Diffusion Models}, 
      author={Reza Ghane and Anthony Bao and Danil Akhtiamov and Babak Hassibi},
      year={2025},
      eprint={2501.07741},
      archivePrefix={arXiv},
      primaryClass={stat.ML},
      url={https://arxiv.org/abs/2501.07741}, 
}

@article{tam2025statistical,
  title={On the Statistical Capacity of Deep Generative Models},
  author={Tam, Edric and Dunson, David B.},
  journal={arXiv preprint arXiv:2501.07763},
  year={2025}
}

@article{wiese2019copula,
  title={Copula \& Marginal Flows: Disentangling the Marginal from its Joint},
  author={Wiese, Magnus and Knobloch, Robert and Korn, Ralf},
  journal={arXiv preprint arXiv:1907.03361},
  year={2019}
}

@inproceedings{KB2015,
  author    = {D. P. Kingma and
               J. Ba},
  title     = {Adam: a method for stochastic optimization},
  booktitle = {Proceedings of the ICLR '15},
  year      = {2015}
}

@InProceedings{consistency_song,
  title = 	 {Consistency Models},
  author =       {Song, Yang and Dhariwal, Prafulla and Chen, Mark and Sutskever, Ilya},
  booktitle = 	 {Proceedings of the 40th International Conference on Machine Learning},
  volume = 	 {202},
  year = 	 {2023},
}

@misc{boffi_consistency,
      title={How to build a consistency model: Learning flow maps via self-distillation}, 
      author={Nicholas M. Boffi and Michael S. Albergo and Eric Vanden-Eijnden},
      year={2025},
      eprint={2505.18825},
      archivePrefix={arXiv},
      primaryClass={cs.LG},
      url={https://arxiv.org/abs/2505.18825}, 
}
\bibliographystyle{icml2026}

%%%%%%%%%%%%%%%%%%%%%%%%%%%%%%%%%%%%%%%%%%%%%%%%%%%%%%%%%%%%%%%%%%%%%%%%%%%%%%%
%%%%%%%%%%%%%%%%%%%%%%%%%%%%%%%%%%%%%%%%%%%%%%%%%%%%%%%%%%%%%%%%%%%%%%%%%%%%%%%
% APPENDIX
%%%%%%%%%%%%%%%%%%%%%%%%%%%%%%%%%%%%%%%%%%%%%%%%%%%%%%%%%%%%%%%%%%%%%%%%%%%%%%%
%%%%%%%%%%%%%%%%%%%%%%%%%%%%%%%%%%%%%%%%%%%%%%%%%%%%%%%%%%%%%%%%%%%%%%%%%%%%%%%
\newpage
\appendix
\onecolumn

%------------------------------------------

%----------------------------------------------------------
\section{Examples of One-Dimensional Flows}\label{sec:ingredients1D}
%----------------------------------------------------------
We provide three interesting examples, namely the well-established diffusion flow,
the recently proposed Kac flow, and the Wasserstein gradient flow of the MMD functional with negative absolute distance kernel towards a uniform measure. Paths of the processes are depicted in Figure \ref{plots_1d_processes} { and their probability flows are shown in Figures \ref{plot_traj_GMM_1}, \ref{plot_traj_GMM_2} and \ref{plot_traj_GMM_3}}.

In each case, the absolutely continuous curve $\mu_t$ starting in $\delta_0$ (e.g. conditional) and the corresponding velocity field can be given analytically. 
Note that in the latter two cases, multi-dimensional generalizations of the flows are not trivially given, which further underlines the strength of our 1D approach.
Henceforth, if the measures $\mu_t$ admit a density function, we will denote it by $p_t$.

\begin{figure}[ht!] 
    \centering
    \includegraphics[width = 0.3\textwidth]{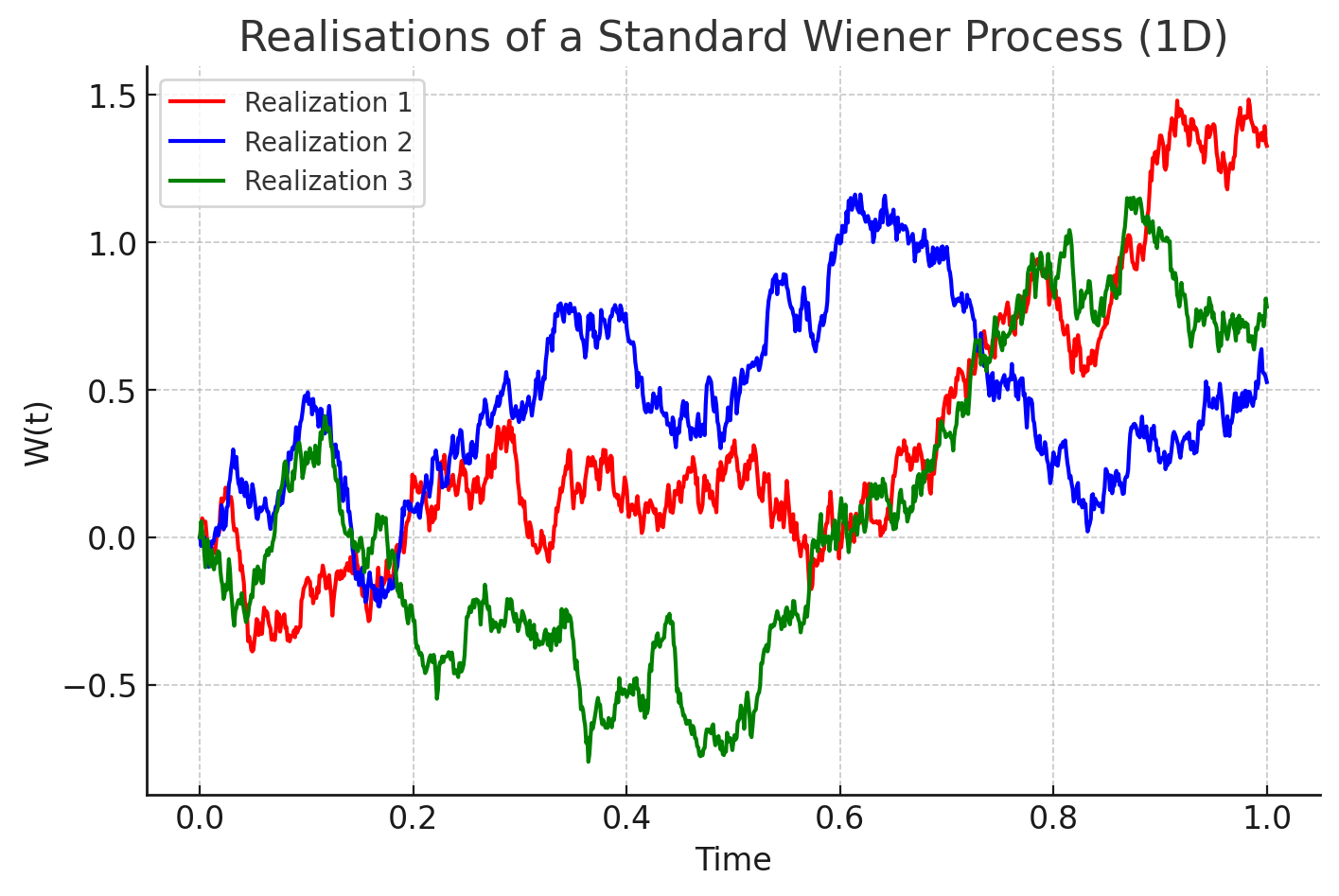}
    \includegraphics[width = 0.3\textwidth]{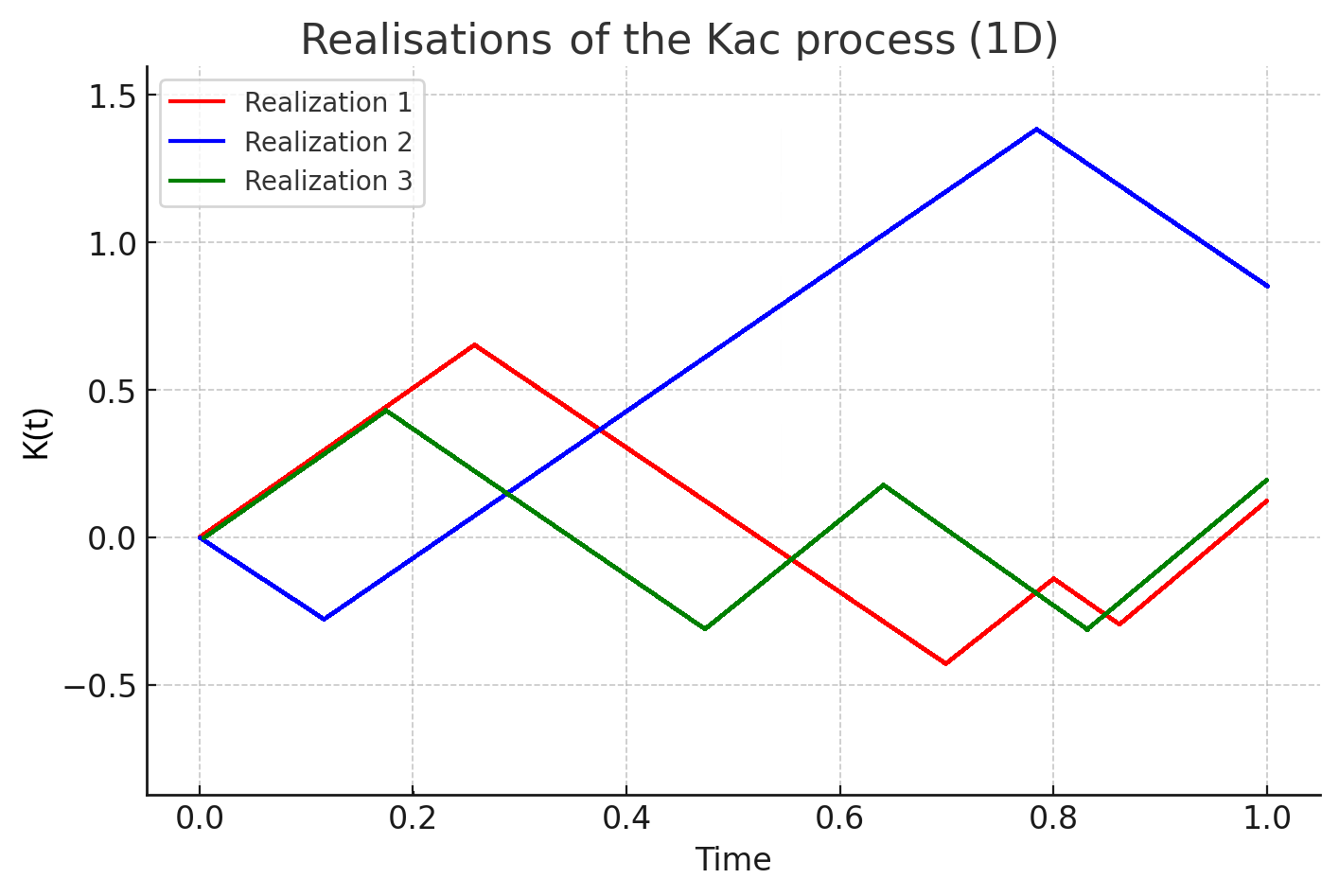}
    \includegraphics[width = 0.3\textwidth]{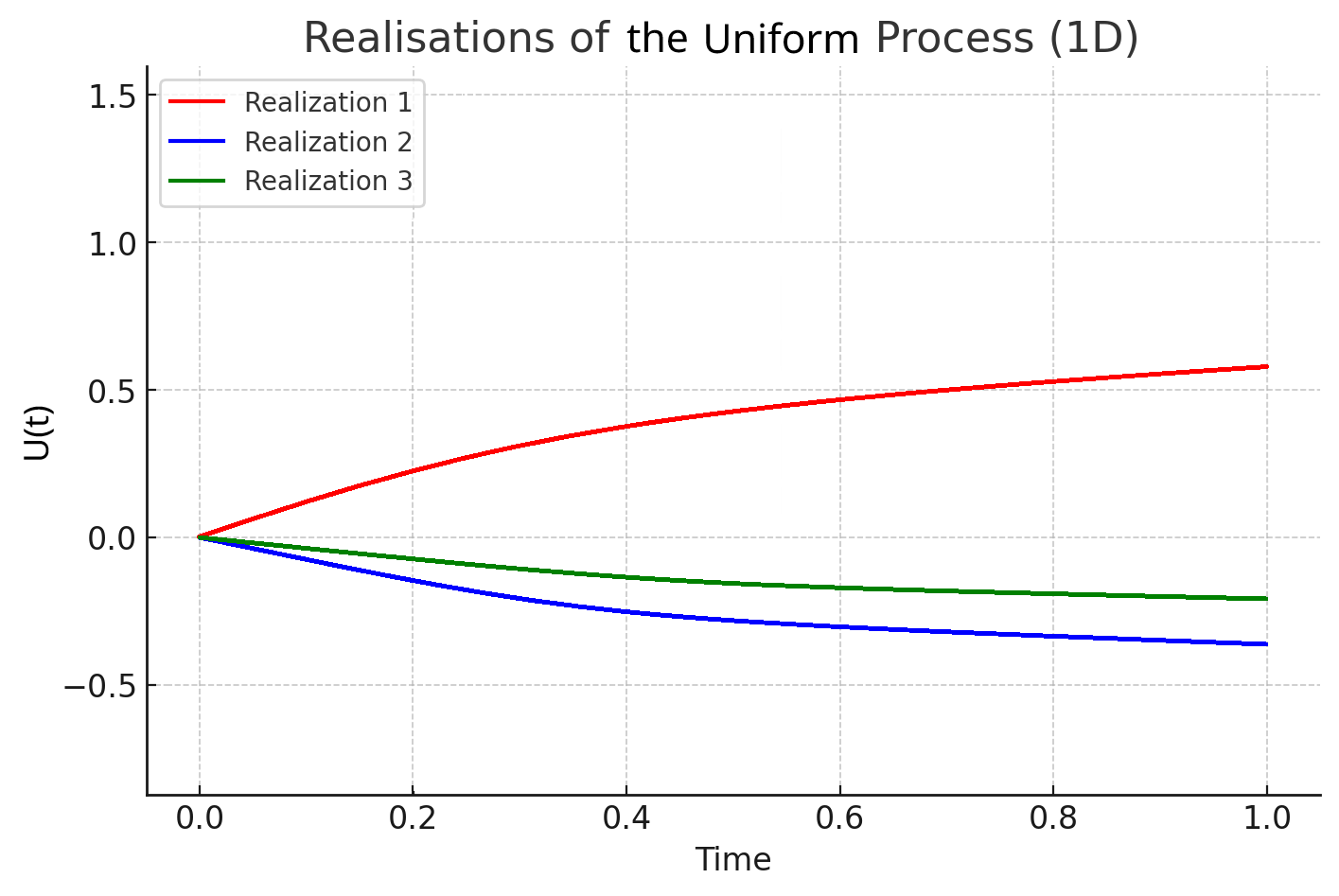}    
    \caption{Three realisations of a standard Wiener process (left), the Kac process (middle), and the Uniform process (right), simulated until time $T=1$.}
    \label{plots_1d_processes}
\end{figure}

\subsection{Wiener Process and Diffusion Equation}
\begin{figure}[ht!]
    \centering
    \includegraphics[width = 0.15\textwidth]{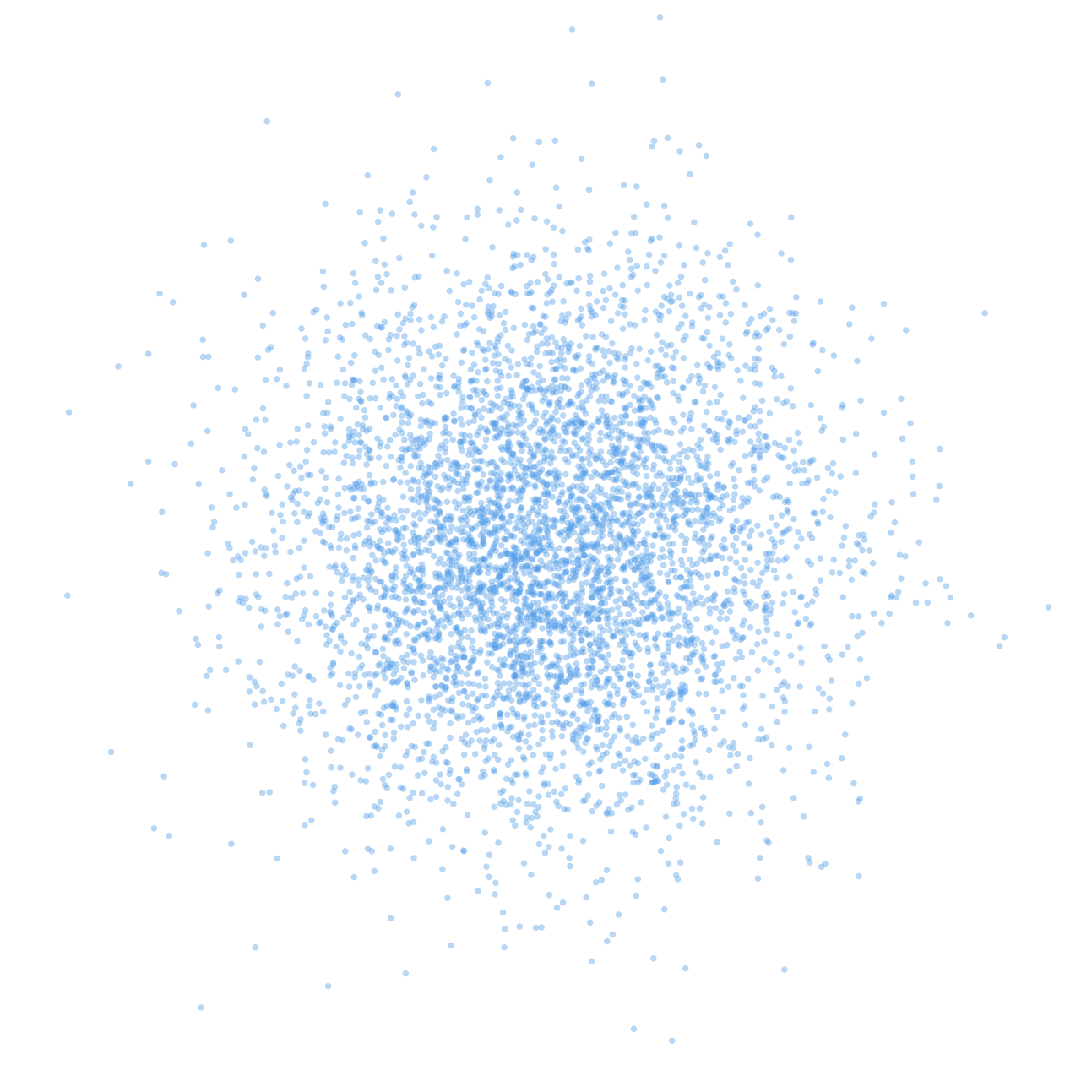}
    \includegraphics[width = 0.15\textwidth]{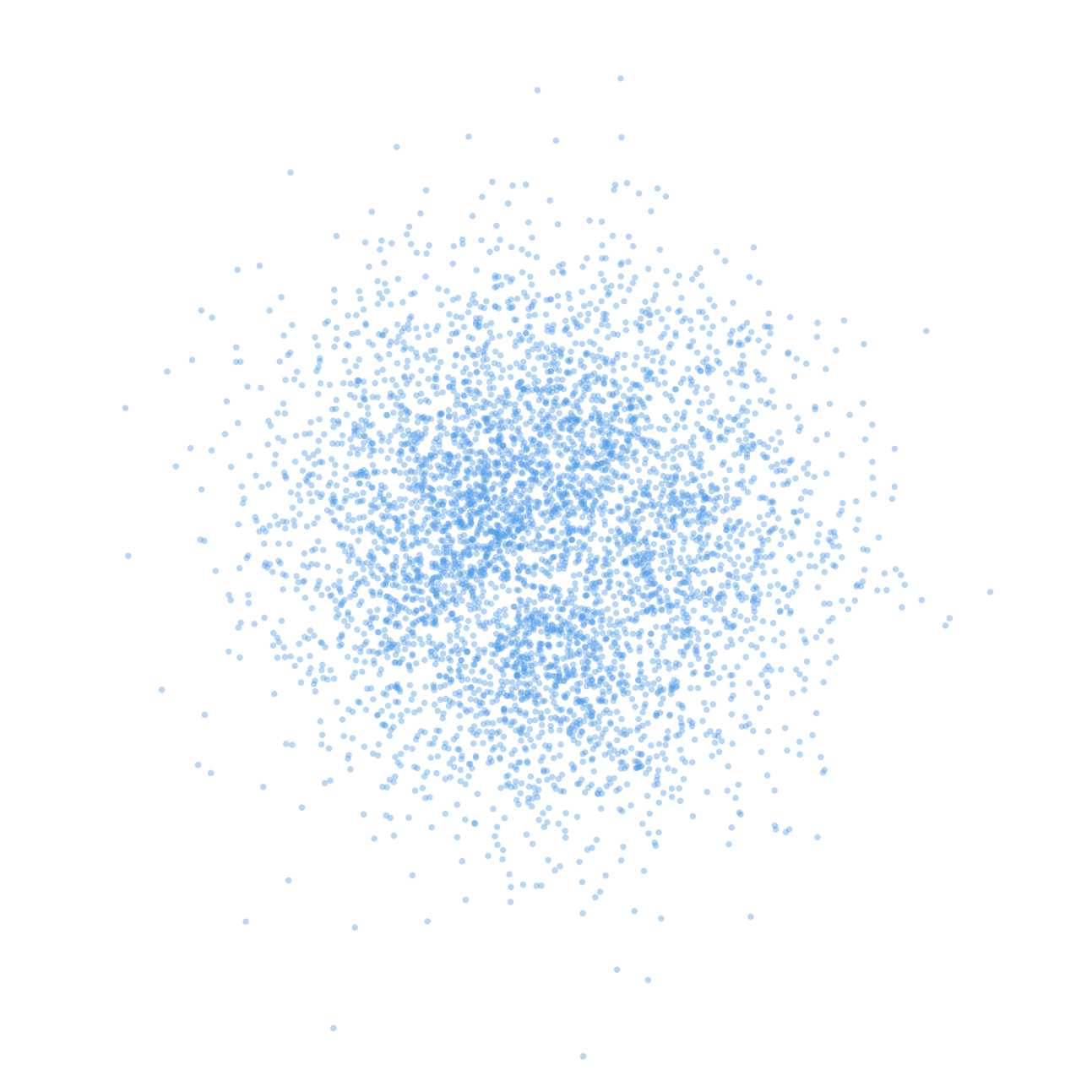}
    \includegraphics[width = 0.15\textwidth]{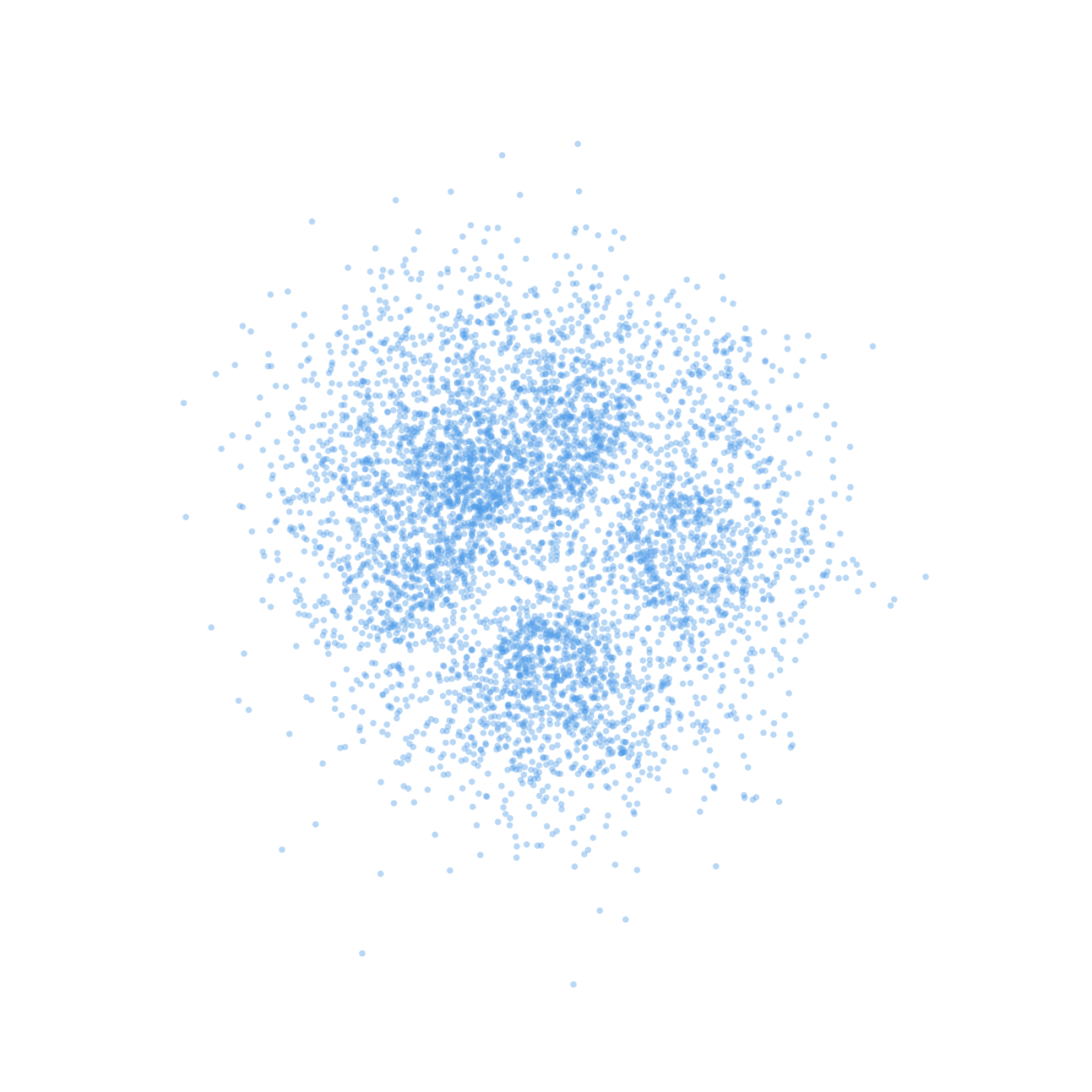}
    \includegraphics[width = 0.15\textwidth]{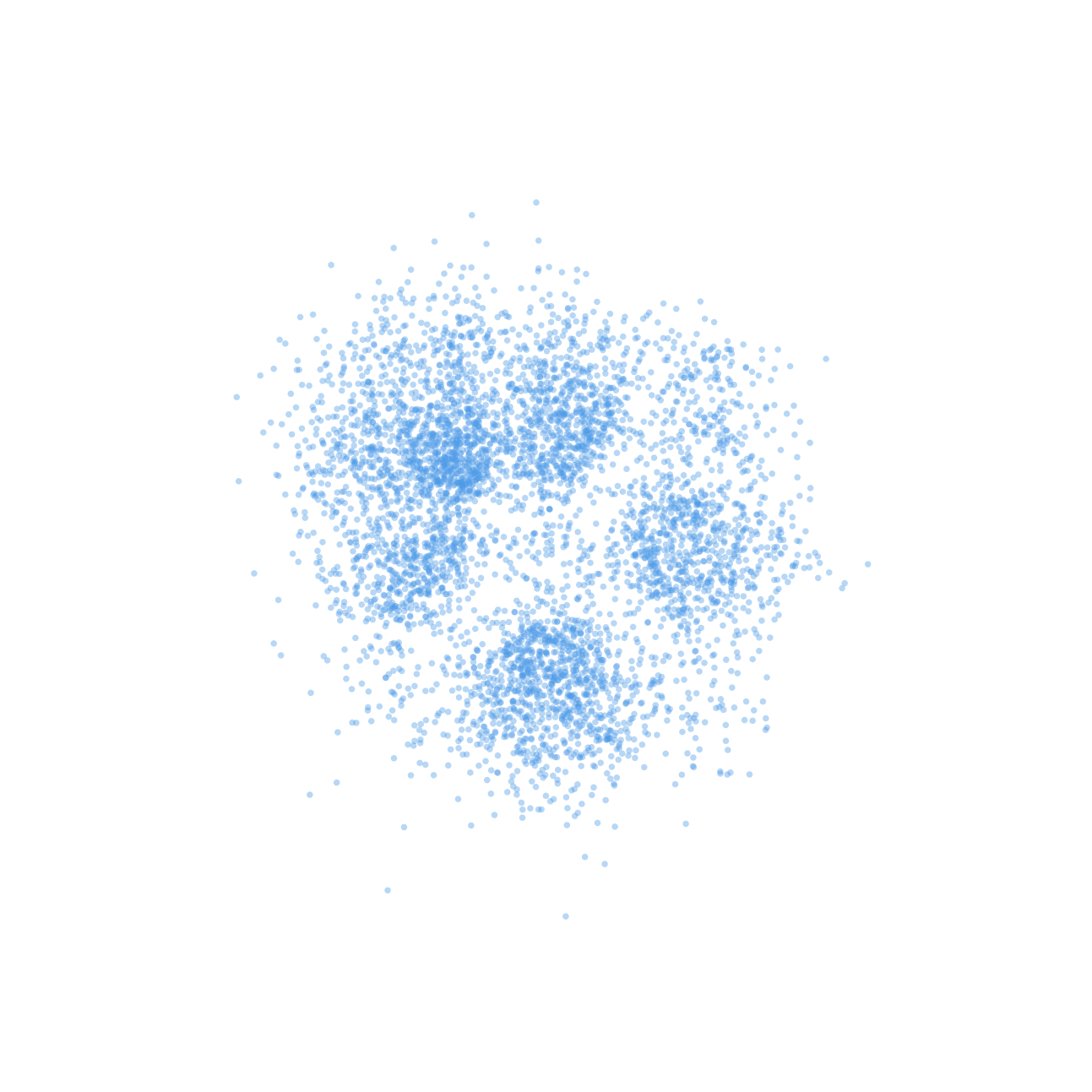}
    \includegraphics[width = 0.15\textwidth]{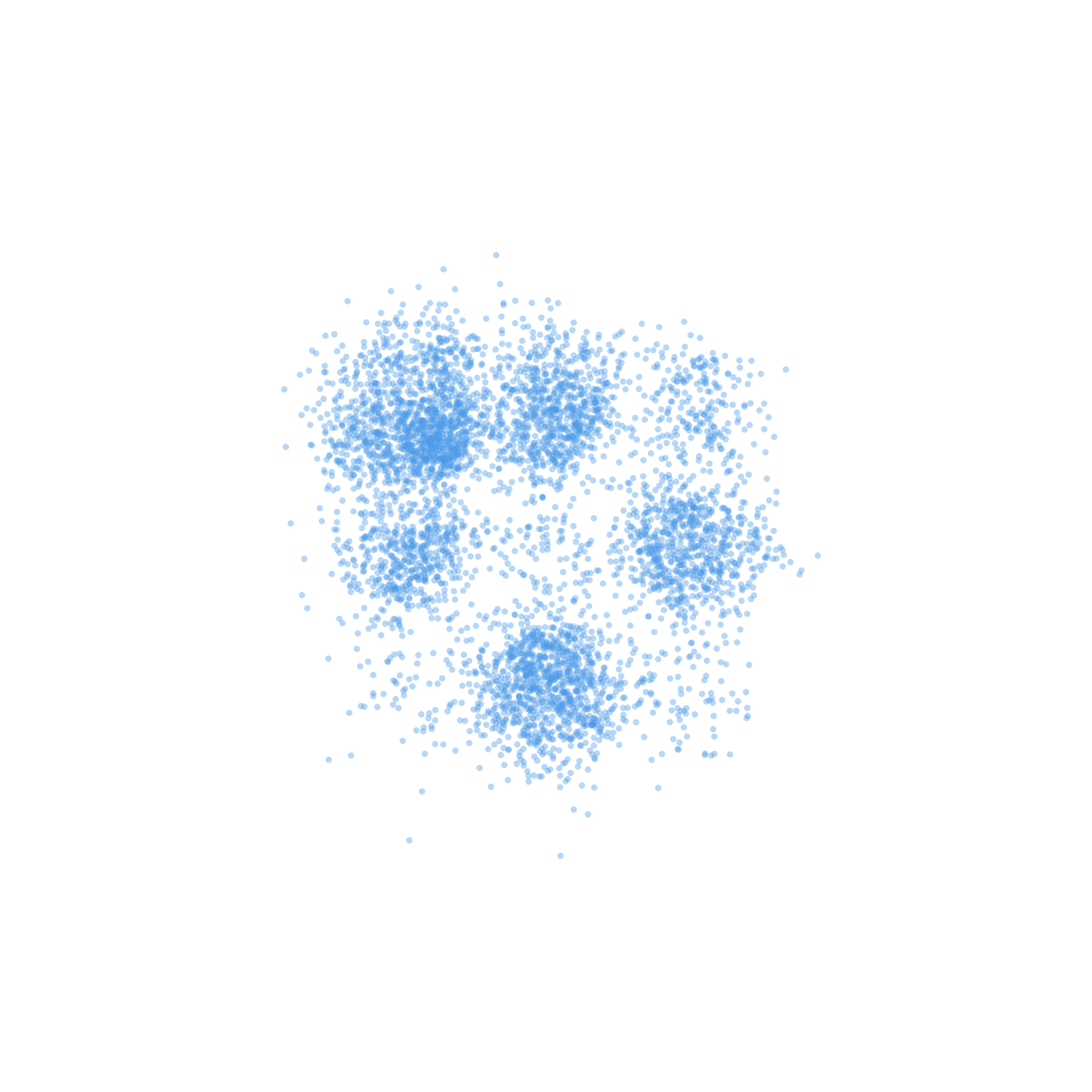}    
    \includegraphics[width = 0.15\textwidth]{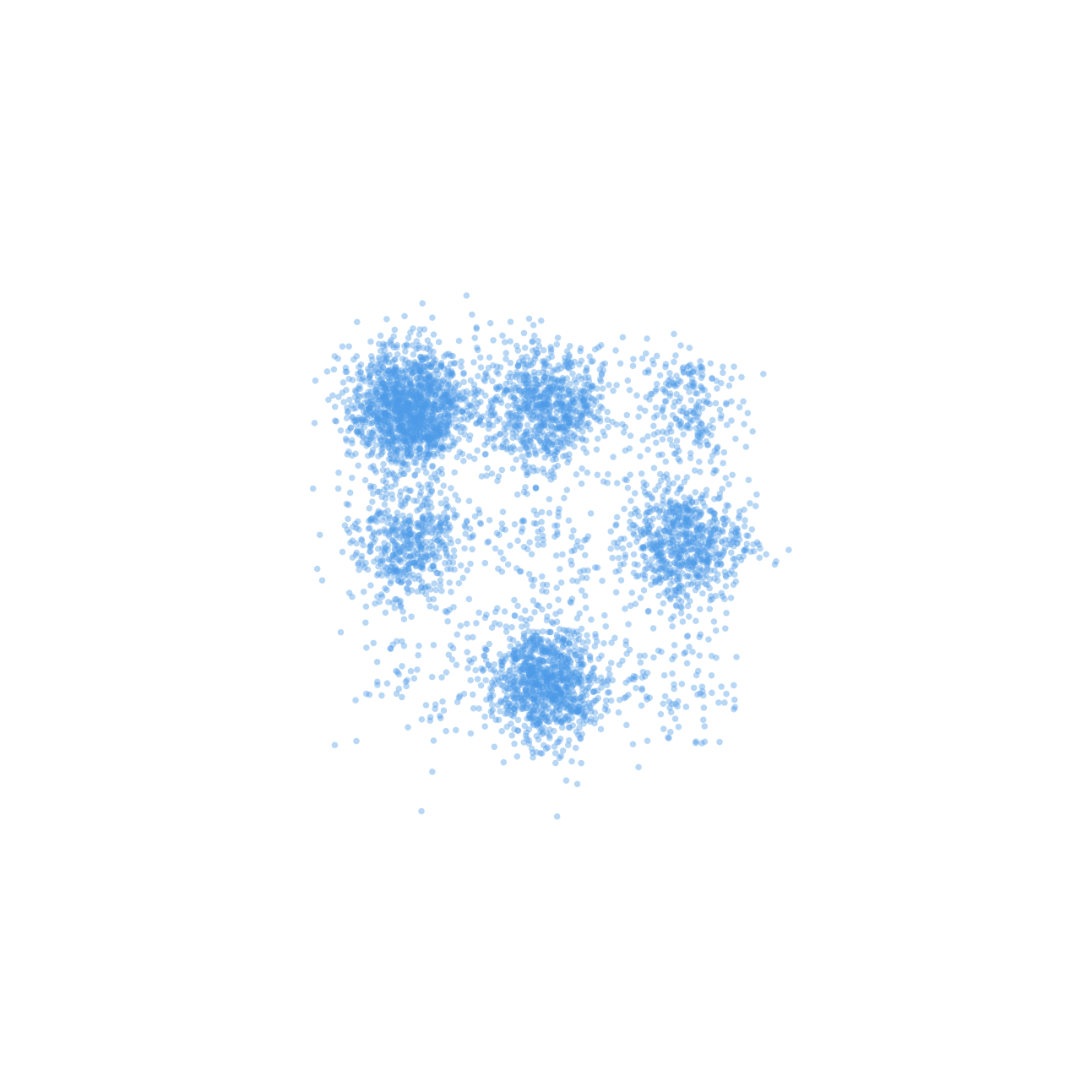}
    \caption{{A generated trajectory from a Flow Matching model trained using the conditional density and velocity given by the Wiener process. As described in Section \ref{sec:cooking} we define the mean reverting process and use schedules $f(t)=1-t$ and $g(t)=t^2$.}}
    \label{plot_traj_GMM_1}
\end{figure}
First, consider the standard  Wiener process (Brownian motion) $(W_t)_t$ starting in $0$ whose probability density flow $p_t$ is given by the solution of  the  diffusion equation 
\begin{equation}\label{standard-diffusion}
    \partial_t p_t =  \, \nabla \cdot (p_t \, \frac12\nabla \log p_t ) =  \frac{1}{2} \, \Delta p_t, \quad t \in (0,1], \qquad \lim_{t \downarrow 0} p_t = \delta_0,
\end{equation}
where the limit for $t \downarrow 0$ is taken in the sense of distributions.
The solution is analytically known to be
\begin{equation}\label{diff-density}
    p_t(x) = (2 \pi t)^{-\frac{d}{2}} {\rm e}^{-\frac{\|x\|^2}{2 t}}.
\end{equation}
Thus, the latent distribution is just the Gaussian $p_1 = \mathcal{N}(0,I_d)$.
The velocity field in \eqref{standard-diffusion} reads as 
\begin{equation}\label{diff-velo}
    v_t(x) = -\frac{1}{2} \nabla \log p_t = \frac{x}{2t}.
\end{equation}
However, its $L_2$-norm fulfills
$\|{v}_{t}\|_{L_2(\R, p_t)}^2 =  \frac{d}  {4t}$,
and is therefore not integrable over time, i.e. $\|{v}_{t}\|_{L_2(\R, p_t)} \notin L_2(0,1)$. In practice, instability issues caused by this explosion at times close to the target need to be avoided by e.g. time truncations, see e.g. \cite{soft2021}.
For a heuristic analysis also including drift-diffusion flows, we refer to \cite{P2022}.
 Note that in the case of diffusion, there is no significant distinction between the uni- and multivariate setting. 

\subsection{Kac Process and Damped Wave Equation}
\begin{figure}[ht!]
    \centering
    \includegraphics[width = 0.15\textwidth]{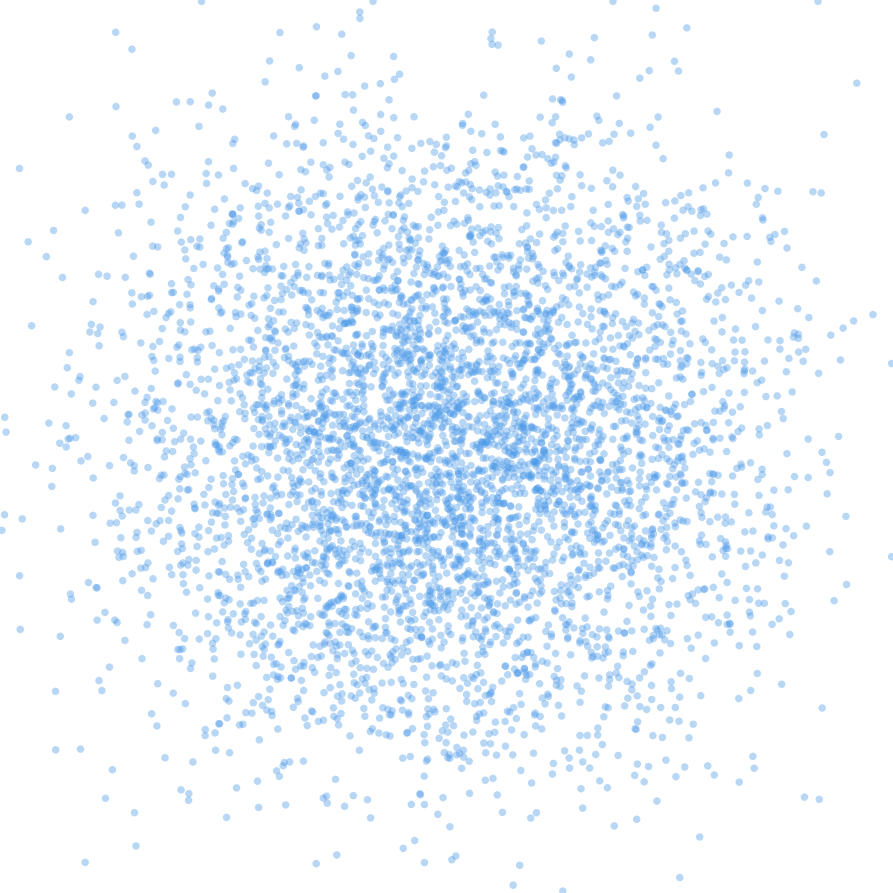}
    \includegraphics[width = 0.15\textwidth]{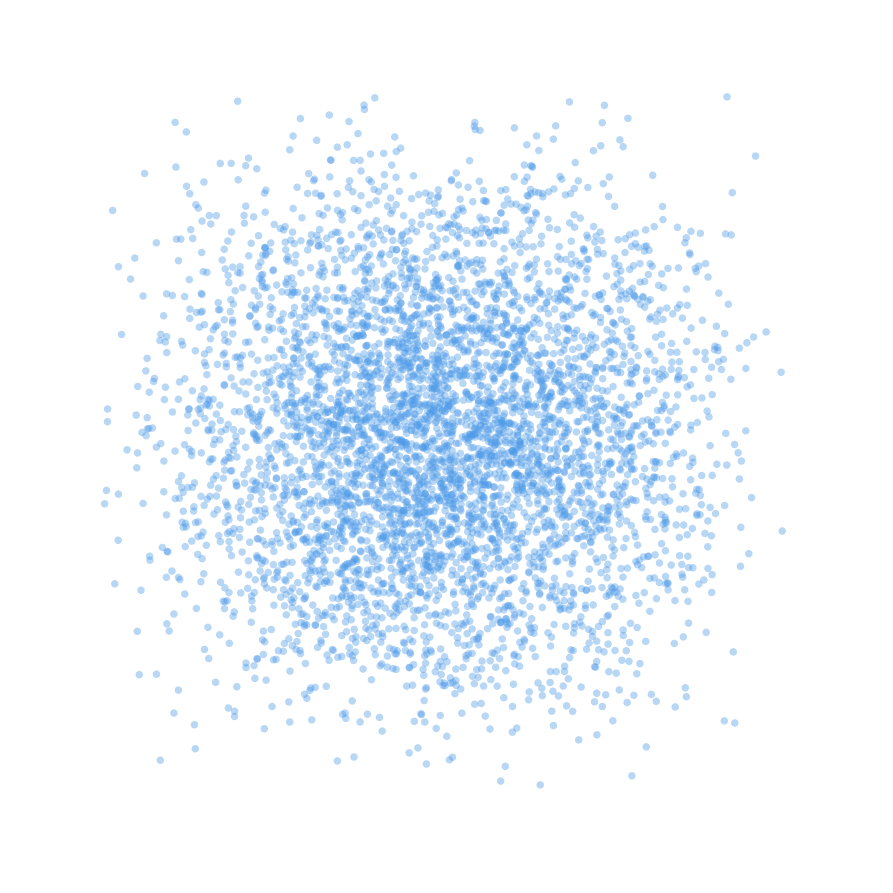}
    \includegraphics[width = 0.15\textwidth]{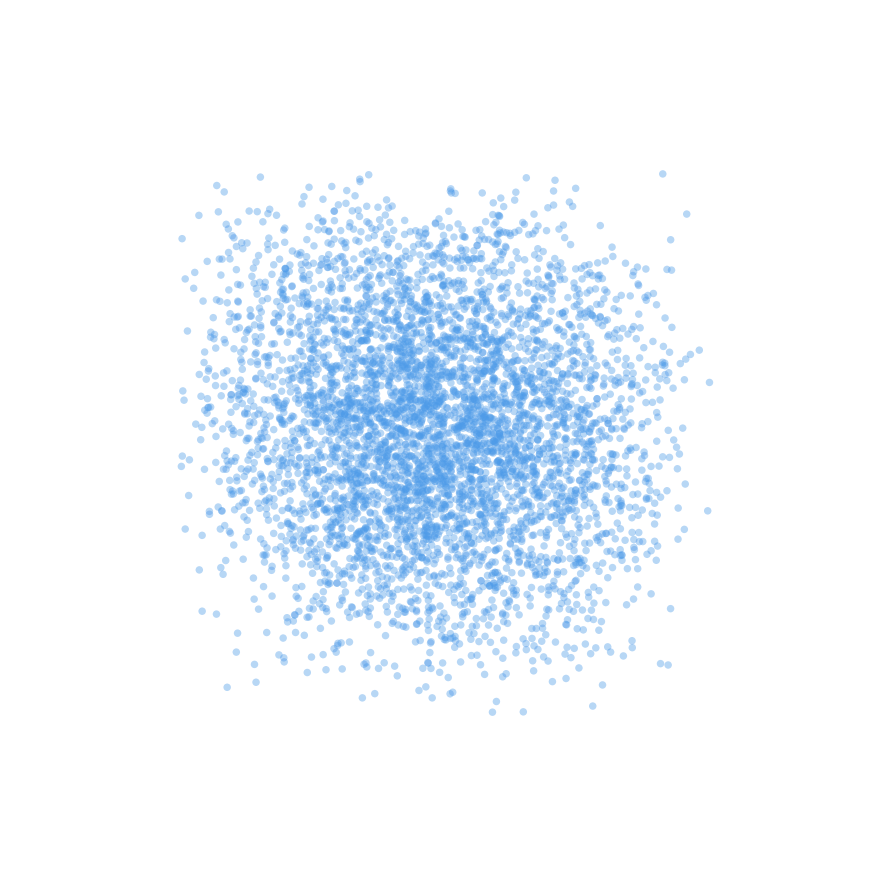}
    \includegraphics[width = 0.15\textwidth]{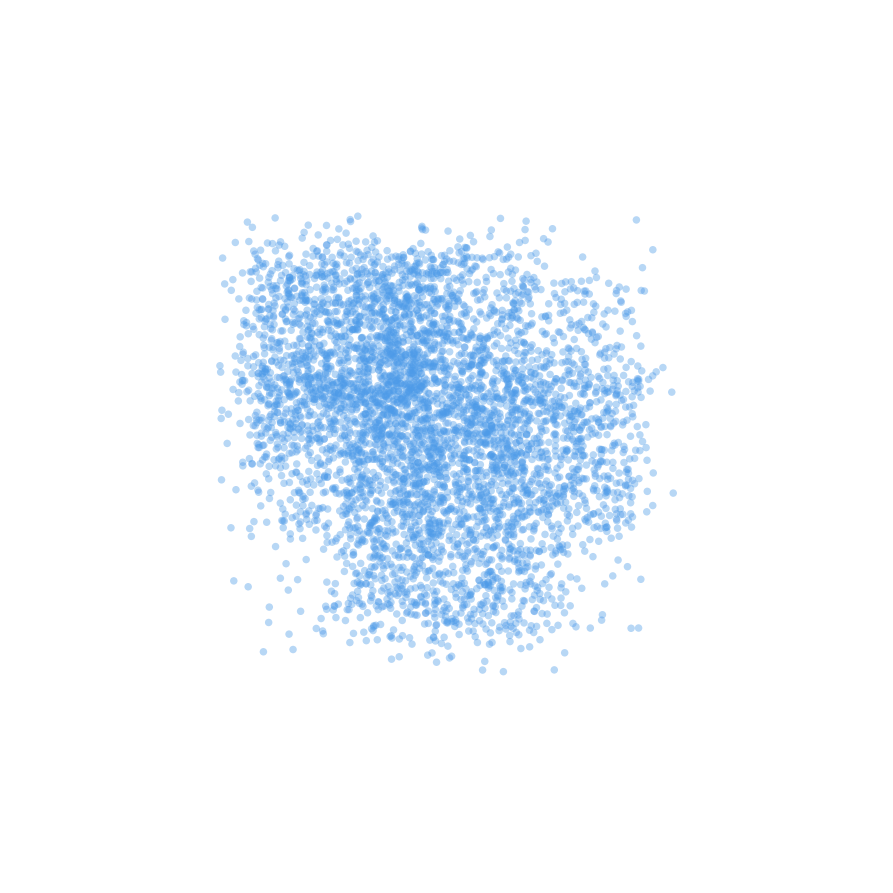}
    \includegraphics[width = 0.15\textwidth]{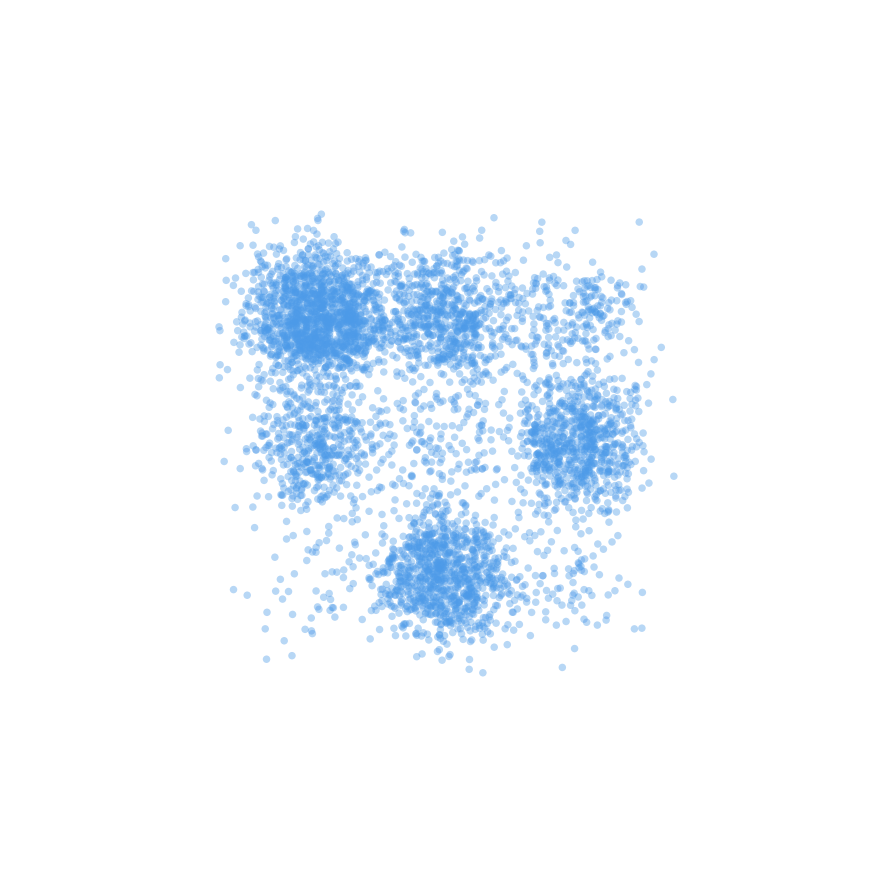}    
    \includegraphics[width = 0.15\textwidth]{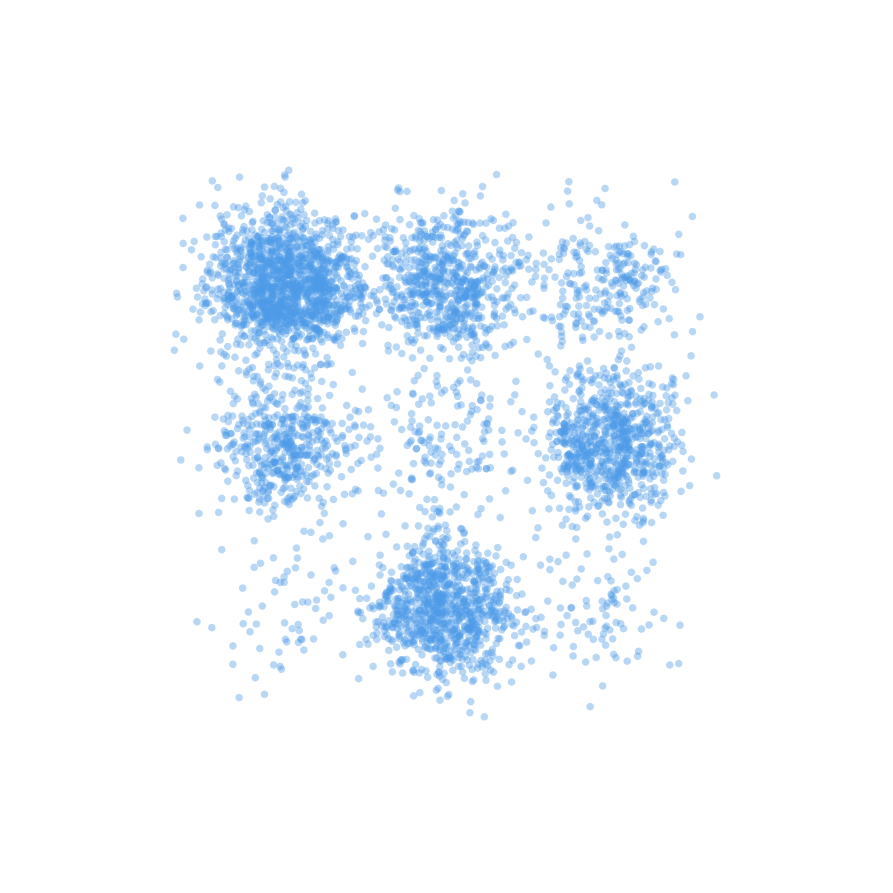}
   
     \caption{{A generated trajectory from a flow matching model trained using the conditional density and velocity given by the Kac process with $(a,c)=(9,3)$. As described in Section \ref{sec:cooking} we define the mean reverting process and use schedules $f(t)=1-t$ and $g(t)=t^2$.}}
    \label{plot_traj_GMM_2}
\end{figure}
The Kac process \cite{KAC1974}, also
known as persistent random walk, originates from a discrete random walk, which starts in $0$ and  moves with velocity parameter $c>0$ in one direction until it reverses its direction with probability $a \Delta_t$, $a>0$. 
A continuous-time analogue is given by
the Kac process which is defined using the
homogeneous \emph{Poisson point process} $N_t$ with rate $a$, i.e. i) $N_0 = 0$; ii) the increments of $N_t$ are independent, iii) $N_t - N_s \sim {\rm Poi}\big(a(t-s) \big)$ for all $0 \le s < t$.
Now the \emph{Kac process starting in} $0$ 
is given by
\begin{align}\label{stepwise-linear}
K_t := {\rm B}_{\frac{1}{2}} \, c \int_0^t (-1)^{N_s} \d s,
\end{align}
where ${\rm B}_{\frac{1}{2}} \sim {\rm Ber}(\frac12)$ is a Bernoulli random variable\footnote{More precisely, ${\rm B}_{\frac{1}{2}}$ is \emph{two-point} distributed with values $\{-1,1\}$.} taking the values $\pm 1$.
Note that in contrast to diffusion processes, the Kac process $K_t$ \emph{persistently} maintains its linear motion between changes of directions (jumps of $N_t$), see Figure \ref{plots_1d_processes}.

By the following proposition, the Kac process is related to the
damped wave equation, also known as telegrapher's equation, and its
probability distribution admits a computable vector field such that the continuity equation is fulfilled. For a proof we refer to \cite{DCFS2025}.

\begin{proposition}
The probability distribution flow of $(K_t)_t$ admits a singular and absolutely continuous part via
\begin{align}\label{kac-density-sing}
       \mu_t(x) &= \frac12 e^{-at} \big( \delta_0(x+ct) + \delta_0(x-ct)  \big) + \tilde p_t(x),
   \end{align}
   with the absolutely continuous part
   \begin{align} \label{kac-density-cont}
     \tilde p_t(x) &\coloneqq \frac12 e^{-at} \Big(\beta ct \frac{I_0'(\beta r_t(x))}{r_t(x)}  
     + \beta I_0(\beta r_t(x))
     \Big) {1}_{[-ct,ct]}(x), \qquad r_t(x) \coloneqq \sqrt{c^2t^2-x^2},
   \end{align}
where $\beta \coloneqq \frac{a}{c}$, and $I_0$ denotes the $0$-th \emph{modified Bessel function of first kind}. The distribution \eqref{kac-density-sing} is the generalized solution of the damped wave equation
\begin{align}\label{tele-eq-1D}
    \partial_{tt} u(t,x) + 2a \, \partial_{t} u(t,x) &= c^2 \partial_{xx} u(t,x), \\
    u(0,x) &= \delta_0(x), \qquad
    \partial_t u(0,x) = 0.
\end{align}  
Further $(\mu_t,v_t)$ solves the continuity equation \eqref{eq:ce_1d} where the velocity field is analytically given by
\begin{align} \label{kac-velo}
       v_t(x) &\coloneqq 
       \left\{
       \begin{array}{cl}
           \frac{x}{
           t +  \frac{r_t(x)}{c} \frac{I_0(\beta r_t(x))}{ I_0'(\beta r_t(x))} }
           & \text{if} \quad x \in (-ct, ct),\\
           \; \; \, c &\text{if} \quad x=ct, \\
           -c & \text{if} \quad x=-ct,\\
            \; \; \, \text{arbitrary} & \text{otherwise}.
       \end{array}
       \right.
   \end{align}
   The Kac velocity field admits the boundedness $\|v_t\|_{L_2(\R,\mu_t)} \le c$, and hence, $\|v_t\|_{L_2(\R,\mu_t)} \in L_2(0,1)$.
\end{proposition}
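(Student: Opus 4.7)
The plan is to establish the three claims in turn: the explicit form of the law $\mu_t$, the telegrapher equation, and the velocity field together with its $L_2$ bound.

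First, I would decompose the law of $K_t$ according to the number $N_t$ of direction reversals by time $t$. Conditional on $\{N_t = 0\}$, no reversal has occurred, and $K_t = B_{1/2}\cdot ct$ takes the values $\pm ct$ with probability $1/2$ each; since $\Pr(N_t = 0) = e^{-at}$, this contributes exactly the atomic component $\tfrac12 e^{-at}(\delta_{-ct} + \delta_{ct})$ in \eqref{kac-density-sing}. For $N_t \ge 1$, I would condition on $N_t = n$ and the ordered jump times, which given $\{N_t = n\}$ are distributed uniformly on the $n$-simplex of $[0,t]^n$. Integrating piecewise-linear motion out against this uniform density yields a closed-form integral, and summing over $n$ weighted by the Poisson probabilities $e^{-at}(at)^n/n!$ produces a series which, after rearrangement, collapses into the Bessel-function density \eqref{kac-density-cont}. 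This is the probabilistic counterpart of the classical d'Alembert-type representation of the fundamental solution of \eqref{tele-eq-1D}.

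Second, I would verify the telegrapher equation \eqref{tele-eq-1D} either by direct substitution of \eqref{kac-density-sing}, relying on the Bessel identity $I_0''(z) + I_0'(z)/z = I_0(z)$ together with $\partial_t r_t(x) = c^2 t/r_t(x)$, or, more structurally, via a renewal argument: splitting according to whether a jump occurs in $[t,t+\d t]$ yields the hyperbolic system $\partial_t u_\pm \pm c\,\partial_x u_\pm = -a(u_\pm - u_\mp)$ for the densities of the two velocity modes, and eliminating $u_-$ produces \eqref{tele-eq-1D}.

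Third, for the velocity field I would exploit the identity $v_t(x) = \E[\dot K_t \mid K_t = x]$. Since $\dot K_t = B_{1/2} \cdot c \cdot (-1)^{N_t}$ satisfies $|\dot K_t| = c$ almost surely, the tower property gives $|v_t(x)| \le c$ wherever the conditional expectation is defined, and hence $\|v_t\|_{L_2(\R,\mu_t)}\le c$, giving integrability on $[0,1]$ automatically. The boundary values $v_t(\pm ct) = \pm c$ are forced because on the atomic part $\{K_t = \pm ct\}$ no reversal has occurred, so $\dot K_t = \pm c$ deterministically. For the interior formula I would integrate the continuity equation to obtain $\tilde p_t(x) v_t(x) = -\partial_t F_t(x)$, where $F_t(x) \coloneqq \int_{-\infty}^x \tilde p_t(y)\,\d y$, then use the derivative relations for $I_0$ to reduce the right-hand side to the compact expression in \eqref{kac-velo}.

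The main obstacle is the algebraic identification in both directions: summing the iterated-jump series into the Bessel-function density \eqref{kac-density-cont}, and reducing $-\partial_t F_t(x)/\tilde p_t(x)$ to the quotient form of \eqref{kac-velo}. Both are delicate but mechanical once the Bessel recurrences $I_0' = I_1$ and $zI_1'(z) = z I_0(z) - I_1(z)$ are in place; the conceptual work is already packaged in the probabilistic decomposition and the bound $|\dot K_t| \equiv c$.
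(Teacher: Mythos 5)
The paper does not prove this proposition in the document itself; it states ``For a proof we refer to our paper [DCFS2025],'' so there is no in-document argument to compare your sketch against. On its own terms, your outline is the classical Goldstein--Kac route and is conceptually sound: the atomic component from $\Pr(N_t=0)=e^{-at}$ is exactly right; conditioning on $N_t=n$, integrating the piecewise-linear motion over the uniform simplex of jump times, and resumming the Poisson series is the standard way to arrive at the Bessel density; the renewal-system derivation $\partial_t u_\pm \pm c\,\partial_x u_\pm = -a(u_\pm - u_\mp)$ followed by elimination yields the telegrapher equation cleanly; and the bound $|v_t(x)| = |\E[\dot K_t \mid K_t=x]| \le c$ from $|\dot K_t| \equiv c$ a.s.\ is the right way to obtain the $L_2$ estimate without touching the explicit formula.

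One genuine bookkeeping slip in your velocity step: you write $\tilde p_t(x)\,v_t(x) = -\partial_t F_t(x)$ with $F_t(x) := \int_{-\infty}^x \tilde p_t(y)\,\d y$, but this $F_t$ omits the atom at $-ct$. Integrating $\partial_t\mu_t + \partial_x(\mu_t v_t) = 0$ over $(-\infty,x]$ with $x\in(-ct,ct)$ gives $\tilde p_t(x)\,v_t(x) = -\partial_t\,\mu_t\bigl((-\infty,x]\bigr)$, and since $\mu_t\bigl((-\infty,x]\bigr) = \tfrac12 e^{-at} + F_t(x)$ on that range, the correct relation is $\tilde p_t(x)\,v_t(x) = \tfrac{a}{2}e^{-at} - \partial_t F_t(x)$. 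As written your identity would give a velocity off by $\tfrac{a}{2}e^{-at}/\tilde p_t(x)$, and the algebraic reduction to the stated quotient form will not close. The fix is purely notational --- take $F_t$ to be the full CDF $\mu_t\bigl((-\infty,\cdot]\bigr)$ including the atomic mass --- but as a step in the derivation it needs to be there.
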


Interestingly, the damped wave equation \eqref{tele-eq-1D} is closely related to the diffusion equation 
via Kac' insertion method.
It is based on the following theorem, whose proof based on semigroup theory can be found in \cite{GH1971}, see also \cite{J1990,KAC1974}.

\begin{theorem}\label{insertion-multi-d}
For any initial function $f_0 \in H^2(\R^d)$, $d \ge 1$, let  $w_c(t,x)$ be
the solution of the \emph{undamped} wave equation with velocity $c > 0$ given by
\begin{align}\label{undamped-wave}
    \partial_{tt} w(t,x) &= c^2 \Delta w(t,x), \quad x \in \R^d, ~ t > 0,\nonumber\\
    w(0,x) &= f_0(x),\qquad
    \partial_t w(0,x) = 0.
\end{align}
Then, the functions defined by
\begin{equation}\label{tele-sol-general}
h(t,x):= \mathbb{E}\left[w_1 \left(\sigma W_t,x \right) \right],  \quad \text{resp.} \quad
    u(t,x) := \ex \big[ w_c(c^{-1} S_t, x)\big]
\end{equation}
solve the 
diffusion equation
\begin{align}\label{heat-eq}
    \partial_t h(t,x)& = \frac{\sigma^2}{2} \Delta h(t,x), \quad x \in \R^d, ~ t > 0,
       \\
    h(0,x) &= f_0(x),
\end{align}   
resp. the multi-dimensional damped wave equation 
\begin{align}\label{damped-wave}
    \partial_{tt} u(t,x) + 2a \, \partial_{t} u(t,x) &= c^2 \Delta u(t,x), \quad x \in \R^d, ~ t > 0,\nonumber\\
    u(0,x) &= f_0(x), \qquad
    \partial_t u(0,x) = 0.
\end{align} 
\end{theorem}

As a consequence, it is not hard to show the following corollary,
see \cite{DCFS2025}.

\begin{corollary}\label{a-c-to-infinity}
For any $t \ge 0$, the solution $u^{a,c}(t,\cdot)$ of the damped wave equation \eqref{damped-wave} converges to the solution $h(t,\cdot)$ of the diffusion equation 
for $a,c \to \infty$ with fixed $\sigma^2 = \frac{c^2}{a}$.
\end{corollary}

In other words, diffusion can be seen as \emph{"an infinitely $a$-damped wave with infinite propagation speed $c$"}. Note that  the diffusion-related concept of particles traveling with infinite speed violates Einstein's laws of relativity and has therefore
found resistance in the physics community \cite{C1958,  C1963, V1958, TL2016}.

We also like to stress that in multiple dimensions, the damped wave equation \eqref{tele-eq-1D} is \emph{no longer} mass-conserving as in 1D \cite{TL2016}, and hence eludes a characterization via stochastic processes. { Figure \ref{plot_traj_GMM_2} shows the generation of samples from a weighted Gaussian Mixture Model (GMM) using Flow Matching and the Kac process as our noising process. As described in Section \ref{sec:cooking} we define the mean reverting process and use schedules $f(t)=1-t$ and $g(t)=t^2$.}

%---------------------------------------------
\subsection{Uniform Process and MMD Gradient Flow}\label{sec:MMD-quantile}
\begin{figure}[ht!]
    \centering
    \includegraphics[width = 0.15\textwidth]{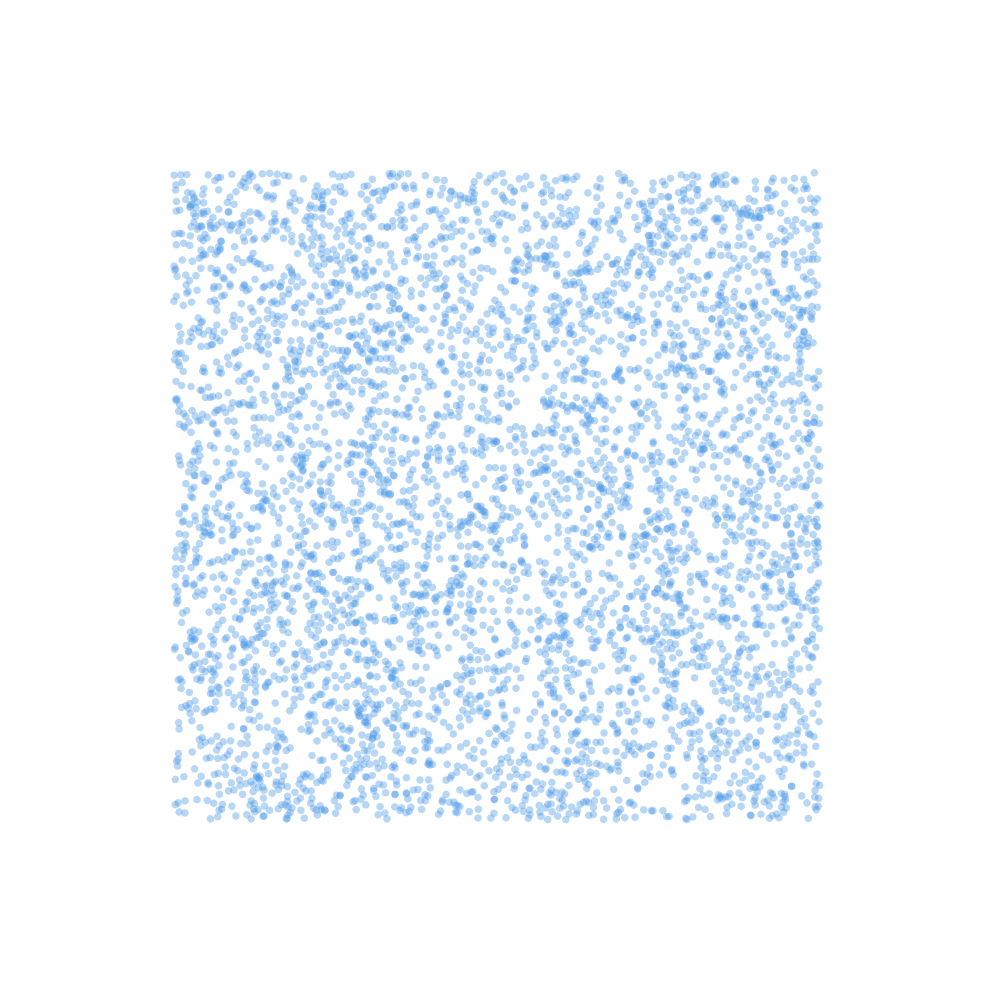}
    \includegraphics[width = 0.15\textwidth]{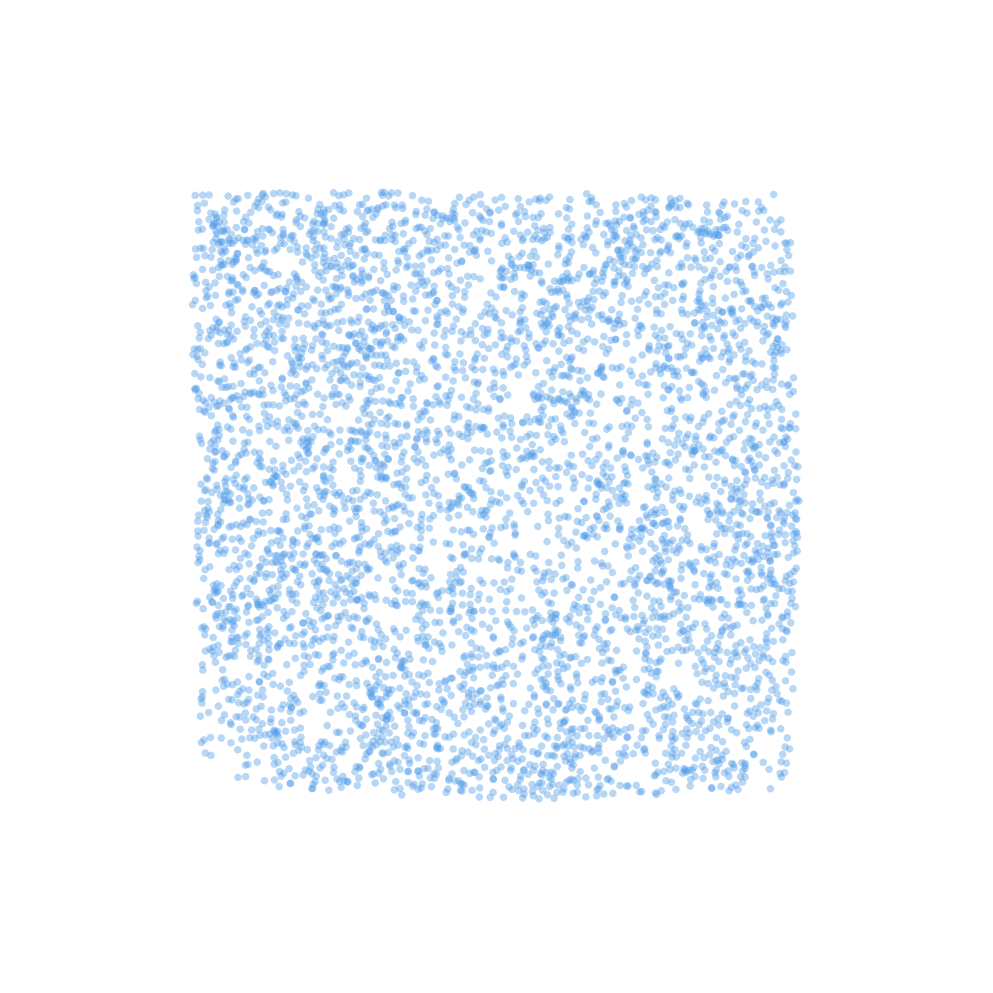}
    \includegraphics[width = 0.15\textwidth]{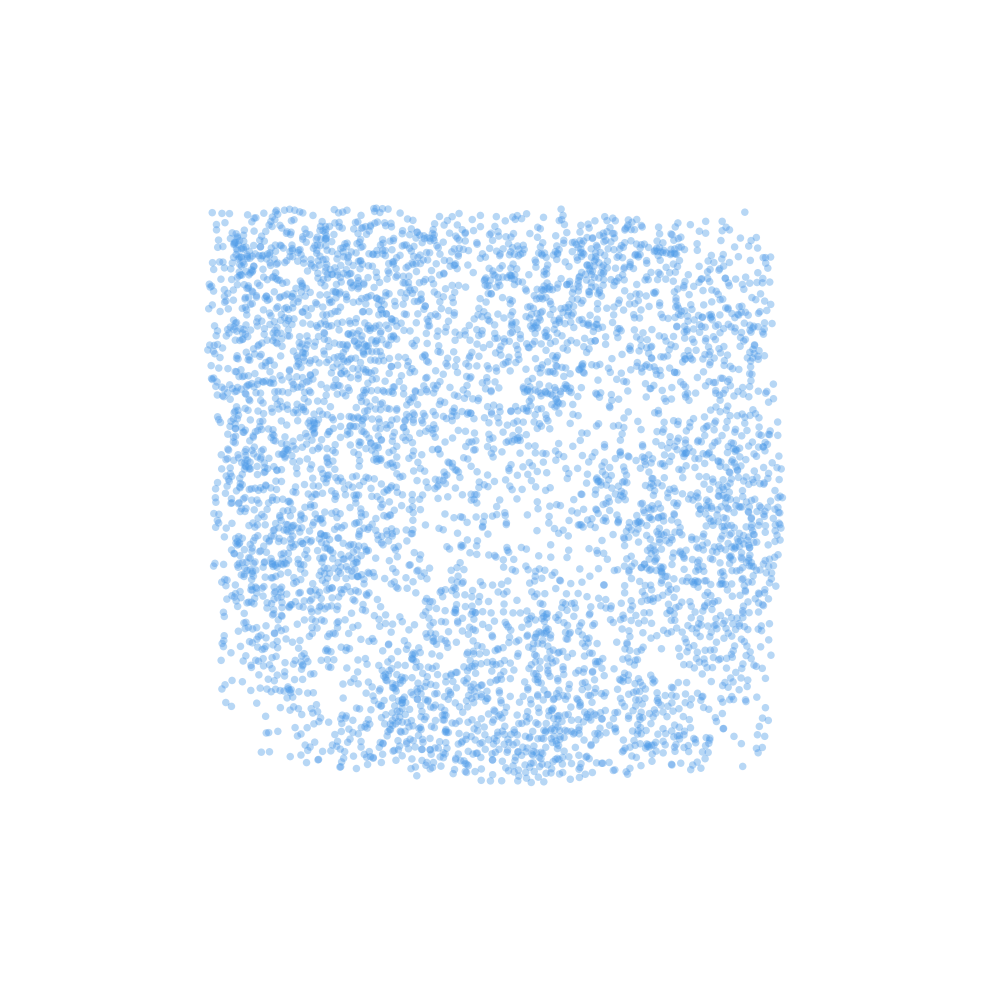}
    \includegraphics[width = 0.15\textwidth]{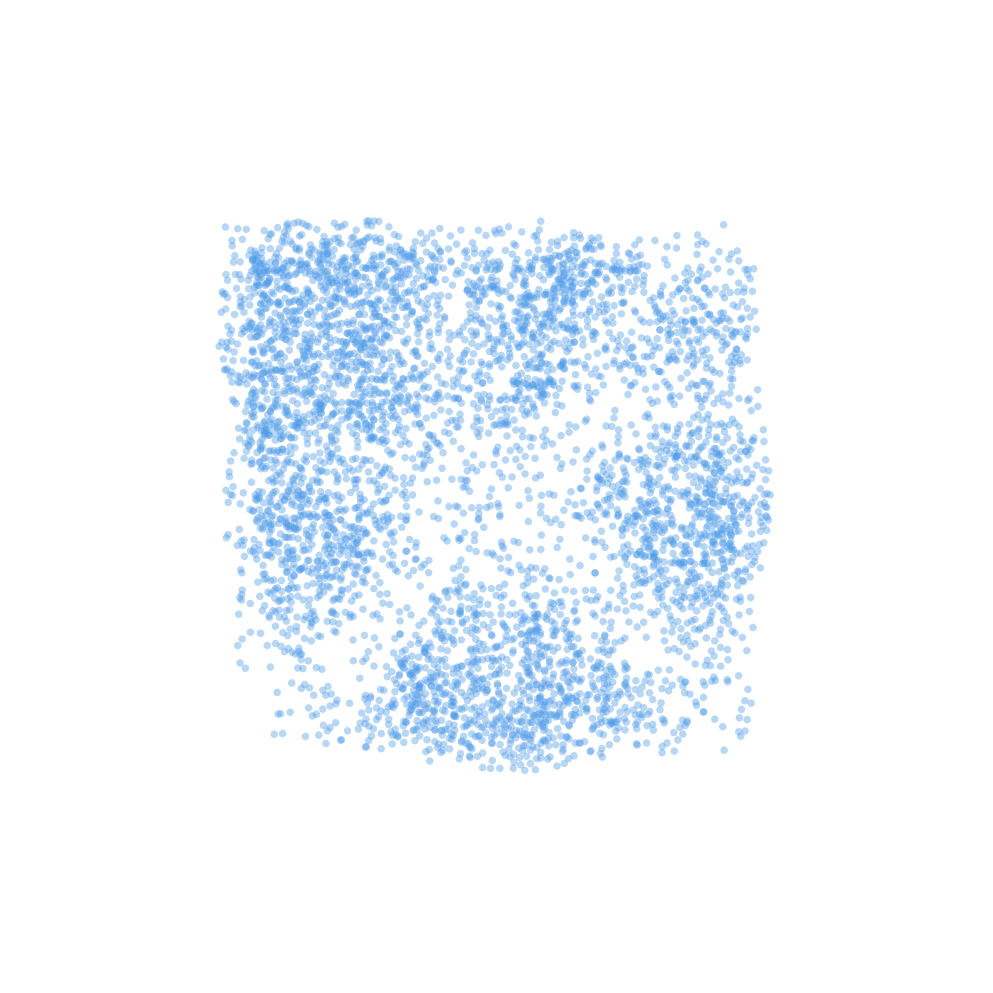}
    \includegraphics[width = 0.15\textwidth]{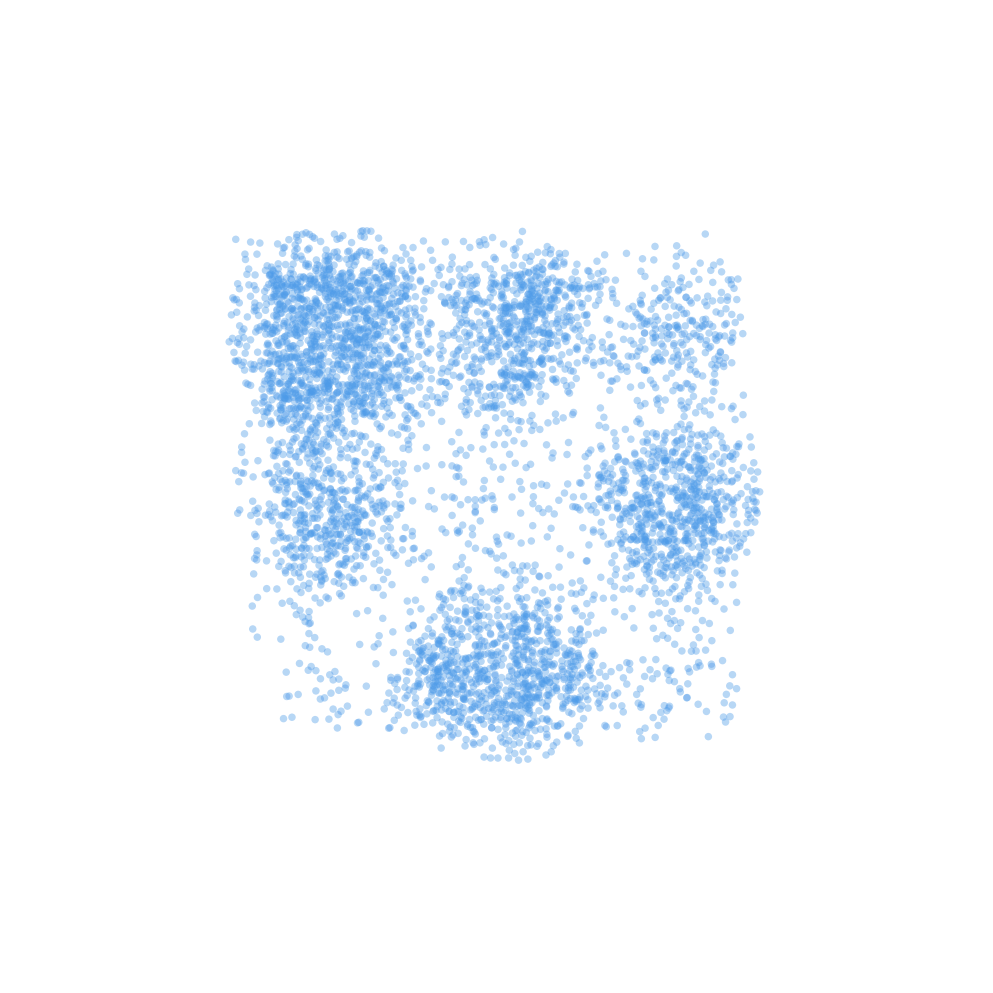}    
    \includegraphics[width = 0.15\textwidth]{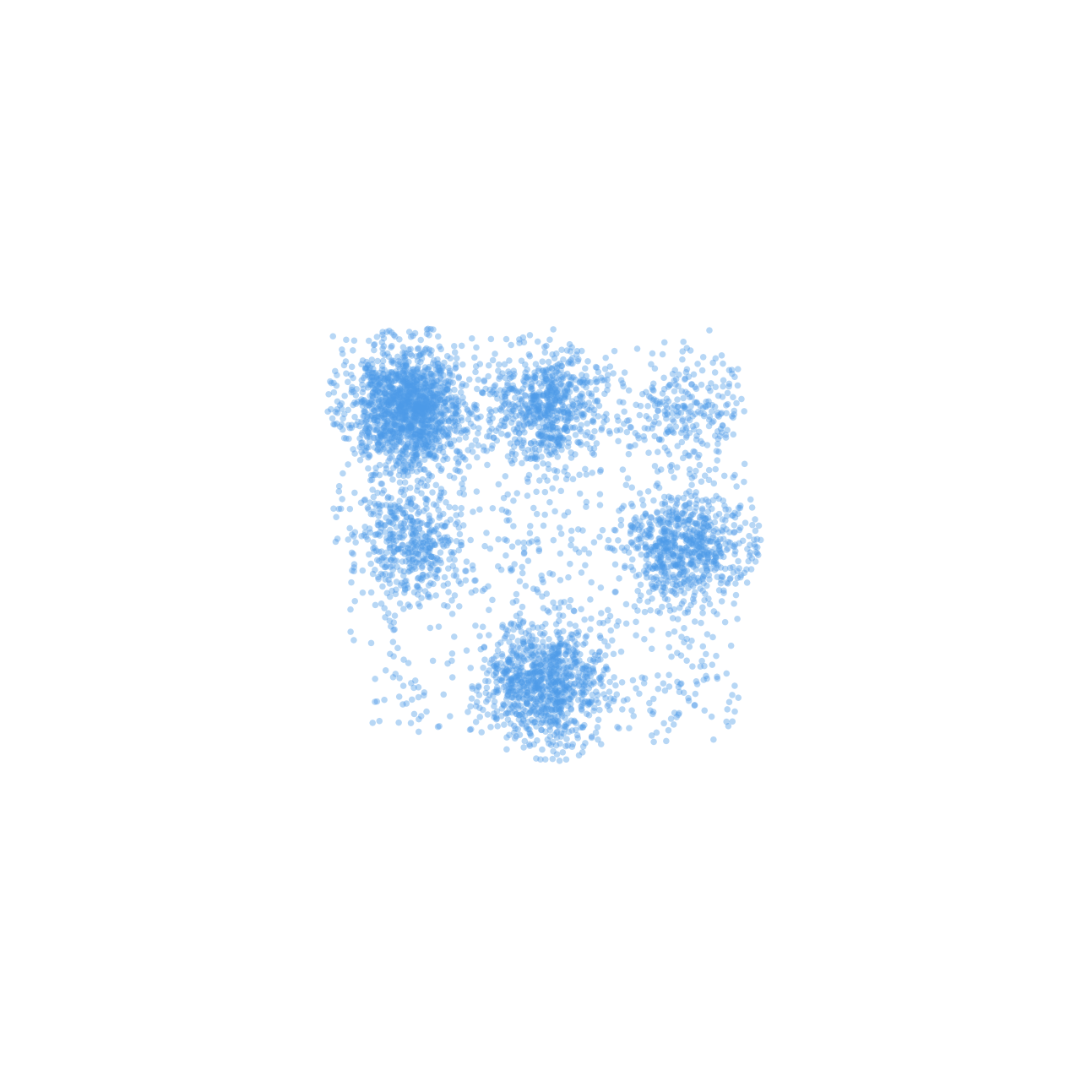}
   
    \caption{{A generated trajectory from a flow matching model trained using the conditional density and velocity given by the MMD gradient flow. As described in Section \ref{sec:cooking} we define the mean reverting process and use schedules $f(t)=1-t$ and $g(t)=t$.}}
    \label{plot_traj_GMM_3}
\end{figure}
Wasserstein gradient flows are special absolutely continuous measure flows
whose velocity fields are negative Wasserstein (sub-)gradients
of functionals $\mathcal F_\nu$ on $\mathcal P_2(\R^d)$ with the unique minimizer 
$\nu$. The gradient descent flow should reach this minimizer as $t \to \infty$. 
In this context, the MMD functional with the non-smooth negative distance kernel ${ K(x,y) = -|x-y|}$ given by
    \begin{align}         \label{MMD:calF}
     \mathcal F_\nu(\mu) =   \text{MMD} ^2_{ K}(\mu,\nu) &\coloneqq 
        - \frac12 \int_{\R^2} |x-y| \d \left(\mu(x) - \nu(x) \right) \d \left(\mu(y) - \nu(y) \right )   ,    
\end{align}
stands out for its flexible flow behavior between distributions of different support \cite{HGBS2022}.
In 1D, its Wasserstein gradient flow $\mu_t$ can be equivalently described by the flow of its quantile functions $Q_{\mu_t}$ with respect to an associated functional on $L_2(0,1)$. Note that the MMD functional \eqref{MMD:calF} loses its convexity (along generalized geodesics) in multiple dimensions \cite{HGBS2022}, and the general existence of their Wasserstein gradient flows is unclear in the multivariate case. This yields another reason to work in 1D, where we have
have the following proposition. 

\begin{proposition}\label{prop2}
The Wasserstein gradient flow $\mu_t$ 
of the MMD functional \eqref{MMD:calF}
starting in $\mu_0 = \delta_0$ 
towards the uniform distribution $\nu = \mathcal{U}[-b, b]$ with fixed $b > 0$
reads as
\begin{equation}\label{MMD-diff}
        \mu_t
        = \big(1- \exp(-\tfrac{t}{b}) \big) \, 
        \mathcal{U}[-b ,b], 
        , \qquad t > 0,
\end{equation}
with corresponding velocity field 
\begin{equation} \label{MMD-velo}
        v_t(x)
        = \frac{x}{b \left( \exp\left(\frac{t}{b}\right) - 1\right)}, \quad x \in \supp(\mu_t).
    \end{equation}
It holds $\|v_t\|^2_{L_2(\R,\mu_t)} = \frac{2b}{3} \exp(-\frac{2t}{b})$, and hence, $\|v_t\|_{L_2(\R,\mu_t)} \in L_2(0,1)$.    
A corresponding (stochastic) process $(U_t)_t$ is given by 
$U_t \coloneqq b \left(1- \exp\left(-\frac{ t}{b} \right)\right)  U$, where $U \sim \mathcal{U}[-1,1]$, such that ${\rm Law} (U_t) = \mu_t$.
\end{proposition}

We prove the proposition more general for $\nu = \mathcal U[a,b]$ and a flow starting in $x_0 \in [a,b]$,
i.e.\ we show
\begin{equation} \label{eq:Dirac_to_Uniform}
        \mu_t
        = \U\left[ a + (x_0 - a) \exp\left(- r(t)\right),
        b - (b - x_0) \exp\left(- r(t) \right)
        \right], \qquad t>0
\end{equation}
with $r(t) \coloneqq \frac{2 t}{b - a}$ and
\begin{align}\label{xx}
        v_t(x)
        = \frac{2}{b - a} \left(\frac{x - x_0}{ \exp(  r(t)) -1} \right) .
\end{align} 

To this end, we need the relation between measures in $\mathcal P_2(\R)$  and cumulative distribution functions, see \eqref{cdf-and-quantile}.
For $\nu = \mathcal{U}[a, b]$, we have that
\begin{equation*}
       {  R}_{\nu}(x)         = \begin{cases}
            0, & \text{if } x < a, \\
            \frac{x - a}{b - a}, & \text{if } a \le x \le b, \\
            1, & \text{if } x > b
        \end{cases}
    \end{equation*}
and
$
Q_{\nu}(s) = a(1-s) + bs
$.
In \cite{HGBS2022} it was shown that the functional  $F_{\nu} \colon L_2(0, 1) \to \R$ defined by
\begin{equation}\label{MMD:rmF}
     F_{\nu}(u)
    \coloneqq \int_{0}^{1} \left( (1 - 2 s) \big( u(s) + Q_{\nu}(s)\big)
    + \int_{0}^{1} | u(s) - Q_{\nu}(t) | \d{t} \right) \d{s}
\end{equation}
fulfills $\mathcal F_\nu(\mu) = F_\nu(Q_\mu)$ for all
$\mu \in \mathcal P_2(\R)$.
Moreover, we have the following equivalent characterization of Wasserstein gradient flows of $\F_\nu$, which can be found in Theorem 4.5 \cite{DSBHS2024}.
%---------------------------------------------------
\begin{theorem} \label{main_mmd}
Let $\mathcal F_\nu$ and $F_\nu$ be defined 
by \eqref{MMD:calF} and \eqref{MMD:rmF}, respectively.
Then the Cauchy problem
\begin{align}\label{eq:cauchy2}
    \begin{cases}
        \partial_t g(t) \in  -\partial F_\nu (g(t)),   \quad  t \in (0,\infty), \\
         g(0) = Q_{\mu_0},
    \end{cases}
\end{align}
has a unique strong solution $g$,
and the associated curve
$\gamma_t \coloneqq (g(t))_\# \Lambda_{(0,1)}$
is the unique Wasserstein gradient flow of $\mathcal F_\nu$ with $\gamma(0+) = (Q_{\mu_0})_{\#} \lebesgue_{(0,1)}$. More precisely, there exists a velocity field $v_t^*$ such that $(\gamma_t, v_t^*)$ satisfies the continuity equation \eqref{eq:ce_1d}, and it holds the relations 
\begin{equation}\label{v_in_subdiff}
    v_t^* \circ g(t) \in -\partial F_\nu (g(t)) 
    \quad \text{and } \quad 
    v_t^* \in -\partial \F_\nu (\gamma_t). 
\end{equation}
\end{theorem}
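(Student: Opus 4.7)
The plan is to reduce the Wasserstein gradient flow problem on $(\mathcal P_2(\R), W_2)$ to a classical subgradient flow in the Hilbert space $L_2(0,1)$, exploiting the isometric embedding $\mu \mapsto Q_\mu$ of \eqref{eq:1d_continuous_quantile_formula} and the identification $\mathcal F_\nu(\mu) = F_\nu(Q_\mu)$ already recorded. The first step is to verify that $F_\nu$ in \eqref{MMD:rmF} is proper, convex, and lower semicontinuous on $L_2(0,1)$: convexity follows because $u \mapsto |u(s) - Q_\nu(t)|$ is convex pointwise and the remaining terms are affine in $u$, while properness and lower semicontinuity are routine since $Q_\nu$ has compact support. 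Given these properties, the Brezis--K\={o}mura theory of subgradient flows of convex lsc functionals on Hilbert spaces directly yields a unique strong solution $g:[0,\infty)\to L_2(0,1)$ of \eqref{eq:cauchy2} with $g(t)\in\dom(\partial F_\nu)$ for a.e.\ $t>0$, together with the usual $L_2$-contractivity between trajectories.

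The second step is to show that the closed convex cone $\mathcal C \subset L_2(0,1)$ of a.e.\ nondecreasing functions is invariant under the flow, so that $g(t)$ remains the quantile function of an honest probability measure $\gamma_t := g(t)_\# \Lambda_{(0,1)}$ for all $t \ge 0$. One approach is to show that $F_\nu$ is rearrangement-monotone in the sense that the resolvent $J_\tau = (I+\tau\partial F_\nu)^{-1}$ maps $\mathcal C$ into itself; applying the implicit Euler scheme and passing to the limit via the Crandall--Liggett generation theorem then preserves $g(t)\in\mathcal C$. Equivalently, one can argue by an order-comparison/contraction lemma between two monotone initial conditions.

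The third step is to identify $\gamma_t$ as the unique Wasserstein gradient flow of $\mathcal F_\nu$. Because $\mu \mapsto Q_\mu$ is an isometry between $(\mathcal P_2(\R),W_2)$ and $(\mathcal C,\|\cdot\|_{L_2})$, the $L_2$-subgradients of $F_\nu$ at $g(t)$ correspond to Wasserstein subgradients of $\mathcal F_\nu$ at $\gamma_t$. The velocity field $v_t^*$ is then defined $\gamma_t$-a.e.\ on $\supp(\gamma_t)$ by the one-dimensional identity $v_t^*(g(t)(s)) = \dot g(t)(s)$; the continuity equation \eqref{eq:ce_1d} follows by differentiating the push-forward $\gamma_t = g(t)_\# \Lambda_{(0,1)}$ in time and testing against smooth compactly supported functions, and both inclusions in \eqref{v_in_subdiff} then arise from the isometric correspondence. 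Uniqueness of the Wasserstein gradient flow transfers back from the $L_2$ side by the same isometry.

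The main obstacle I expect is the interplay between the non-smooth absolute-value term in $F_\nu$ and possible plateaus of $g(t)$, i.e.\ atoms of $\gamma_t$: on such plateaus $v_t^*$ is only defined $\gamma_t$-almost everywhere, and the set-valued subdifferential $\partial F_\nu$ must be selected so as to yield a measurable, single-valued $v_t^*$ compatible with $\mathcal C$. Ensuring cone invariance together with this careful selection on flat regions is the most delicate technical point; it is the content of \cite[Theorem 4.5]{DSBHS2024}, whose full proof we would rely on rather than redo in detail.
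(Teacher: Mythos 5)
The paper does not prove this theorem at all—it is quoted verbatim from \cite[Theorem 4.5]{DSBHS2024}—and your outline (convexity of $F_\nu$ on $L_2(0,1)$, Hilbert-space subgradient-flow existence and uniqueness, invariance of the monotone cone, and transfer back through the isometry $\mu \mapsto Q_\mu$) is exactly the strategy of that reference, to which you also explicitly defer for the delicate cone-invariance and velocity-selection steps, so your proposal takes essentially the same (cited) route as the paper's treatment. The one point worth stressing is that $F_\nu$ agrees with $\mathcal F_\nu\circ(\mu\mapsto Q_\mu)$ only on the cone $\mathcal C$, since the nonconvex self-interaction term has been linearized there using monotonicity, so the cone invariance you flag is indeed the crux of the argument rather than a side issue.
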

Lastly note that here, the subdifferential $\partial F_\nu(u)$ is explicitly given by the singleton
\begin{equation}
- \partial F_\nu(u) 
= - \nabla F_\nu(u) 
= 2(\cdot - {  R}_\nu \circ u ) \quad \text{for all } u \in L_2(0,1),
\end{equation}
see Lemma 4.3 \cite{DSBHS2024}.

\begin{proof}[Proof of Proposition \ref{prop2}]
We want to apply Theorem \ref{main_mmd} to $(\mu_t,v_t)$ in \eqref{eq:Dirac_to_Uniform} and \eqref{xx}.
The uniform distribution in \eqref{eq:Dirac_to_Uniform}
has the quantile function
\begin{equation*}
         Q_{\mu_t}(s)
        = \big( 1-\exp\left(-r(t)\right)\big) 
        \big(a +(b - a) s \big) + x_0 \exp\left(-r(t)\right)
        ,
        \qquad s \in (0, 1).
    \end{equation*}
For all $t > 0$ and all $s \in (0, 1)$, we have $Q_{\mu_t}(s) \in [a, b]$ since $x_0 \in [a, b]$, and thus
    \begin{align*}
        - \nabla F_\nu(Q_{\mu_t})(s)
        &= 2 s - 2 r_{\nu}(Q_{\mu_t}(s))\\
        &= 2 s - 2 \frac{\big( 1-\exp\left(-r(t)\right)\big) 
        \big(a +(b - a) s \big) + x_0 \exp\left(-r(t)\right)- a}{b - a} \\
        & = 2 \left( s-  \frac{x_0 - a}{b - a}\right) \exp\left(-r(t)\right).
        \end{align*}
On the other hand, it holds 
 \begin{equation*}
        \partial_t Q_{\mu_t}(s)
        = -2 \frac{x_0 - a}{b - a} \exp\left(-r(t)\right) - \frac{(-2) (b - a) s}{b - a} \exp\left(-r(t)\right)
        = 2 \left( s - \frac{x_0 - a}{b - a}\right) \exp\left(-r(t)\right).
    \end{equation*}
By Theorem \ref{main_mmd}, $(\mu_t)$ is the unique Wasserstein gradient flow of $\F_\nu$ starting in $\delta_0$. 

Furthermore, there exists a velocity field $v_t^*$ satisfying the continuity equation \eqref{eq:ce_1d} and the relations \eqref{v_in_subdiff}.
For $s \in (0, 1)$ and $t > 0$, let $y \coloneqq g_s(t) = a + (x_0 - a) \exp\left( - r(t)\right) + (b - a) \left(1 - \exp\left( - r(t)\right)\right) s$.
Then, we have $s = \frac{y - a - (x_0 - a) \exp\left( - r(t)\right)}{(b - a) \left(1 -\exp\left( - r(t)\right)\right)}$, and thus by \eqref{v_in_subdiff},
    \begin{align*}
        v_t^*(y)
        & = v_t^*(Q_{\mu_t}(s))
        = 2 \left(s - \frac{x_0 - a}{b - a} \right)\exp\left( - r(t)\right) \\
        & = 2 \left(\frac{y - a - (x_0 - a) \exp\left( - r(t)\right)}{(b - a) \left(1 -\exp\left( - r(t)\right)\right)} - \frac{x_0 - a}{b - a} \right)\exp\left( - r(t)\right) \\
        & = \frac{2}{b - a} \left(\frac{y - a - (x_0 - a)}{1 -\exp\left( - r(t)\right)} \right)\exp\left( - r(t)\right) \\
        & = \frac{2}{b - a} \left(\frac{y - x_0}{\exp\left(r(t)\right) - 1} \right) 
    \end{align*}
    for all $y \in g_s(t)(0,1) = \left[ a + (x_0 - a) \exp(-r(t)), b - (b - x_0) \exp\left(- r(t) \right) \right]$. 
    Lastly, let us compute the action.
    For $t > 0$ we have
    \begin{align}
        \| v_t \|_{L^2(\R,\mu_t)}^2
        & = \int\limits_{a + (x_0 - a) \exp\left( - \frac{2 t}{b - a}\right)}^{b - (b - x_0) \exp\left( - \frac{2 t}{b - a}\right)} \frac{4 (x - x_0)^2}{(b - a)^2 \left( \exp\left(\frac{2 t}{b - a}\right) - 1\right)^2} \frac{1}{(b - a) \left(1 - \exp\left(-\frac{2 t}{b - a}\right)\right)} \d{x} \\
        & = \frac{4}{(b - a)^3 \left( \exp\left(\frac{2 t}{b - a}\right) - 1\right)^2 \left(1 - \exp\left(-\frac{2 t}{b - a}\right)\right)} \int\limits_{a + (x_0 - a) \exp\left( - \frac{2 t}{b - a}\right)}^{b - (b - x_0) \exp\left( - \frac{2 t}{b - a}\right)} (x - x_0)^2 \d{x} \\
        & = \frac{4}{(b - a)^2 \exp\left(-\frac{2 t}{b - a}\right) \left(\exp\left(\frac{2 t}{b - a}\right) - 1\right)^3} \left[ \frac{(x - x_0)^3}{3} \right]_{a + (x_0 - a) \exp\left( - \frac{2 t}{b - a}\right)}^{b - (b - x_0) \exp\left( - \frac{2 t}{b - a}\right)} \\
        & = \frac{4 \left(1 - \exp\left( - \frac{2 t}{b - a}\right)\right)^3}{3 (b - a)^2 \exp\left(-\frac{2 t}{b - a}\right) \left(\exp\left(\frac{2 t}{b - a}\right) - 1\right)^3} \left[ (b - x_0)^3 - (a - x_0)^3 \right] \\
        & = \frac{4 \left[ (b - x_0)^3 - (a - x_0)^3 \right]}{3 (b - a)^2} \exp\left( - \frac{4 t}{b - a}\right).
    \end{align}
    and the proof is finished.
\end{proof}
Note that the fact that $v_t^*$ is uniquely determined on $\supp \, \mu_t = \overline{g_t(0,1)}$, correlates with the fact that the gradient $v_t^* \circ g(t) = - \nabla F_\nu(g(t))$ is a \emph{singleton}. Outside of $\supp \, \mu_t$, the velocity field may be arbitrarily extended, which yields a velocity $\tilde v_t \in - \partial \mathcal{F}_\nu(\mu_t)$ in a \emph{non-singleton} subdifferential. The velocity $v_t^*$ may be \emph{uniquely} chosen from the tangent space $T_{\mu_t} \mathcal{P}_2(\R)$, or equivalently, by choosing it to have minimal norm, i.e. $v_t^* \equiv 0$ outside of $\supp \, \mu_t$.

{\subsection{Scaled Latent Distributions} \label{app:scaled_latent}
Finally, we consider a simple class of processes obtained by a deterministic scaling of a latent random variable. In particular, we will see that the above Wiener process and the Uniform process are of this form, while the Kac process is not.
Let $\zb Z$ be a random variable with law $\rho_Z \in \mathcal P_2(\R)$, and let $g \colon [0,1]\to [0,\infty)$ be continuously differentiable with $g(0)=0$ and $g(1)=1$. We consider
\begin{equation}\label{eq:lin-def}
    Y_t := g(t)\,  Z, \qquad t \in [0,1],
\end{equation}
with $Y_t \sim \mu_t$. Supposing that $\mu_t$ has density $\rho_t$, we get
\begin{equation}\label{eq:lin-density}
    \rho_t(x) = g(t)^{-d}\, \rho_Z\!\left(\frac{x}{g(t)}\right), \qquad t>0,
    \quad\text{and}\quad \lim_{t\downarrow 0}\mu_t = \delta_0.
\end{equation}
Then  straightforward computation yields that $\mu_t$ together with the
velocity field
\begin{equation}\label{eq:lin-velo}
        v_t(x) = \frac{g'(t)}{g(t)}\, x, \qquad x \in \supp(\mu_t)
 \end{equation}
with the convention $v_t(0)=0$ and arbitrary outside $\supp(\mu_t)$,
solves the continuity equation \eqref{eq:ce_1d}.
Further, it holds
\begin{equation}\label{eq:finite}
        \int_0^1 \|v_t\|_{L_2(\R,\mu_t)}^2 \, {\rm d}t
        = \mathbb E\!\big[\|Z\|^2\big] \int_0^1 \big(g'(t)\big)^2 {\rm d}t < \infty
        \quad \text{whenever } g' \in L_2(0,1).
\end{equation}
Also note that if $Z = Q(U)$ for a quantile function $Q:(0,1) \to \R$ and a random variable $U \sim \mathcal{U}([0,1])$, we have
\begin{equation}
   \mathbb E\!\big[\|Z\|^2\big] =  \int_0^1 |Q(u)|^2 \d u,
\end{equation}
i.e.\ the second moment of $Z$ is exactly given by the $L_2$-norm of its quantile. Hence, explosions of the velocity's norm $\int_0^1 \|v_t\|_{L_2(\R,\mu_t)}^2 \, {\rm d}t$ can be directly controlled by the derivative of the time schedule $g$, and the size of the quantile function $Q$ of the latent variable.

The Wiener process fits into this framework with $g(t)=\sqrt{t}$ and $Z\sim\mathcal N(0,1)$, which recovers the exploding vector field $v_t(x)=\tfrac{1}{2t}x$ in \eqref{diff-velo}. Also the Uniform process
appears as a special case of the scaling process.
In contrast, the Kac process does \emph{not} belong to this class, as it is not generated by a deterministic scaling map but by persistent velocity switching, cf.\ \eqref{stepwise-linear}.}

%-----------------------------------------
\section{Flow Matching as Special Mean Reverting Processes}\label{sec:connection-FM}
%-----------------------------------------
\subsection{The Gaussian Case}
Let us shortly verify that our componentwise approach using the mean-reverting process \eqref{eq:mean-rev-kac}, 
i.e.\
\begin{equation}
    \mathbf{X}_t \;\coloneqq\; f(t)\,\mathbf{X_0} \;+\; \mathbf{Y}_{g(t)},
\end{equation}
leads to the usual flow matching objective. where we choose
the scheduling functions $f(t) \coloneqq 1-t, \, g(t) \coloneqq t^2$, the target random variable $\mathbf{X}_0 \sim \mu_0$, and a standard Wiener process $\mathbf{Y}_t$ in $\R^d$ (independent of $\mathbf{X}_0$): 
First, it holds $\mathbf{Y}_{t^2} \sim \mathcal{N}(0, t^2 I_d)$, hence $\mathbf{Y}_{t^2} \overset{d}{=} t \, \mathbf{Z}$ with $\mathbf{Z} \sim \mathcal{N}(0, I_d)$, so that
\begin{equation}
   \mathbf{X}_t \overset{d}{=} (1-t)\mathbf{X_0} + t \, \mathbf{Z}. 
\end{equation}
Furthermore, by \eqref{diff-velo} the 1D components of $\mathbf{Y}_t$ admit the velocity field $v_t^i(x^i) = \frac{x^i}{2t}, \, x^i \in \R$, and by Proposition \ref{cont_eq_decomposes} the multi-dimensional process $\mathbf{Y}_t$ admits the velocity field $v_{\mathbf{Y}}(t,x) = (\frac{x^1}{2t}, ..., \frac{x^d}{2t}) = \frac{x}{2t}, ~ x=(x^1,...,x^d) \in \R^d$.
By the calculation \eqref{eq:velo-OU}, the conditional velocity field corresponding to $\mathbf{X}_t$ starting in $x_0 \in \R^d$ reads as
\begin{align}
  v_{\mathbf{X}}(t,x\mid x_0)
  &= \dot f(t)\,x_0 \;+\; \dot g(t)\,v_{\mathbf{Y}}\bigl(g(t),\,x - f(t)x_0\mid0 \bigr)\\
  &= -x_0 \;+\; 2t\,v_{\mathbf{Y}}\bigl(t^2,\,x - (1-t)x_0\mid 0 \bigr)\\
  &= -x_0 \;+\; \frac{x - (1-t)x_0}{t}.
\end{align}
Now, if $x \sim P_{\mathbf{X}_t}(\cdot \mid x_0)$, i.e.\ $x = (1-t)x_0 + t z$ with $z \sim \mathcal{N}(0, I_d)$, then it follows
\begin{align}\label{FM-velo-along}
  v_{\mathbf{X}}(t,x\mid x_0)
  &= -x_0 \;+\; \frac{(1-t)x_0 + t z - (1-t)x_0}{t} 
  = z - x_0,
\end{align}
which is the usual constant-in-time conditional FM velocity along the straight-line trajectories between $x_0 \sim \mu_0$ and $z \sim \mathcal{N}(0, I_d)$.

%-----------------------------------------
\subsection{The Uniform Case}
Now consider any component of the mean-reverting process \eqref{eq:mean-rev-kac} with $f(t), g(t)$ to be chosen, $X_0$ being a component of $\mathbf{X}_0 \sim \mu_0$, and $Y_t$ given by the MMD gradient flow \eqref{MMD-diff}, i.e.\ $Y_t \coloneqq b \left(1- \exp\left(-\frac{ t}{b} \right)\right) U$, where $U \sim \mathcal{U}[-1,1]$. Let $v_{Y}$ be the corresponding velocity field from \eqref{MMD-velo}. Then, we have

\begin{align}
  v_X(t,x|x_0) &=  \dot f(t)\,x_0 \;+\; \dot g(t)\,v_Y\bigl(g(t),\, |x - f(t)x_0 |\bigr) \frac{x - f(t)x_0}{|x - f(t)x_0|} \\
  &= \dot f(t)\,x_0 \;+\; \dot g(t)\, \frac{x - f(t)x_0}{b \left( \exp\left(\frac{g(t)}{b}\right) - 1\right)}.
\end{align}
Now, along the trajectory $x \sim P_{X_t}(\cdot \mid x_0)$, i.e.\ 
\begin{equation}\label{MMD_along}
    x \,=\, f(t) \,x_0 + b \left(1- \exp\left(-\frac{g(t)}{b} \right)\right) u \,\eqqcolon\, \alpha_t \, x_0 + \sigma_t \, u,
\end{equation}
with $u \sim \mathcal{U}(-1,1)$,
the velocity calculates as
\begin{align}\label{MMD_velo_along}
    v_X(t, x \mid x_0) &= \dot f(t)\,x_0 \;+\; \dot g(t)\, \frac{b \left(1- \exp\left(-\frac{g(t)}{b} \right)\right) u}{b \left( \exp\left(\frac{g(t)}{b}\right) - 1\right)} \notag\\
    &= \dot f(t)\,x_0 \;+\; \dot g(t) \exp\left(\frac{-g(t)}{b}\right) \, u \notag\\
    &= \dot \alpha_t \, x_0 + \dot \sigma_t \, u,
\end{align}
where $\alpha_t \coloneqq f(t)$ and $\sigma_t \coloneqq  b \left(1- \exp\left(-\frac{g(t)}{b} \right)\right)$.
Hence, in order to minimize the CFM loss, we only need to sample $t \sim \mathcal{U}[0,1]$, $x_0 \sim X_0$, and $u \sim \mathcal{U}(-1,1)$. 
Note the similarity between the MMD path \eqref{MMD_along} and the FM/diffusion path \eqref{DDIM_process};
by choosing $b=1$, $f(t) \coloneqq 1-t$ and $g(t) \coloneqq -\log(1-t)$ it follows $\alpha(t) = 1-t$, $\sigma(t) = t$, and we obtain in \eqref{MMD_velo_along}
the FM-velocity along the trajectory \eqref{FM-velo-along}, where the Gaussian noise 
$z \sim \mathcal{N}(0,1)$ is just replaced by a uniform noise $u \sim \mathcal{U}(-1,1)$.

%------------------------------------------
\section{IMM with Quantile Interpolants}\label{app:IMM}
%-----------------------------------------
In this section, we want to demonstrate how the IMM framework proposed in \cite{IMM2025} can be realized by our quantile approach.

{ The general idea of consistency models is to predict the jumps from a process $Z_t$ to the target $X_0$, while factoring in the \emph{consistency} of the trajectory of $Z_t$ via $Z_s$, $0 < s < t$.
In FM, this consistency of the flow is usually neglected as only single points on the FM paths are sampled. Also, consistency models as one-step or multistep samplers usually are in no need of velocity fields.}

Note that in the following -- for notational simplicity -- we consider the one-dimensional case $X_0, Z_t \in \R$ where we can employ quantile functions. By combining the 1D components into a multivariate model
$\mathbf{X}_0 = (X_0^1,...,X_0^d), \, \mathbf{Z}_t = (Z_t^1,...,Z_t^d)$, the results of this chapter trivially extend to $\R^d$.

Recall our definition of the \emph{quantile process}
\begin{equation}\label{quantile-process_A}
    Z_t = f(t) X_0 + Q_{g(t)}(U), \quad U \sim \mathcal U(0,1),\, t \in [0,1].
\end{equation}
and the \emph{quantile interpolants}
\begin{equation}\label{quantile-interpolant_A}
    I_{s,t}(x, y) = f(s) x + Q_{g(s)}\big(R_{g(t)}(y - f(t)x) \big), \quad s,t \in [0,1].
\end{equation}
Note that by the assumptions \eqref{schedules} it holds $Z_0 = X_0$ and $Z_1 = Q_1(U)$.

By the following remark, our quantile interpolants generalize the interpolants used in Denoising Diffusion Implicit Models (DDIM).
\begin{remark}[Relation to DDIM] \label{rem:ddim}
The interpolants used in Denoising Diffusion Implicit Models (DDIMs) \cite{DDIM2020} are given by
\begin{equation}\label{DDIM_interpolant}
    {\rm DDIM}_{s,t}(x,y)   \coloneqq \Big(\alpha_s - \frac{\sigma_s}{\sigma_t}\alpha_t \Big) x + \frac{\sigma_s}{\sigma_t} y.
\end{equation}
Now let $f(t) \coloneqq 1-t$,  $g(t) \coloneqq t^2$ and let $Q_t$ be the quantile of the law of a standard Brownian motion $W_t$.

First we obtain
\begin{equation}
   Q_{g(t)}(p) = Q_{t^2}(p) = Q_{\mathcal{N}(0, t^2)}(p)
   = t\sqrt{2} \, {\rm erf}^{-1} (2p-1)
   = t \, Q_{\mathcal{N}(0, 1)}(p), \quad p\in (0,1),
\end{equation}
with the \emph{error function} ${\rm erf}$.
Hence, \eqref{quantile-process_A} exactly becomes (not only in distribution)
\begin{equation}
    Z_t = (1-t) Y_0 + t \, Q_{\mathcal{N}(0, 1)}(U)
    = (1-t) Y_0 + t Z,
\end{equation}
where $Z \coloneqq Q_{\mathcal{N}(0, 1)}(U) \sim \mathcal{N}(0, 1)$, i.e.\ the components of \eqref{DDIM_process} with the choice $\alpha_t = 1-t$, $\sigma_t =t$. 
Furthermore, since $R_{t^2}(z) = R_{\mathcal{N}(0, t^2)}(z) = \frac{1}{2}(1 + {\rm erf}\left( \frac{z}{t\sqrt{2}} \right))$, the quantile interpolant \eqref{quantile-interpolant} reads as
\begin{align}
   I_{s,t}(x, y) &= (1-s) x + s \sqrt{2} \, {\rm erf}^{-1} \left( {\rm erf} \left(\frac{y - (1-t)x}{t \sqrt{2}} \right) \right) 
   = (1-s) x + \frac{s}{t} (y - (1-t)x)\\
   &= ( (1-s) - \frac{s}{t}(1-t)) x + \frac{s}{t}y. \quad 
\end{align}
which is exactly  ${\rm DDIM}_{s,t}(x,y)$  in \eqref{DDIM_interpolant} with $\alpha_t = f(t)$ and $\sigma_t^2 = g(t)$. $\diamond$
\end{remark}

Exactly as the DDIM interpolants, our quantile interpolants \eqref{quantile-interpolant_A} satisfy the following crucial interpolation properties.

\begin{proposition}[a.k.a Proposition \ref{lem:consistent}] \label{lem:consistent_A}
For all $ x, y \in \mathbb{R}$ and all $s,r,t \in [0,1]$, it holds
\begin{align}\label{cons-left-right}
   I_{0,t}(x, y) = x, \quad I_{t,t}(x, y) =  y,
\end{align}
and 
\begin{align}\label{cons-int-int}
    I_{s,r}(x,  I_{r,t}(x,y)) 
    = I_{s,t}(x,y) .
\end{align}
Furthermore, inserting the quantile process \eqref{quantile-process} yields
\begin{align}\label{interpolation2}
    I_{s,t}(Z_0, Z_t) = Z_s.
\end{align}
\end{proposition}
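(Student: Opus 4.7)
The plan is to verify the three statements \eqref{cons-left-right}, \eqref{cons-int-int}, and \eqref{interpolation2} directly from the definition of the quantile interpolant $I_{s,t}(x,y) = f(s)x + Q_{g(s)}(R_{g(t)}(y - f(t)x))$, exploiting the schedule conditions in \eqref{schedules}, the boundary condition $Q_0 \equiv 0$, and the fact that $R_{g(t)}$ is the inverse of $Q_{g(t)}$ on its image.

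For the boundary identities in \eqref{cons-left-right}, I would plug in $s=0$ and use $f(0)=1$ together with $Q_{g(0)} = Q_0 \equiv 0$ to obtain $I_{0,t}(x,y) = x$. For $s=t$, the composition $Q_{g(t)} \circ R_{g(t)}$ acts as the identity on the image of $Q_{g(t)}$; since along the process we have $y - f(t)x = Q_{g(t)}(U)$ (which lies in that image), this collapses to $f(t)x + (y - f(t)x) = y$. (Strictly speaking, \eqref{cons-left-right} as stated for arbitrary $y\in\R$ should be read on the relevant image $y - f(t)x \in Q_{g(t)}((0,1))$, which is where the interpolant is actually used.)

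The composition identity \eqref{cons-int-int} is the crux and reduces to the cancellation $R_{g(r)} \circ Q_{g(r)} = \mathrm{id}$ on $(0,1)$. Setting $w := I_{r,t}(x,y) = f(r)x + Q_{g(r)}(R_{g(t)}(y - f(t)x))$, I read off $w - f(r)x = Q_{g(r)}(R_{g(t)}(y - f(t)x))$ and substitute into
\begin{align}
I_{s,r}(x,w) &= f(s)x + Q_{g(s)}\bigl(R_{g(r)}(w - f(r)x)\bigr) \\
&= f(s)x + Q_{g(s)}\Bigl(R_{g(r)}\bigl(Q_{g(r)}\bigl(R_{g(t)}(y - f(t)x)\bigr)\bigr)\Bigr) \\
&= f(s)x + Q_{g(s)}\bigl(R_{g(t)}(y - f(t)x)\bigr) = I_{s,t}(x,y),
\end{align}
where the penultimate step uses that $R_{g(t)}(y - f(t)x) \in (0,1)$ so the inverse identity applies.

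Finally, for \eqref{interpolation2}, since $Z_0 = f(0)X_0 + Q_{g(0)}(U) = X_0$, I substitute $Z_t - f(t)X_0 = Q_{g(t)}(U)$ and apply the same cancellation:
\begin{align}
I_{s,t}(Z_0, Z_t) = f(s)X_0 + Q_{g(s)}\bigl(R_{g(t)}(Q_{g(t)}(U))\bigr) = f(s)X_0 + Q_{g(s)}(U) = Z_s.
\end{align}
The main (mild) obstacle is purely formal: I have to justify that $R_{g(r)} \circ Q_{g(r)} = \mathrm{id}_{(0,1)}$ rather than only the converse, and that the quantile arguments lie where this inversion is valid. This is already granted by the standing assumption in the paper that each $Q_t$ is invertible on its image with inverse $R_t$, so no measure-theoretic subtleties about plateaus of the CDF need to be addressed.
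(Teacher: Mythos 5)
Your proof is correct and follows essentially the same route as the paper's: verify \eqref{cons-left-right} from $f(0)=1$, $Q_0\equiv 0$, and $Q_{g(t)}\circ R_{g(t)}=\mathrm{id}$; derive \eqref{cons-int-int} by the inner cancellation $R_{g(r)}\circ Q_{g(r)}=\mathrm{id}$; and obtain \eqref{interpolation2} by substituting $Z_t-f(t)Z_0=Q_{g(t)}(U)$ and cancelling. Your extra care about where the interpolant argument must lie (in the image of $Q_{g(t)}$, resp.\ in $(0,1)$) is a reasonable and slightly more scrupulous reading of the paper's standing invertibility assumption, but it does not change the argument in substance.
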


\begin{proof}
By  assumptions it holds 
\begin{equation*}
   I_{0,t}(x, y) = f(0) x + Q_{g(0)}\big(R_{g(t)}(y - f(t)x) \big) = x,
\end{equation*}
and 
\begin{equation*}
   I_{t,t}(x, y) = f(t) x + Q_{g(t)}\big(R_{g(t)}(y - f(t)x) \big) = y.
\end{equation*}
Furthermore, it holds the interpolation/consistency property
\begin{align}\label{interpolation1}
    I_{s,r}(x,  I_{r,t}(x,y)) 
    &= f(s) x + Q_{g(s)}\big(R_{g(r)}( I_{r,t}(x,y) - f(r)x) \big) \notag\\
    &= f(s) x + Q_{g(s)}\big( \cancel{R_{g(r)}} ( \cancel{f(r)x} + \cancel{Q_{g(r)}}\big(R_{g(t)}(y - f(t)x) \big) - \cancel{f(r)x}) \big)
    \notag\\
    &= f(s) x + Q_{g(s)}\big( R_{g(t)}(y - f(t)x) \big) \notag\\
    &= I_{s,t}(x,y)
\end{align}

for all $x, y \in \mathbb{R}$.
Also note that inserting the random variables $Z_0, Z_t$ yields
\begin{align}
    I_{s,t}(Z_0, Z_t) &= f(s) Z_0 + Q_{g(s)}\big(R_{g(t)} \big(Z_t - f(t)Z_0 \big) \big) \notag\\
    &= f(s) Z_0 + Q_{g(s)}\big( U \big) \notag\\
    &= Z_s.
\end{align}
This finishes the proof.
\end{proof}

Proposition \ref{lem:consistent_A} represents the key observation which allows us to utilize our quantile process \eqref{quantile-process_A} in the IMM framework the same way as \cite{IMM2025} employ the DDIM interpolants \eqref{DDIM_interpolant}:

For this, let us now recall the basic idea of inductive moment matching and the corresponding loss functions.
Let us distinguish between real numbers written in small letters ($x_0, u, z_t \in \R$) and random variables written with capital letters ($X_0, U, Z_t,\ldots$). 
We assume that the probability distributions have densities:
{\small{
\begin{center}
\begin{tabular}{l|l|l|l|l}
${\rm Law}(X_0)$&${\rm Law}(Z_t)$&${\rm Law}(Z_s | X_0 = x_0, Z_t = z_t)$& ${\rm Law}(Z_t | X_0 = x_0, U = u)$&${\rm Law}(X_0| Z_t = z_t)$
\\
\hline
$\rho_{0}(x_0)$
&$\rho_{t}(z_t)$
&$\rho_{s|0, t}(z_s|x_0, z_t)$
&$\rho_{t|0, 1}(z_t|x_0, u)$
&$\rho_{0|t}(x_0|z_t)$
\end{tabular}
\end{center}
}}
Note that by \eqref{interpolation2} we have $\rho_{s|0, t}(z_s|x_0, z_t) = {\rm Law}(I_{s,t}(x_0, z_t))(z_s) = \delta(z_s - I_{s,t}(x_0, z_t))$, hence sampling from $\rho_{s|0, t}(z_s|x_0, z_t)$ is just applying $I_{s,t}(x_0, z_t)$.
Similarly, sampling from $\rho_{t|0, 1}(z_t|x_0, u)$ is just evaluating $I_{t,1}(x_0, Q_1(u))$.

The following proposition follows directly 
from Proposition \ref{lem:consistent_A} as in \cite{IMM2025}. It is essential for deriving the appropriate loss functions.

\begin{proposition}\label{lem:self-consistency_A}
For all $0 \le s \le r \le t\le 1$, the quantile interpolant \eqref{quantile-interpolant_A} is self-consistent, i.e.
\begin{equation}
\rho_{s|0, t}(z_s|x_0, z_t)
= \int_{\R} \rho_{s|0, r}(z_s|x_0, z_r) \, \rho_{r|0, t}(z_r|x_0, z_t) \d z_r , 
\end{equation}
and the quantile process \eqref{quantile-process_A} is marginal preserving, i.e.
    \begin{equation}\label{eq:marginal-preserv}
        \rho_{s}(z_s) 
        = 
        \mathbb E_{z_t \sim \rho_{t}, x_0 \sim \rho_{0|t}(\cdot|z_t)}
        \left[ \rho_{s|0, t}(z_s|x_0, z_t) \right].
    \end{equation}    
\end{proposition}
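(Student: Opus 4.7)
The plan is to observe that both claims reduce to the fact that, conditionally on $X_0 = x_0$, the joint quantile process is driven by a single $U \sim \mathcal U(0,1)$, so that $Z_s = I_{s,t}(x_0, Z_t)$ holds \emph{deterministically} by the interpolation identity \eqref{interpolation2}. Consequently the conditional density is a Dirac mass,
\begin{equation*}
    \rho_{s|0, t}(z_s|x_0, z_t) = \delta\bigl(z_s - I_{s,t}(x_0, z_t)\bigr),
\end{equation*}
and the same representation holds for $\rho_{s|0,r}$ and $\rho_{r|0,t}$ appearing in the self-consistency statement. With this in hand, each part becomes a short manipulation.

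For self-consistency, I would substitute the Dirac representations into the right-hand side and integrate $z_r$ against $\delta(z_r - I_{r,t}(x_0, z_t))$. Sifting yields
\begin{equation*}
    \int_\R \delta\bigl(z_s - I_{s,r}(x_0, z_r)\bigr)\, \delta\bigl(z_r - I_{r,t}(x_0, z_t)\bigr)\, \d z_r = \delta\bigl(z_s - I_{s,r}(x_0, I_{r,t}(x_0, z_t))\bigr),
\end{equation*}
and the composition identity \eqref{cons-int-int} from Proposition \ref{lem:consistent_A} collapses the inner argument to $I_{s,t}(x_0, z_t)$, giving exactly $\rho_{s|0, t}(z_s|x_0, z_t)$.

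For marginal preservation, once the conditional $\rho_{s|0,t}$ is identified, the claim is just the law of total probability. Applying Bayes' rule $\rho_{0|t}(x_0|z_t)\,\rho_t(z_t) = \rho_{0,t}(x_0, z_t)$, the right-hand side of \eqref{eq:marginal-preserv} becomes
\begin{equation*}
    \iint \rho_{s|0, t}(z_s|x_0, z_t)\, \rho_{0,t}(x_0, z_t)\, \d x_0\, \d z_t = \iint \rho_{s,0,t}(z_s, x_0, z_t)\, \d x_0\, \d z_t = \rho_s(z_s),
\end{equation*}
using the definition of the conditional density in the middle step and marginalization in the last.

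The only delicate point is justifying the Dirac representation of $\rho_{s|0,t}$, which requires that $Q_{g(t)}$ be invertible on its image with inverse $R_{g(t)}$ so that $U = R_{g(t)}(Z_t - f(t) X_0)$ almost surely; this is the standing assumption made in Section \ref{sec:qprocess}. The degenerate case $g(t) = 0$, where $Q_{g(t)} \equiv 0$, is trivial since then $Z_t = f(t) X_0$ and every conditional collapses. No further subtleties arise, and the proof can be written in a few lines.
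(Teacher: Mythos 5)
Your proof is correct and takes the same route the paper intends: the paper merely asserts the proposition "follows directly from Proposition \ref{lem:consistent_A}" after noting the Dirac representation $\rho_{s|0,t}(z_s|x_0,z_t)=\delta\bigl(z_s-I_{s,t}(x_0,z_t)\bigr)$, and your argument fills in exactly the sifting/composition and Bayes/marginalization steps that this assertion hides. You also correctly flag the invertibility of $Q_{g(t)}$ (standing assumption in Section \ref{sec:qprocess}) and the degenerate $g(t)=0$ case as the only points needing care.
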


\paragraph{Learning.} The conditional probability  $\rho_{0|t}(\cdot|z_t)$ is now approximated by a network $p^\theta_{s,t,z_t}$ where the parameter $s$ describes the dependence on $\rho_{s}$ such that
\begin{equation}\label{target-objective}
  \rho_{s}  \approx  
        \mathbb E_{z_t \sim \rho_{t}, x_0 \sim p^\theta_{s,t,z_t}}
        \left[ \rho_{s|0, t}(\cdot|x_0, z_t) \right] =: p^\theta(s,t).
        \end{equation}  
Then it is proposed in \cite{IMM2025} to minimize the so-called \emph{na\"ive objective}
\begin{equation}\label{naive-objective}
   \mathcal{L}_{\rm naive}(\theta) \coloneqq \mathbb{E}_{s,t} \big[ D(\rho_{s}, p^\theta(s,t) \big],
\end{equation}
with an appropriate metric $D$, e.g. MMD. 
The procedure is now as follows: starting in a sample $x_0$ from $X_0$, we can sample $z_s, z_t$ from $Z_s, Z_t$ by \eqref{quantile-process_A}, respectively; then given $z_t$ we sample $\tilde x_0$ from $p^\theta_{s,t,z_t}$, and finally we can evaluate $\tilde z_s=I(\tilde x_0,z_t)$ from \eqref{interpolation2}, which is then compared with $z_s$.
%---------------------------------------------------
\paragraph{Inference.} The following iterative 
multi-step sampling can be applied: for chosen decreasing 
$t_k \in (0,1]$, $k=0,\ldots,T$ with $t_0 = 1$,
starting with 
$x_0^{(0)} \sim p_{0,1,z_1}^\theta$, we compute
$$z_{t_k} = I_{t_k, t_{k-1}}\left(x_0^{(k-1)},z_{t_{k-1}} \right), 
\quad 
x_0^{(k)} \sim p_{0,t_k,z_{t_k}}^\theta, \quad k=1,\ldots,T.$$

Although for marginal-preserving interpolants, a minimizer of  $\mathcal{L}_{\rm naive}$  exists with minimum $0$, the authors of \cite{IMM2025} object that directly optimizing \eqref{naive-objective} faces practical difficulties when $t$ is far away from $s$. Instead, they propose to apply the following ``inductive bootstrapping'' technique:

%-----------------------------------------
\paragraph{Bootstrapping.}
Instead of minimizing \eqref{naive-objective}, we consider the \emph{general objective}
\begin{equation}\label{general-objective}
   \mathcal{L}_{\rm general}(\theta) \coloneqq \mathbb{E}_{s,t} \left[ w(s,t) {\rm MMD}^2_{ K}(p^{\theta_{n-1}}(s,r), p^{\theta_n}(s,t) )\right],
\end{equation}
with a weighting function $w(s,t)$ to be chosen. 
The kernel ${ K}$ of the squared MMD distance can be chosen as e.g.\ the (time-dependent) Laplace kernel. Importantly, the value $r$ is chosen to be a function $r = r_{s,t} \in [s,t]$ being "close to $t$" and fulfilling a suitable monotonicity property. 

Let us assume the simplest case $r_{s,t} \coloneqq \max\{s, t-\varepsilon\}$ with a small fixed $\varepsilon > 0$ and hereby demonstrate the bootstrapping technique: Fix $s \in [0,1]$. Then, it holds for all $t \in [s, s+\varepsilon]$ that $r_{s,s} = s$. By the definition \eqref{target-objective} and property \eqref{cons-left-right}, it holds (independently of $\theta$) that $p^\theta(s,s)(z_s) = \rho_s(z_s)$. Hence, minimizing \eqref{general-objective} in the first step $n=1$ yields 
\begin{equation}
0 = {\rm MMD}^2_{ K} (p^{\theta_{0}}(s,s), p^{\theta_1}(s,t_1)) = {\rm MMD}^2_{ K}(\rho_s, p^{\theta_1}(s,t_1)) \quad \text{for all } t_1 \in [s, s+\varepsilon].
\end{equation}
In the second step $n=2$, it holds for all $t_2 \in [s, s+ 2\varepsilon]$ that $r_{s,t_2} \in [s, s+\varepsilon]$. Hence, minimizing \eqref{general-objective} in the second step yields, together with the first step,
\begin{equation}
0 = {\rm MMD}^2_{ K}(p^{\theta_{1}}(s, r_{s,t_2}),~ p^{\theta_2}(s,t_2) ) = {\rm MMD}^2_{ K}(\rho_s, p^{\theta_2}(s,t_2)) \quad \text{for all } t_2 \in [s, s+ 2\varepsilon].
\end{equation}
Thus, for the number of steps $n \to \infty$, it holds $0 = {\rm MMD}^2_{ K}(\rho_s, p^{\theta_n}(s, t_n))$ even for the entire interval $t_n \in [s, 1]$. Hence, minimizing the general objective \eqref{general-objective} with a large number of steps eventually minimizes the na\"ive objective \eqref{naive-objective}, see Theorem 1 \cite{IMM2025} for more details.

%------------------------------------------
%------------------------------------------

\section{Radially Symmetric Processes}\label{sec:radial}

While our main construction relies on independent one-dimensional processes across coordinates (Section~\ref{sec:dD}), we can alternatively build multi-dimensional processes using radial symmetry. This approach, sometimes called \emph{polar factorization}, decomposes a random vector as the product of its radius and direction.

Let $x \in \mathbb{R}^d$, write $r = \|x\|$ and $\omega = x/\|x\| \in \mathbb{S}^{d-1}$. A vector field is called \emph{radial} if
\begin{equation}
v_t(x) = v_t^R(r)\,\omega,
\end{equation}
where $v_t^R : [0,\infty) \to \mathbb{R}$ is its radial component. Likewise, a time-dependent probability density $\rho_t\colon\mathbb{R}^d\to\mathbb{R}$ is \emph{radially symmetric} if there exists $\bar\rho_t\colon[0,\infty)\to\mathbb{R}$ such that
\begin{equation}
\rho_t(x) = \bar\rho_t\bigl(\|x\|\bigr)
\quad\text{for all }x\in\mathbb{R}^d.
\end{equation}

Define the radial mass density
\begin{equation}\label{fR_def}
f_R(t,r)
:= 
\int_{\mathbb{S}^{d-1}} \rho_t(r \omega)\, r^{d-1}\, \mathrm{d}\sigma(\omega)
= 
|\mathbb{S}^{d-1}|\, r^{d-1}\, \bar\rho_t(r),
\end{equation}
where $\sigma$ denotes the surface measure on $\mathbb{S}^{d-1}$. If $\mathbf{X} \in \mathbb{R}^d$ has radially symmetric density $\rho$, then $f_R$ is precisely the probability density of the radius $\|\mathbf{X}\|$.

\begin{lemma}\label{lem:radial_continuity}
Let $\rho_t$ be a smooth, radially symmetric density on $\mathbb{R}^d$, and let
\begin{equation}
v_t(x) = v_t^R(r)\, \frac{x}{r}
\end{equation}
be a smooth radial velocity field. Then, the pair $(\rho_t, v_t)$ satisfies the $d$-dimensional continuity equation \eqref{eq:ce}
if and only if $(f_R(t,\cdot), v_t^R)$ fulfills the one-dimensional flux equation for $r > 0$:
\begin{equation}\label{1d_flux}
\partial_t f_R(t,r) + \partial_r\bigl[f_R(t,r)\, v_t^R(r)\bigr] = 0,
\end{equation}
where $f_R$ is defined in \eqref{fR_def}.
\end{lemma}

\begin{proof}
We compute the divergence in Cartesian coordinates:
\begin{equation}
\nabla_x \!\cdot \bigl[\bar\rho_t(r)\, v_t^R(r)\, \tfrac{x}{r} \bigr]
= \sum_{i=1}^d \partial_{x_i} \Bigl( \bar\rho_t(r)\, v_t^R(r)\, \tfrac{x_i}{r} \Bigr).
\end{equation}
Since $\bar\rho_t$ and $v_t^R$ depend only on $r = \|x\|$, and $\partial_{x_i} r = x_i/r$, the product rule gives
\begin{equation}
\partial_{x_i} \Bigl( \bar\rho_t\, v_t^R\, \tfrac{x_i}{r} \Bigr)
= \partial_r(\bar\rho_t\, v_t^R)\, \frac{x_i^2}{r^2} + \bar\rho_t\, v_t^R\, \frac{1}{r} - \bar\rho_t\, v_t^R\, \frac{x_i^2}{r^3}.
\end{equation}
Summing over $i$ and using $\sum_{i=1}^d x_i^2 = r^2$ yields
\begin{equation}
\nabla_x \!\cdot (\bar\rho_t\, v_t)
= \partial_r(\bar\rho_t\, v_t^R) + \frac{d-1}{r}\, \bar\rho_t\, v_t^R
= r^{-(d-1)}\, \partial_r \bigl( r^{d-1}\, \bar\rho_t\, v_t^R \bigr).
\end{equation}
Hence the continuity equation \eqref{eq:ce} is equivalent to
\begin{equation}
\partial_t \bar\rho_t + \frac{1}{r^{d-1}}\, \partial_r \bigl( r^{d-1} \bar\rho_t\, v_t^R \bigr) = 0.
\end{equation}
Multiplying by $|\mathbb{S}^{d-1}|\, r^{d-1}$ shows the equivalence to \eqref{1d_flux}.
\end{proof}

\begin{definition}[Radial Processes]\label{def:radial_process}
Let $(R_t)_t$ be a non-negative one-dimensional stochastic process with $R_0 = 0$, probability density $p_t^{R}$, and velocity field $v_t^{R}$ satisfying the one-dimensional continuity equation \eqref{eq:ce_1d}. Let $\mathbf{U} \sim \mathrm{Unif}(\mathbb{S}^{d-1})$ be uniformly distributed on the unit sphere, independent of $R_t$. The \emph{radial process w.r.t.}\ $(R_t)_t$ in $\mathbb{R}^d$ is defined by the polar factorization
\begin{equation}
\mathbf{Y}_t = R_t\, \mathbf{U}.
\end{equation}
Since $\|\mathbf{Y}_t\| = |R_t|$, we have $f_R(t,r) = p_t^{R}(r)$ for all $r > 0$.

%The radius $R_t$ has probability density $p_t^R$, and the radial velocity component is
%\begin{equation}
%v_t^r(r) = v_t^{R}(r),
%\end{equation}
%where $v_t^{R}$ is the one-dimensional velocity field of $(R_t)_t$.
\end{definition}
The denotation of the velocity field of the process $(R_t)_t$ as $v_t^{R}$ is justified by the following proposition.

\begin{proposition}[Radial Continuity]\label{prop:radial_cont}
Let $\mu_t$ denote the law of $\mathbf{Y}_t$ in Definition~\ref{def:radial_process}, with radially symmetric density $\rho_t(x) = \bar\rho_t(\|x\|)$. Let $p_t^{R}$ and $v_t^{R}$ be given as in Definition~\ref{def:radial_process}.

Then, the velocity field
\begin{equation}
v_t(x) \coloneqq v_t^{R}(\|x\|)\, \frac{x}{\|x\|}
\end{equation}
satisfies the $d$-dimensional continuity equation \eqref{eq:ce} together with $\mu_t$. In other words, the radial process $\mathbf{Y}_t$ admits a radial velocity field $v_t$ with radial component $v_t^R$ given by $R_t$.
\end{proposition}

\begin{proof}
By Lemma~\ref{lem:radial_continuity}, it suffices to verify \eqref{1d_flux}. But we have 
\begin{equation}
\partial_t f_R(t,r) + \partial_r[f_R(t,r)\, v_t^R(r)]
= \bigl[ \partial_t p_t^{R}(r) + \partial_r (p_t^{R}\, v_t^{R})(r) \bigr]
= 0,
\end{equation}
since $(p_t^{R}, v_t^{R})$ satisfies the one-dimensional continuity equation \eqref{eq:ce_1d} by definition.
\end{proof}

\begin{remark}[Connection to Mean-Reverting Processes and Learning]
The radial construction integrates with the mean-reverting framework from Section~\ref{sec:cooking}. Given $\mathbf{X}_0 \sim \mu_0$ and scheduling functions $f,g$ satisfying \eqref{schedules}, we define
\begin{equation}
\mathbf{X}_t = f(t)\,\mathbf{X}_0 + R_{g(t)}\, \mathbf{U},
\end{equation}
where $R_t$ is a radial process and $\mathbf{U} \sim \mathrm{Unif}(\mathbb{S}^{d-1})$ is independent of $\mathbf{X}_0$. For instance, using the Kac process as the underlying one-dimensional radial process yields a radially symmetric alternative to the componentwise construction. 

For the scaled latent case, choosing $f(t) = 1-t$, $g(t) = t$, and $R_t = t\, R$ where $R \sim \chi_d$ follows the chi distribution with $d$ degrees of freedom (i.e., $R = \|\mathbf{Z}\|$ for $\mathbf{Z} \sim \mathcal{N}(0,I_d)$), we recover the standard Gaussian latent $\mathbf{Y}_1 \sim \mathcal{N}(0,I_d)$.

The radial construction naturally extends to the quantile learning framework of Section~\ref{sec:1d}. We can parametrize the radial distribution via its quantile function $Q_R : (0,1) \to [0,\infty)$, where $R_t = t\, Q_R(U)$ with $U \sim \mathcal{U}(0,1)$. Following Section~\ref{sec:learning_q}, we learn $Q_R$, which can also be combined with velocity field training analogously to \eqref{eq:our loss}. This provides a data-adapted isotropic latent, in contrast to the fully independent coordinate-wise adaptation of our main method.
\end{remark}

%------------------------------------------
%------------------------------------------
{
\section{Adapting Noise to Data}\label{app:Learn Noise}
}

\subsection{Counterexample: Marginal Product}\label{example:Not Marginals}
For the measure
\[
\mu=\tfrac{1}{2}\delta_{(1,1)}+\tfrac{1}{2}\delta_{(-1,-1)}\in\mathcal P_2(\R^2),\qquad
\mu_{\mathrm{marg}}=\Bigl(\tfrac{1}{2}\delta_{-1}+\tfrac{1}{2}\delta_{1}\Bigr)\otimes\Bigl(\tfrac{1}{2}\delta_{-1}+\tfrac{1}{2}\delta_{1}\Bigr),
\]
one has $W_2^2(\mu,\mu_{\mathrm{marg}})=2$, whereas for
\[
\nu_\alpha=\Bigl(\tfrac{1}{2}\delta_{-\alpha}+\tfrac{1}{2}\delta_{\alpha}\Bigr)\otimes\Bigl(\tfrac{1}{2}\delta_{-\alpha}+\tfrac{1}{2}\delta_{\alpha}\Bigr)
\]
it holds $W_2^2(\mu,\nu_\alpha)=2\bigl(1-\alpha+\alpha^2\bigr) = 1.5$ for $\alpha=0.5$. {Thus the $W_2$–closest independent latent may contract or expand the marginals to partially account for correlations it cannot represent.}

\subsection{Details on the Architecture of the Learned Quantiles}\label{appendix:learned-quantile}
{We implement each one–dimensional quantile function with rational–quadratic splines (RQS) \cite{GD1982,durkan2019neural}. We explored several ways to map \(u\in(0,1)\) into the spline input; the two variants below consistently performed well and are used in our experiments. For every coordinate \(i\), we write
\[
Q^i_\phi(u) \;=\; S^i_\phi\!\bigl(\psi(u)\bigr), \qquad u\in(0,1),
\]
where \(S^i_\phi:\mathbb{R}\!\to\!\mathbb{R}\) is a strictly increasing RQS with an interior knot interval \((-B,B)\) (with \(K\) bins) and linear tails outside \(\pm B\) that are \(C^1\)-matched at the boundaries. The two settings differ only in the "activation" \(\psi\):
\[
\text{(A) Logit: }\;\psi(u)=\operatorname{logit}(u), 
\qquad 
\text{(B) Affine: }\;\psi(u)=\alpha_B(u)=B(2u-1).
\]
Thus, both (A) and (B) share exactly the same spline \(S^i_\phi\) architecture—including the bounded interior \((-B,B)\) and slope-matched linear tails—and differ only in how \((0,1)\) is mapped into the spline's input. In (A), \(\psi(u)\in\mathbb{R}\) and the linear tails of \(S^i_\phi\) are used whenever \(|\operatorname{logit}(u)|>B\); in (B), \(\psi(u)\in(-B,B)\) so the forward pass never touches the tails (they remain important for invertibility and out-of-range evaluation).

\paragraph{Parameterization and Constraints.}
Each spline \(S^i_\phi\) is parameterized by raw bin widths, heights, and knot slopes. We pass these raw parameters through \(\operatorname{softplus}\), normalize widths and heights to sum to one (scaled to the domain span \(2B\) and the learned range span, respectively), and add a small constant \(s_{\min}>0\) to each slope to enforce a positive lower bound. The linear tail slopes (left/right) are learned in the same way and are chosen so that both function value and slope agree at \(\pm B\). These constraints guarantee strict monotonicity, hence \(Q^i_\phi\) is strictly increasing on \((0,1)\) under both (A) and (B). Closed-form formulas for the spline pieces and their (log-)derivatives are available; by the chain rule,
\[
\frac{d}{du}Q^i_\phi(u)=S^{i\,\prime}_\phi\!\bigl(\psi(u)\bigr)\,\psi'(u),
\quad\text{with}\quad
\psi'(u)=\begin{cases}
\frac{1}{u(1-u)} & \text{for (A)},\\[4pt]
2B & \text{for (B)}.
\end{cases}
\]

\paragraph{Per-Component Affine Wrapper (Scale/Bias).}
After computing \(Q^i_\phi(u)\), we add a tiny affine head per coordinate:
\[
\tilde Q^i_\phi(u) \;=\; s_i\, Q^i_\phi(u) + b_i,
\qquad
s_i \;=\; \operatorname{softplus}\!\bigl(\log \alpha_i\bigr), \quad
b_i \;=\; \beta_i,
\]
where \(\alpha_i>0\) and \(\beta_i\in\mathbb{R}\) are learned per component. Using \(\operatorname{softplus}(\log \alpha_i)\) keeps \(s_i>0\) with a convenient dynamic range; this preserves monotonicity and adds only one scale and one bias parameter per component.}

{\paragraph{Regularization via Expected Negative Log–Jacobian}\label{app:regularization}
Let \(Q_\phi:(0,1)^d\!\to\!\mathbb{R}^d\) be the componentwise map with affine heads, \(Q_\phi(u)=\bigl(\tilde Q^1_\phi(u_1),\dots,\tilde Q^d_\phi(u_d)\bigr)\).
Since the construction is per–coordinate, the Jacobian is diagonal with entries \(\partial_{u_i}\tilde Q^i_\phi(u_i)>0\).
We regularize with the expected negative log–determinant of the Jacobian:
\begin{align}\label{eq: log det reg}
\mathcal{L}_{\mathrm{reg}}(\phi)
&=\lambda_{\mathrm{reg}}\,
\mathbb{E}_{u\sim p_U}\!\bigl[
-\log\!\det J_{Q_\phi}(u)
\bigr] \\
&=\lambda_{\mathrm{reg}}\,
\mathbb{E}_{u\sim p_U}\!\Bigl[
-\sum_{i=1}^d \log \bigl(\partial_{u_i}\tilde Q^i_\phi(u_i)\bigr)
\Bigr].
\end{align}
Here \(p_U=\mathrm{Unif}\bigl((0,1)^d\bigr)\).
In practice, we evaluate the log–derivatives in closed form.}

{\paragraph{Computational Efficiency and Scalability.}
The quantile architecture is highly efficient in both computation and memory. Each component $i$ requires only $\mathcal{O}(K)$ parameters for the RQS (where $K$ is the number of bins) plus two affine parameters, totaling roughly $4K+2$ parameters per dimension for typical implementations. For a $d$-dimensional problem, this yields $\mathcal{O}(d\cdot K)$ total parameters---negligible compared to modern UNet architectures which often contain millions of parameters. Forward evaluation of $Q_\phi(u)$ involves $d$ independent spline evaluations operating in parallel. The diagonal Jacobian structure means that both the determinant and its gradient reduce to $d$ independent scalar derivatives with analytical closed-form expressions which are fully parallelizable, avoiding expensive automatic differentiation of matrix operations. 

In practice, as noted in Section~\ref{sec:numerics}, the computational overhead (on CIFAR10) during joint training is approximately {$2.7\%$ and drops to $0.5\%$} after freezing the quantile. Furthermore we only used 300k parameters for the quantile in contrast to 35M for the U-Net, making the approach highly scalable to high-dimensional problems. The strict monotonicity constraints and bounded parameterization (via softplus and normalization) ensure numerical stability throughout training, and we observed no instabilities across our experiments spanning dimensions from $d=2$ to $d=3072$ (CIFAR-10).}

\section{Experiment Details}
\subsection{Toy Target Distributions}\label{sec:toy-targets}

\begin{figure}[H]
    \centering
    \includegraphics[width = 0.15\textwidth]{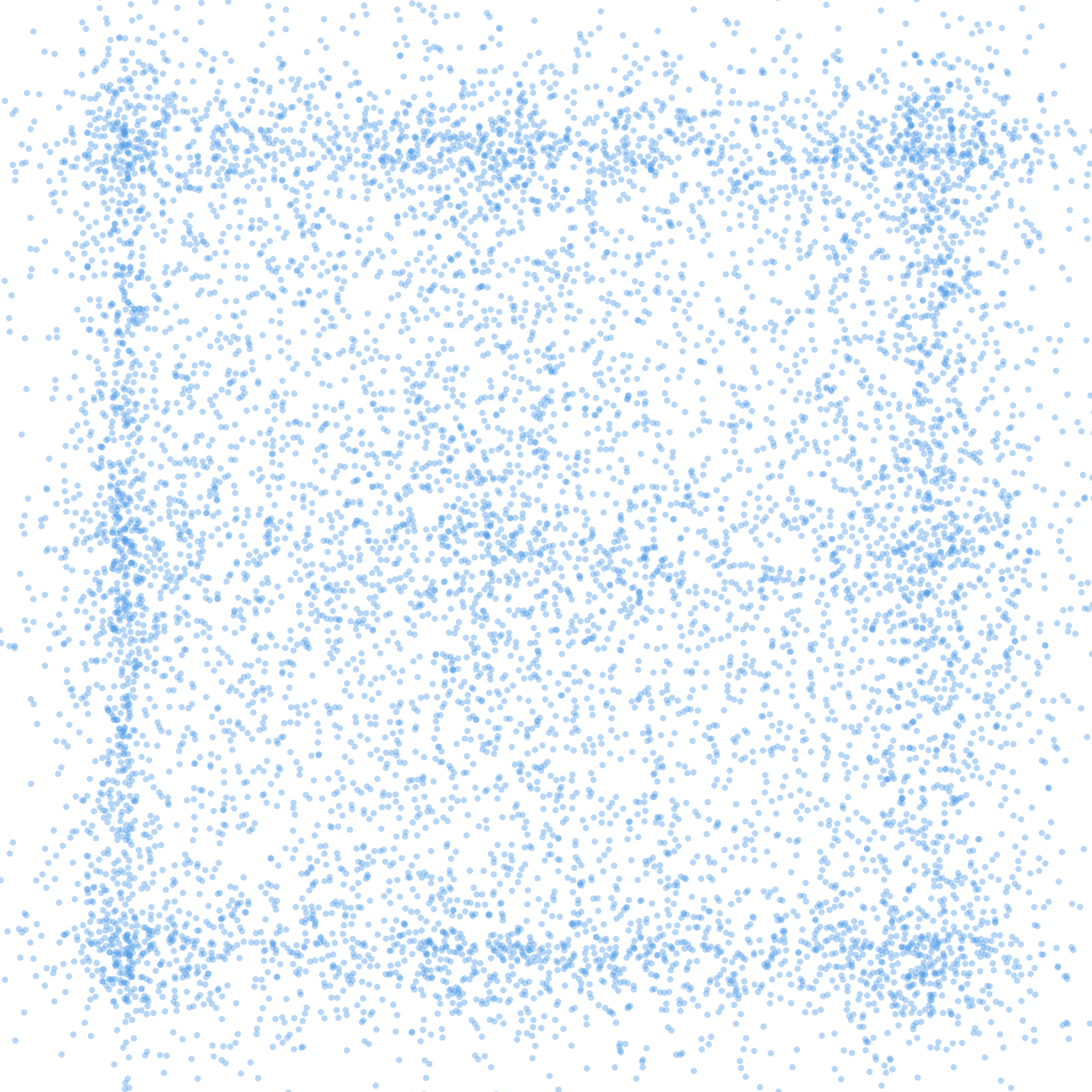}
    \includegraphics[width = 0.15\textwidth]{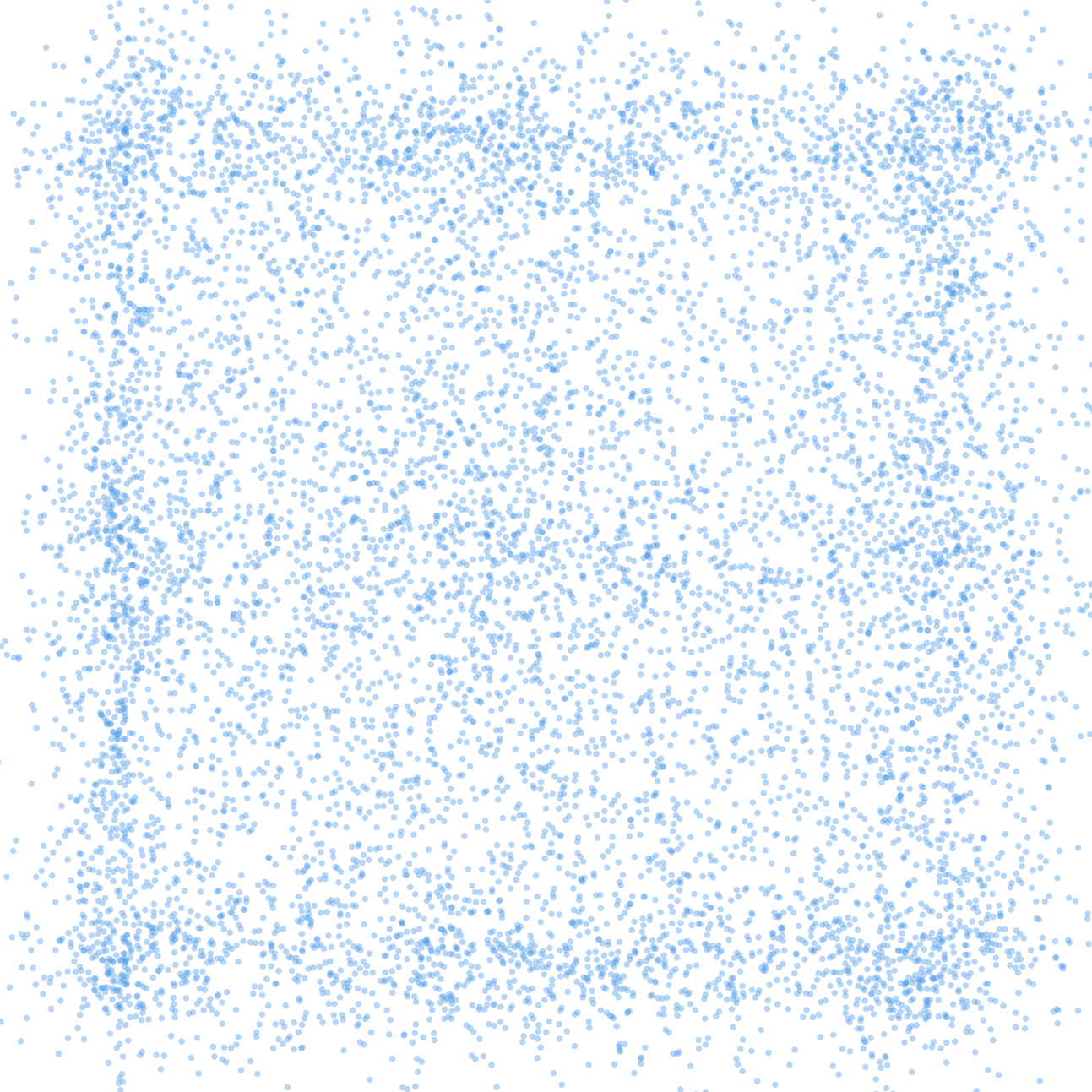}
    \includegraphics[width = 0.15\textwidth]{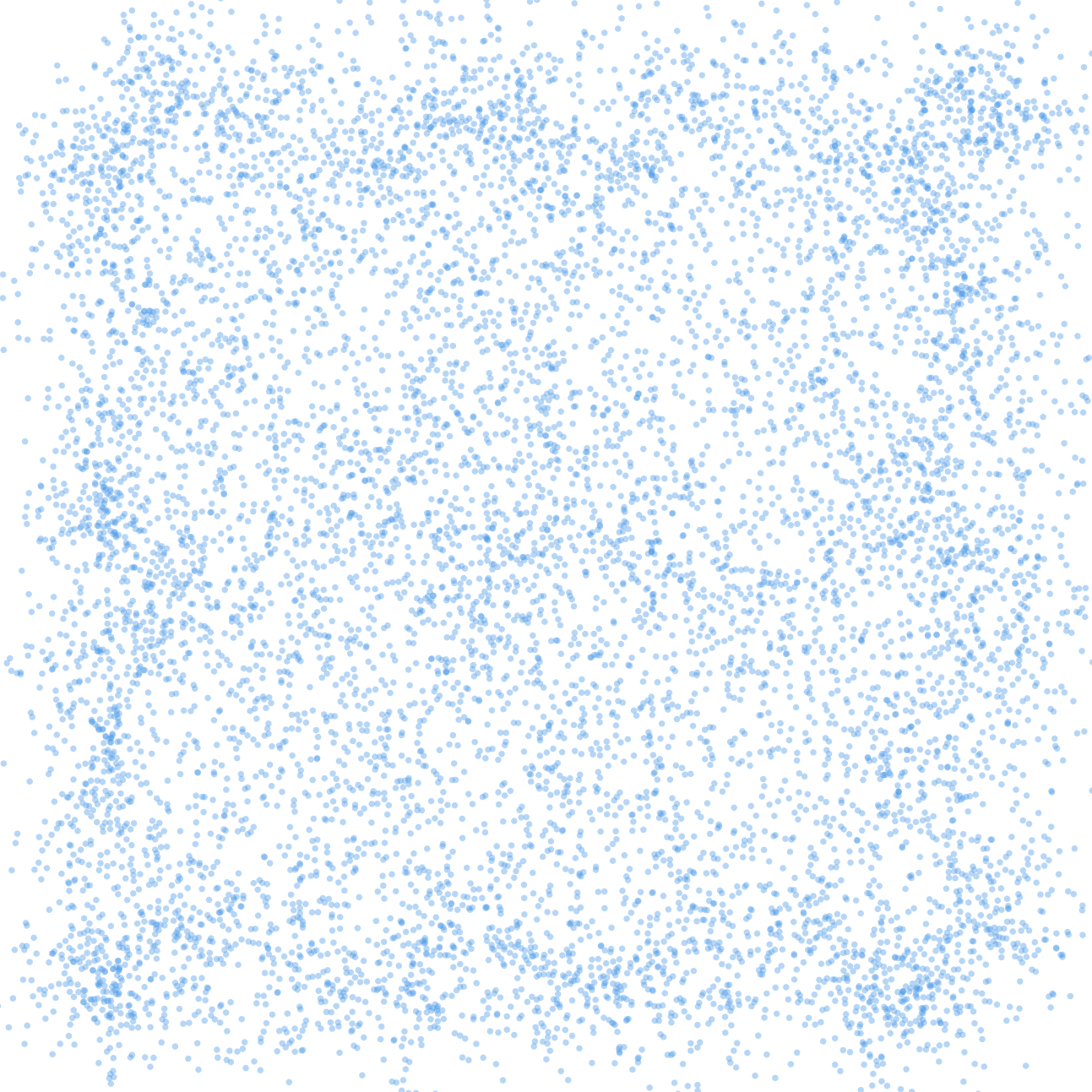}
    \includegraphics[width = 0.15\textwidth]{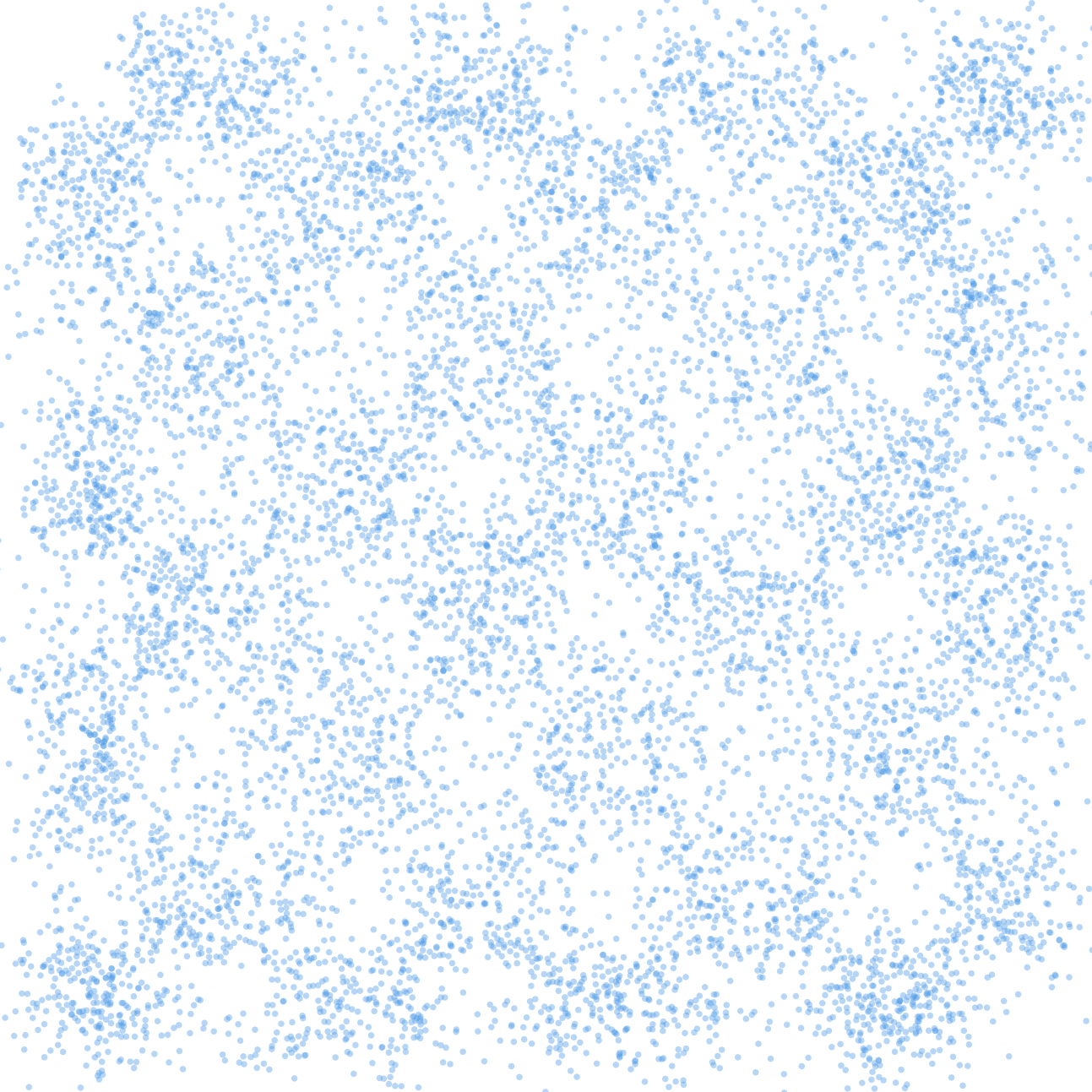}
    \includegraphics[width = 0.15\textwidth]{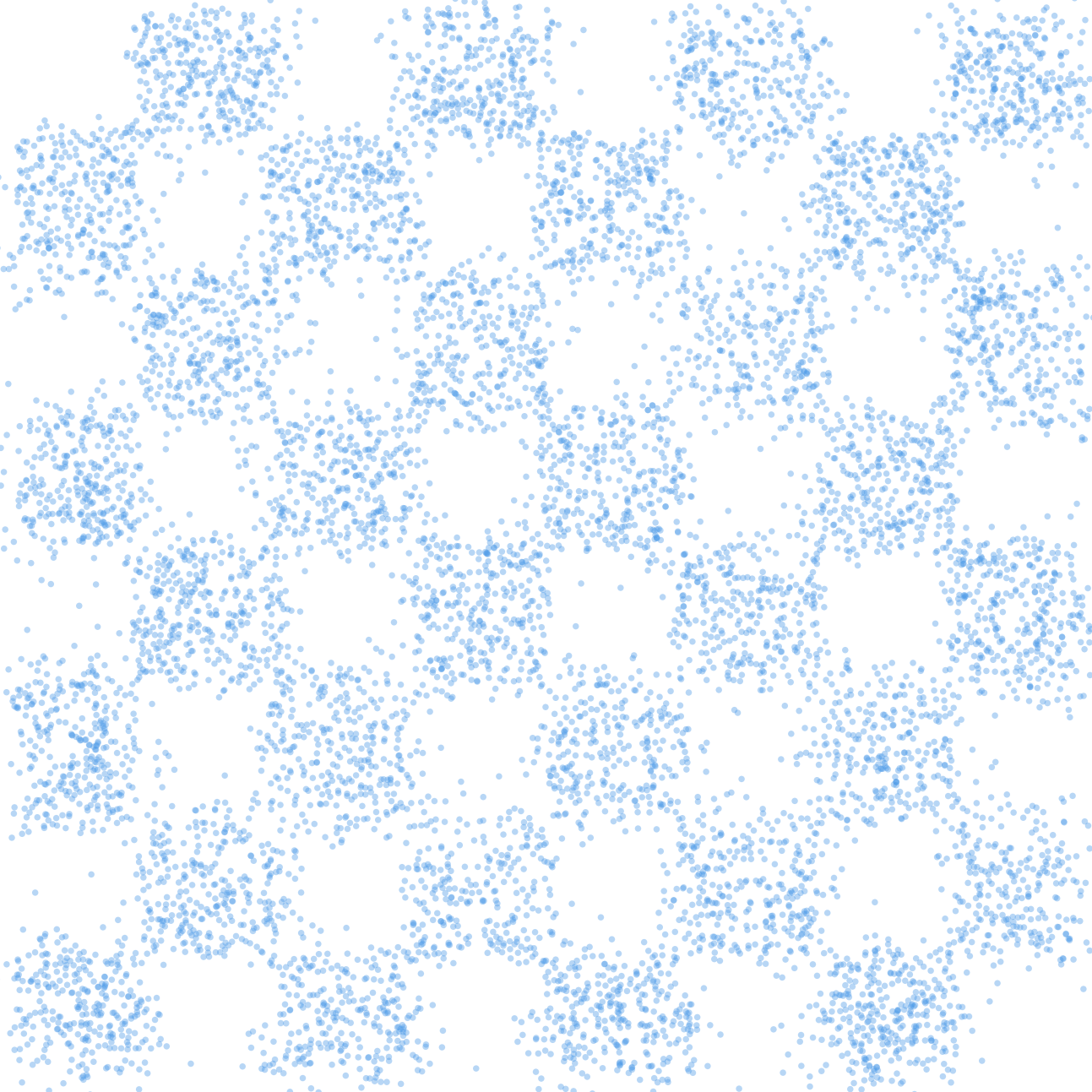}
   \includegraphics[width = 0.15\textwidth]{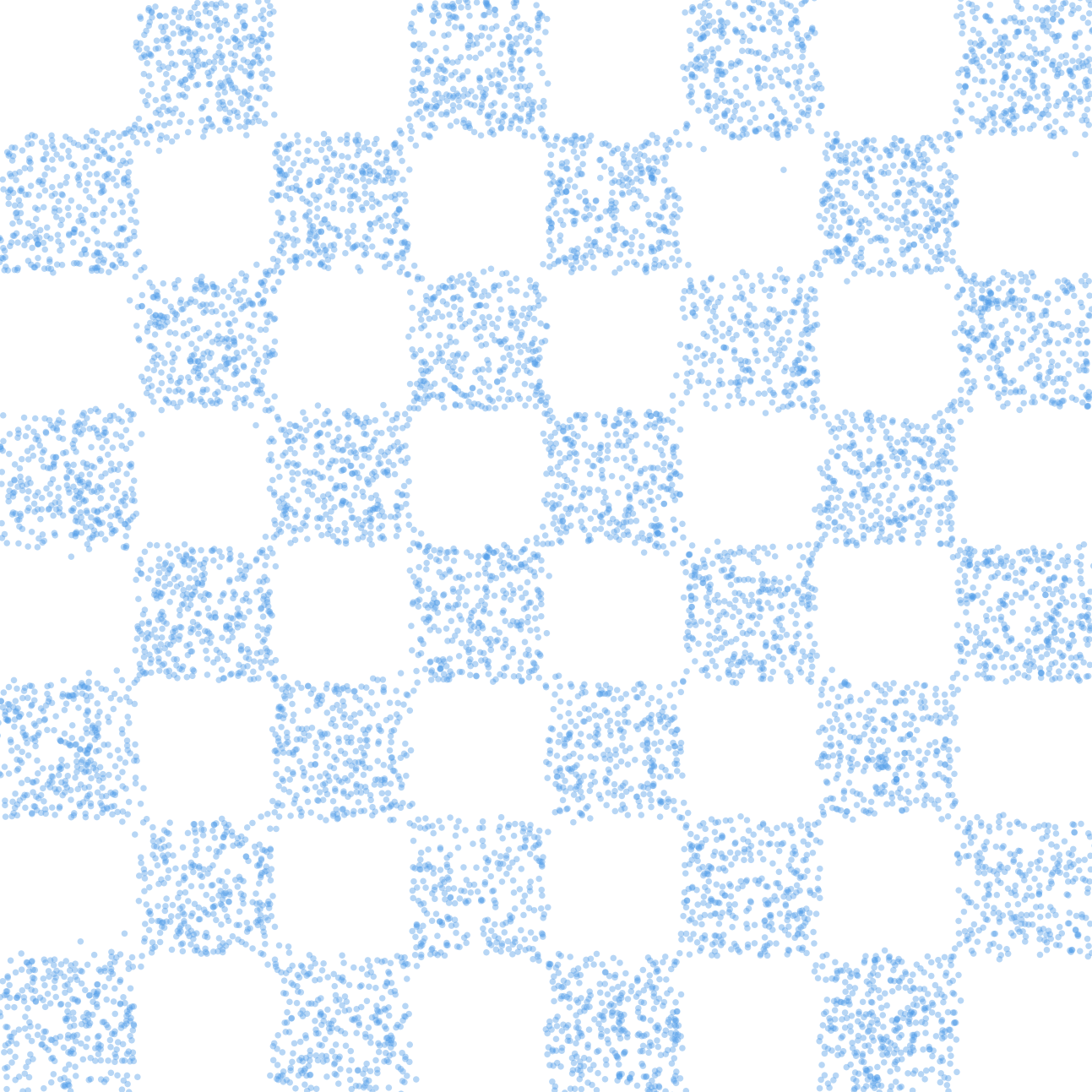}
    \caption{A generated sample path from the learned quantile latent to the checkerboard. The adapted latent (left) is already close to the target distribution.}
    \label{plot_traj_checker}
\end{figure}

We use three standard challenging low-dimensional distributions: Neal's funnel, a $3\times 3$ Gaussian mixture, and a checkerboard.

\paragraph{Funnel.}
For the toy illustration in Figure \ref{plots_learned_funnel}, we work with the dataset known as Neal's Funnel \cite{neal2003slice}. The distribution of Neal's funnel is defined as follows:
\begin{equation}
p(x_1, x_2) \;=\; \mathcal{N}\!\bigl(x_1; 0, 3\bigr)\,\mathcal{N}\!\bigl(x_2; 0, \exp(x_1/2)\bigr).
\end{equation}

\paragraph{Grid Gaussian Mixture.}
We give more details about the mixture of Gaussian we consider in our experiment. It is designed in a grid pattern in $[-1,1]^2$, as follows:
\[
\sum_{i=1}^{9} w_i \cdot \mathcal{N}(\mu_i,\sigma^2 I_2)\,,
\]
where $(w_i)_{i=1}^{9}=(0.01,\,0.1,\,0.3,\,0.2,\,0.02,\,0.15,\,0.02,\,0.15,\,0.05)$, $\mu_i=(\mu_1,\mu_2)$ with $\mu_1=(i \bmod 3)-1$, $\mu_2=\left\lfloor \frac{i}{3}\right\rfloor-1$, and $\sigma=0.025$.

\paragraph{Checkerboard.}
Fix $\ell<h$ and domain $\Omega=[\ell,h]^2$.  
Define the support
\[
\mathcal S = \bigl\{(x,y)\in\Omega:\ \lfloor x\rfloor + \lfloor y\rfloor \;\text{is even}\bigr\}.
\]
The checkerboard distribution is uniform on $\mathcal S$ and zero elsewhere:
\[
p_{\text{Checker}}(x,y) 
= \begin{cases}
\dfrac{1}{\operatorname{area}(\mathcal S)}, & (x,y)\in \mathcal S, \\[4pt]
0, & \text{otherwise}.
\end{cases}
\]
For integer $\ell,h$ with even side length (e.g.\ $\ell=-4,h=4$), exactly half of $\Omega$ is active, hence
\[
p_{\text{Checker}}(x,y)
= \frac{2}{(h-\ell)^2}\,\mathbf 1_{\mathcal S}(x,y).
\]

 \subsection{Loss Implementation}\label{appendix:mini-batch}
For training, the minibatch OT is computed empirically as follows: draw a minibatch 
\(\{\mathbf x_0^{(i)}\}_{i=1}^B \sim \mu_0\) and 
\(\{\mathbf u^{(j)}\}_{j=1}^B \sim \mathcal U([0,1]^d)\), 
set \(\mathbf y^{(j)} = \mathbf Q_\phi(\mathbf u^{(j)})\), and define the empirical measures
\[
\hat\mu_0^B = \tfrac{1}{B}\sum_{i=1}^B \delta_{\mathbf x_0^{(i)}}, 
\qquad
\hat\nu_\phi^B = \tfrac{1}{B}\sum_{j=1}^B \delta_{\mathbf y^{(j)}}.
\]
The minibatch quantile alignment objective is
\[
\widehat{\mathcal L}_{\operatorname{AN}}(\phi)\;=\;W_2^2\big(\hat\mu_0^B, \hat\nu_\phi^B\big),
\]
and gradients backpropagate through \(\mathbf y^{(j)} = \mathbf Q_\phi(\mathbf u^{(j)})\).
Let \(T:\{1,\ldots,B\}\to\{1,\ldots,B\}\) denote the optimal assignment that minimizes \(\sum_{i=1}^B\|\mathbf x_0^{(i)}-\mathbf y^{(T(i))}\|_2^2\), and define its inverse \(P(j)=i\) such that \(T(i)=j\). 
We use the conditional flow path \(\mathbf x_t^{(j)}=(1-t_j)\mathbf x_0^{(P(j))}+t_j\,\mathbf y^{(j)}\), \(j=1,\dots, B\), with \(t_j\sim \mathcal U(0,1)\). {The target velocity is \(\mathbf y^{(j)}-\mathbf x_0^{(P(j))}\), we apply a stop-gradient operator \(\operatorname{sg}(\cdot)\) to this target in the flow matching loss. This prevents gradients from the velocity model from flowing back through the quantile function in this term, ensuring that \(\mathbf Q_\phi\) is updated primarily through \(\widehat{\mathcal{L}}_{\operatorname{AN}}\), while \(v^\theta\) learns to match the transport directions defined by the current quantile map. Note however the stop gradient operation only slightly stabilizes training, we can train with full gradients as well.}
We optimize the empirical version
\begin{align}
{\widehat{\mathcal L}_{\mathsf{CFM}}(\theta,\phi)} &{\;=\;\frac1B\sum_{j=1}^B 
\big\|v^\theta\big(\mathbf x_t^{(j)},t_j\big)-\operatorname{sg}\big(\mathbf y^{(j)}-\mathbf x_0^{(P(j))}\big)\big\|_2^2}, \quad \nonumber \widehat{\mathcal L}(\theta,\phi)=\widehat{\mathcal L}_{\mathsf{CFM}}(\theta,\phi)+\lambda\,\widehat{\mathcal L}_{\operatorname{AN}}(\phi).
\end{align}

\subsection{Implementation Details}

We support baseline flow matching, optional quantile pretraining, and joint quantile+velocity optimisation. Pretraining fits the RQS transport before optionally freezing it; joint training updates both modules simultaneously. Once the quantile learning rate decays to zero we freeze its weights and continue optimising the velocity field only. 

The coupling plans are calculated using the Python Optimal Transport package \cite{flamary2021pot}.  For inference simulate the corresponding ODEs using the torchdiffeq \cite{torchdiffeq} package. For all models we only used the batch size $128$ and learning rate $2e-4$ for the velocities. 
 { We use Adam \cite{KB2015} as the optimizer. The quantiles are parameterised by rational-quadratic splines as described in \ref{appendix:learned-quantile}, we set the minimum bin width and height to $1e-3$ and the minimum slope to $1e-5$. We could in principle stack multiple RQS layers, however for all of our experiments we use one layer.}

\subsection{Experiment Setup: Synthetic Examples}
All models include a sinusoidal time embedding and SiLU activation functions. 

\paragraph{Funnel.}
For all models we used 3 hidden layers with width 64. We used a batch size of 128, a learning rate of $2e-4$ and exponential moving average on the network weights of $0.999$. The baselines were trained for 200{,}000 iterations.
{Since there is a very high variance when sampling from the funnel,} we pretrain our quantiles and use the frozen quantiles during flow matching. We trained our quantile for 50{,}000 steps and to compensate we trained our velocity for only 100{,}000 steps. { For the RQS we chose logit activation, 32 bins and a bound of 500.}

\paragraph{Grid Gaussian Mixture and Checker.}
The quantiles were trained for the first 20{,}000 steps, after which the learning rate was linearly decayed to 0 by step 25{,}000. 
For both datasets, we trained the velocity model with 3 layers and a hidden width of 256 for 100{,}000 steps. { Furthermore we used sinusoidal positional embeddings for the checkerboard. For the RQS we chose logit activation, 32 bins with a bound of 50.}

\subsection{Experiment Setup: Image Experiments}
For both image datasets, we adapt the U-Net from \cite{nichol2021improved} to parametrize our velocity field.

\paragraph{MNIST.}
{For the MNIST dataset we use the U-Net with channel multipliers $(1,2,4)$, two residual blocks per resolution, attention at $7\times7$, and 1 attention head. We clip the gradient norm to 1 and use exponential moving averaging with a decay of $0.99$. We test three configurations with base widths of 8, 16, and 32 channels. For these ablation runs, we use quantile loss weight $\lambda = 1.0$, regularization parameter $\beta=0.1$, and rational quadratic spline with bounded activation, 16 bins and bound 3.0. The quantiles were trained for the first 20{,}000 steps, after which the learning rate was linearly decayed to 0 over the next 10{,}000 steps. The images in Figure \ref{fig:mnist_prelim} were generated using our 32 channel configuration.}

\paragraph{CIFAR.}
{We use exactly the same U-Net setup from \cite{tong2024improvinggeneralizingflowbasedgenerative}}. We clip the gradient norm to 1 and use exponential moving averaging with a decay of $0.9999$. To evaluate our results, we use the Fr\'echet inception distance (FID) \cite{HRUNH2017}. The quantiles were trained for the first 50{,}000 steps, after which the learning rate was linearly decayed to 0 by step 55{,}000. { We used $\lambda =1$ and varied $\beta$. For the RQS we used logit activation, 32 bins and a bound of 25.}

CIFAR-10 inputs are normalized to $[-1,1]$ with random horizontal flips.

\subsection{Experiment Setup: Weather Experiments}\label{subsec:Experiment Setup: Weather Experiments}
\paragraph{HRRR64.}
  For the HRRR64-Mini dataset, we use a U-Net with channel
multipliers $(1,2,2,2)$, two residual blocks per resolution,
attention at $32\times32$, and 4 attention heads with 64 channels
per head. The base width is 128 channels, with dropout $0.1$ and
bf16 mixed precision. We clip the gradient norm to 1 and use
exponential moving averaging with decay $0.99$. Training uses
Adam with learning rate $2\times10^{-4}$, $5{,}000$ warmup steps,
batch size $64$, and minibatch optimal-transport coupling between
data and latents, for a total of $100{,}000$ steps. For quantile
training, we use loss weight $\lambda = 0.3$, regularization
parameter $\beta = 1.0$, and a single rational-quadratic spline
layer with a logit input transform, $32$ bins, and bound $25.0$.
The quantiles are trained for the first $5{,}000$ steps, after
which the learning rate is linearly decayed to $0$ over the next
$2{,}500$ steps. The Student-$t$ baseline uses $\nu = 4$ degrees
of freedom with unit scale.

\paragraph{Metrics.}
Let $X$ denote the total-precipitation pixel value, and write
$X_{\mathrm{gen}}$ and $X_{\mathrm{real}}$ for $X$ under the generative model
and the real data respectively. We pool generated and real precipitation
pixels across all images and pixel locations, denote their empirical
distributions by $\hat{F}_{\mathrm{gen}}$ and $\hat{F}_{\mathrm{real}}$, and
write $\hat{Q}_{\mathrm{real}}(\tau) = \hat{F}_{\mathrm{real}}^{-1}(\tau)$ for
the empirical quantile of the real data. We fix the extreme threshold at the
$99.9$th percentile,
\begin{align}
  u_\tau \;=\; \hat{Q}_{\mathrm{real}}(\tau), \qquad \tau = 0.999.
\end{align}
All metrics below are evaluated on the tp channel; lower is better.

The \textit{extreme event frequency error} is the relative error of the tail
exceedance probability,
\begin{align}
  \mathrm{EEFE}(X_{\mathrm{gen}}, X_{\mathrm{real}})
  \;=\;
  \frac{\bigl|\,\mathbb{P}(X_{\mathrm{gen}} > u_\tau)
              - \mathbb{P}(X_{\mathrm{real}} > u_\tau)\,\bigr|}
       {\mathbb{P}(X_{\mathrm{real}} > u_\tau)},
\end{align}
and the \textit{extreme event magnitude error} is the corresponding relative
error of the tail conditional mean,
\begin{align}
  \mathrm{EEME}(X_{\mathrm{gen}}, X_{\mathrm{real}})
  \;=\;
  \frac{\bigl|\,\mathbb{E}[X_{\mathrm{gen}} \mid X_{\mathrm{gen}} > u_\tau]
              - \mathbb{E}[X_{\mathrm{real}} \mid X_{\mathrm{real}} > u_\tau]\,\bigr|}
       {\bigl|\mathbb{E}[X_{\mathrm{real}} \mid X_{\mathrm{real}} > u_\tau]\bigr|}.
\end{align}

For the \textit{spectral distance}, let $\mathcal{F}[x](u,v)$ denote the 2-D
DFT of a single realization $x \in \mathbb{R}^{H \times W}$. The isotropic
(radially averaged) power spectrum of $x$ is the conditional expectation of
$|\mathcal{F}[x]|^2$ over rings of constant radial wavenumber,
\begin{align}
  \hat{S}_x(k)
  \;=\;
  \mathbb{E}_{(u,v)}\!\left[\,\bigl|\mathcal{F}[x](u,v)\bigr|^2
    \;\Big|\; (u,v) \in A_k\right],
  \qquad
  A_k \;=\; \bigl\{(u,v) : \lfloor\sqrt{u^2+v^2}+\tfrac{1}{2}\rfloor = k\bigr\},
\end{align}
i.e.\ the mean squared FFT magnitude over the unit-width annulus at integer
radius $k$. Averaging $\hat{S}_x$ over the generated and real image sets gives
$\hat{S}_{\mathrm{gen}}$ and $\hat{S}_{\mathrm{real}}$, and with
$k_{\max} = \lfloor \min(H,W)/2 \rfloor$ and the DC bin excluded we report
\begin{align}
  \mathrm{SD}(X_{\mathrm{gen}}, X_{\mathrm{real}})
  \;=\;
  \frac{1}{k_{\max}-1}
  \sum_{k=1}^{k_{\max}-1}
  \bigl|\log \hat{S}_{\mathrm{gen}}(k) - \log \hat{S}_{\mathrm{real}}(k)\bigr|,
\end{align}
with a small $\epsilon$ added inside each $\log$ for numerical stability.

The \textit{tail KS distance} is the two-sample Kolmogorov--Smirnov statistic
restricted to the tails of the real distribution. Letting
$\mathcal{T}_{+} = \{x : x > u_\tau\}$ and
$\mathcal{T}_{-} = \{x : x < \hat{Q}_{\mathrm{real}}(1-\tau)\}$,
\begin{align}
  \mathrm{TailKS}(X_{\mathrm{gen}}, X_{\mathrm{real}})
  \;=\;
  \tfrac{1}{2}\!\!\sum_{\mathcal{T}\in\{\mathcal{T}_+,\mathcal{T}_-\}}
  \sup_{x\in\mathcal{T}}
  \bigl|\hat{F}_{\mathrm{gen}}^{\,\mathcal{T}}(x)
        - \hat{F}_{\mathrm{real}}^{\,\mathcal{T}}(x)\bigr|,
\end{align}
where $\hat{F}^{\,\mathcal{T}}$ is the empirical CDF restricted to the tail
region $\mathcal{T}$.

Finally, the \textit{kurtosis deviation} and \textit{skewness deviation} are
the relative errors of the Pearson kurtosis
$\kappa = \mathbb{E}[(X-\mu)^4]/\sigma^4$ (Gaussian $\approx 3$) and the
skewness $\gamma = \mathbb{E}[(X-\mu)^3]/\sigma^3$, with
$\kappa_{\mathrm{gen}}, \gamma_{\mathrm{gen}}$ and
$\kappa_{\mathrm{real}}, \gamma_{\mathrm{real}}$ computed from
$X_{\mathrm{gen}}$ and $X_{\mathrm{real}}$ respectively,
\begin{align}
  \mathrm{KD}(X_{\mathrm{gen}}, X_{\mathrm{real}})
  \;=\;
  \bigl|\,1 - \kappa_{\mathrm{gen}}/\kappa_{\mathrm{real}}\,\bigr|,
  \qquad
  \mathrm{SkD}(X_{\mathrm{gen}}, X_{\mathrm{real}})
  \;=\;
  \bigl|\,1 - \gamma_{\mathrm{gen}}/\gamma_{\mathrm{real}}\,\bigr|.
\end{align}
All probabilities, expectations, quantiles, and CDFs are estimated from
$20{,}000$ generated samples versus the full HRRR64-Mini precipitation field.

\section{Further Experimental Results} \label{sec: Further Experimental Results}
In the following, we highlight additional experimental results that are not included in the main paper.
\subsection{Synthetic Datasets}   
\begin{wrapfigure}[14]{r}{0.40\linewidth}
  \vspace{-2.5\baselineskip} % tweak if it sits too low/high
  \centering
  \includegraphics[width=\linewidth]{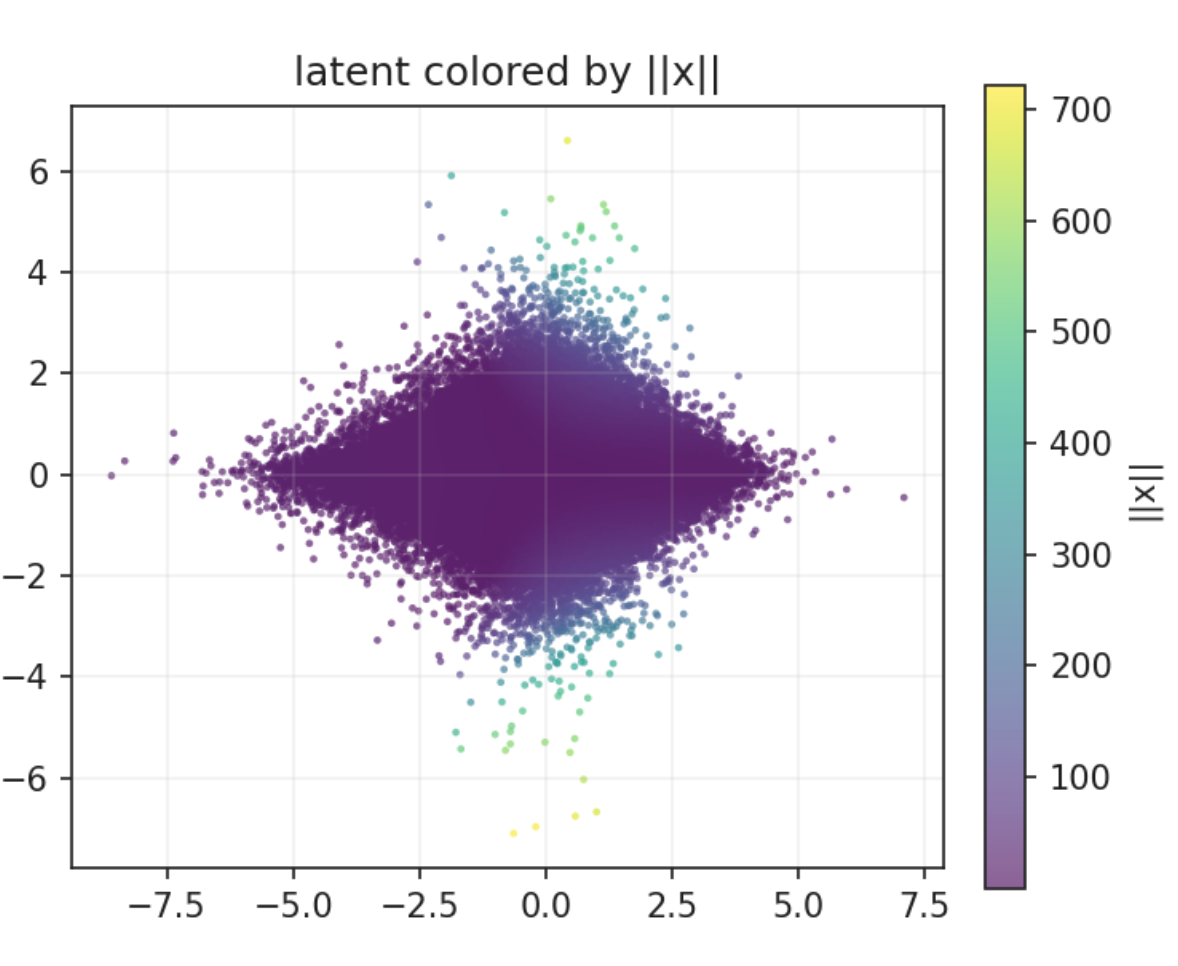}
  \caption{Samples (1M) from our learned latent of the funnel distribution.
  Color shows norm of the endpoint sample after solving the reverse ODE.}
  \label{latent_funnel}
  %\vspace{-0.6\baselineskip} % optional: tighten space below
\end{wrapfigure}
\noindent For the funnel experiment in Section~\ref{subsec:exp Synthetic Datasets}, we visualize the learned latent distribution $\mathbf Q_\phi$ in Figure~\ref{latent_funnel}. We draw one million samples to accurately depict the distribution, and color each sample by the norm of the corresponding target sample after solving the ODE. The learned latent exhibits substantially heavier tails than a Gaussian, yielding improved tail behavior compared to a standard Gaussian, as also illustrated in Figure~\ref{plots_learned_funnel}.

For the checkerboard distribution (see Section~\ref{sec:toy-targets}), the velocity-field network converges in substantially fewer training iterations. After 20k training steps, the generated samples match the target distribution much more closely, see Figure ~\ref{fig:Checker after 20k}. 

In figure ~\ref{fig:ablation_lambda} we consider what happens when training the latent purely using the contribution from the FM loss, i.e setting $\lambda = 0$.

\begin{figure}[h!]       % pull figure up a bit (tweak as needed)
  \centering  
    \includegraphics[width=0.35\linewidth]{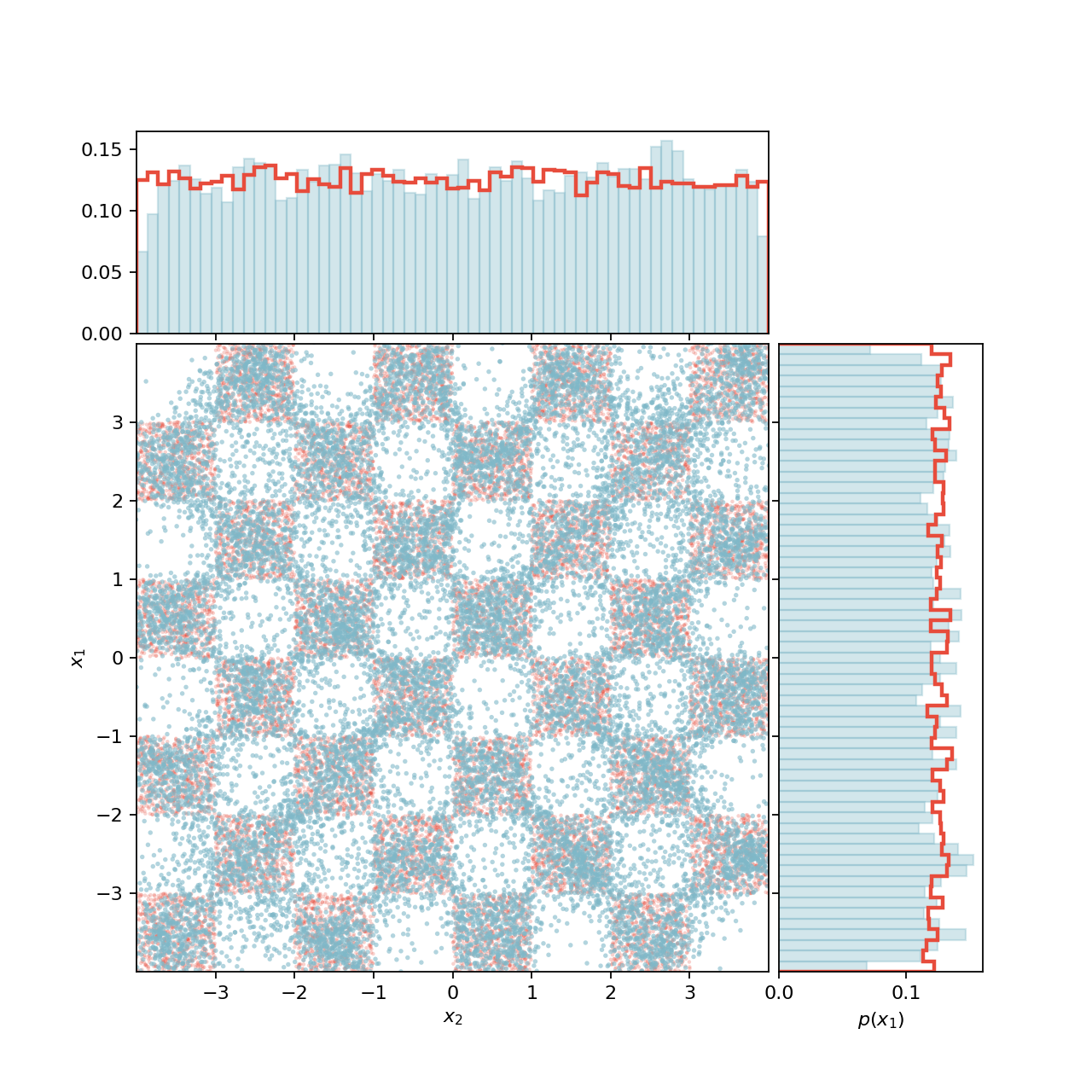}
     \hspace{1cm}
    \includegraphics[width = 0.35\textwidth]{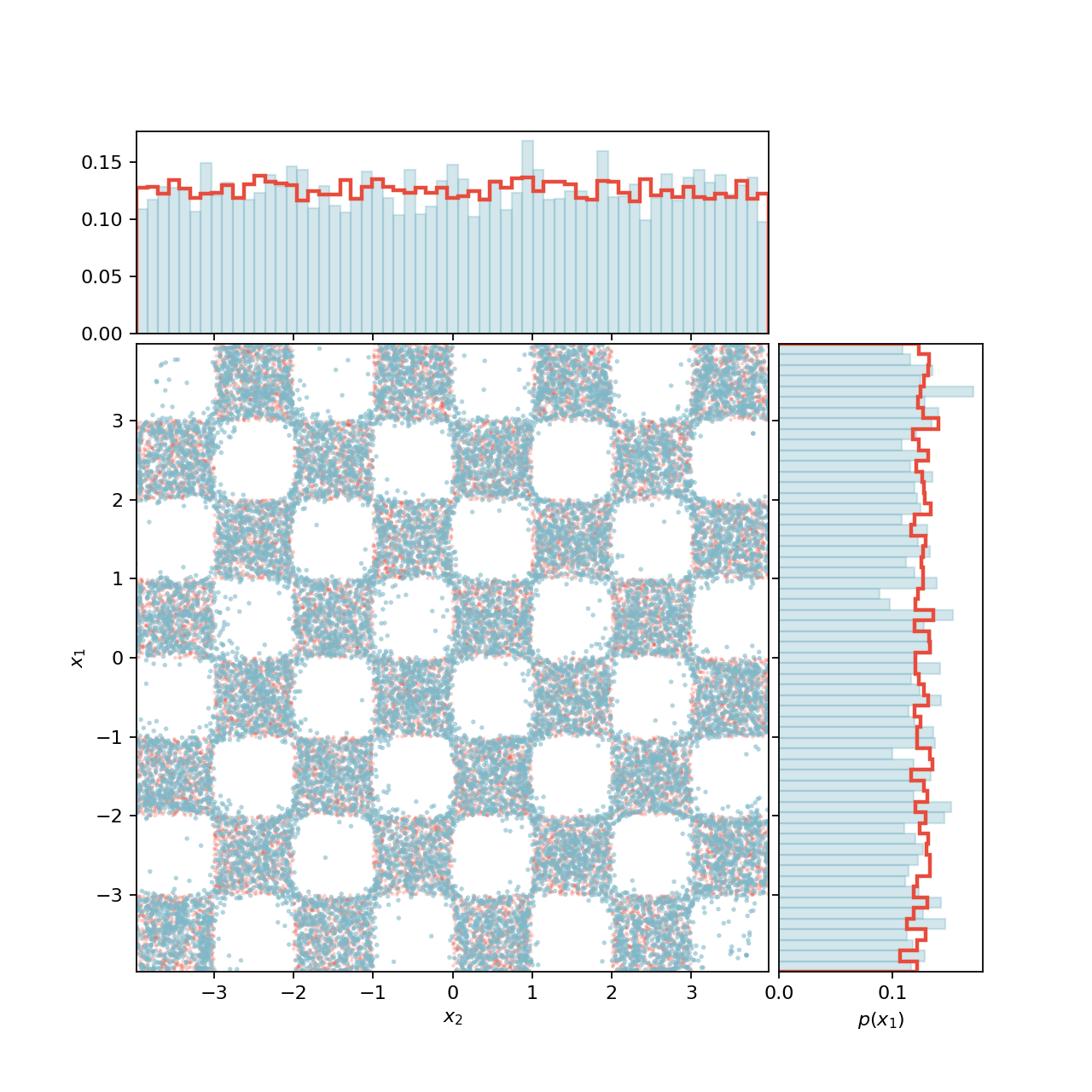}
     \caption{Flow Matching with optimal coupling using Gaussian noise (left) and our learned noise (right) after \textbf{20k} training steps with identical parameters. Starting in the learned noise the model produces much better samples.
     Generated samples are shown in blue, and ground-truth samples in red. }
      \label{fig:Checker after 20k}
\end{figure}

\begin{figure}[h!]
  \centering
  % --- Left group: Checker ---
  \begin{subfigure}[t]{0.48\textwidth}
    \centering
    \includegraphics[width=0.48\linewidth]{imgs/2d_checker_neu/t_1.0000_slice_05_199999_80235cc2c769a2c50fc8.png}%
    \hfill
    \includegraphics[width=0.48\linewidth]{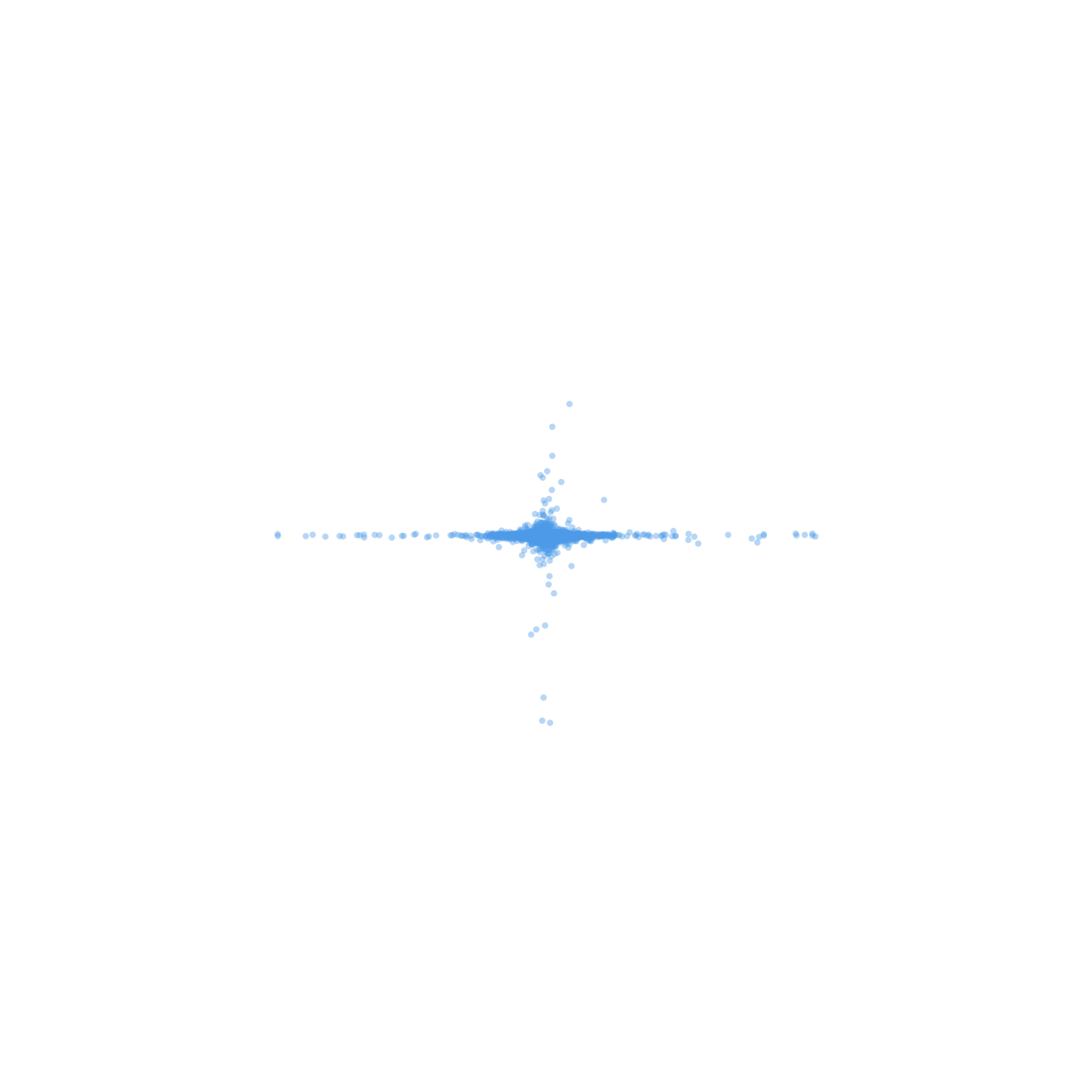}
    \caption{Checkerboard (cf.\ Figure~\ref{fig:Checker}). Left: $\lambda = 50$, Right: $\lambda = 0$.}
  \end{subfigure}
  \hfill
  % --- Right group: GMM ---
  \begin{subfigure}[t]{0.48\textwidth}
    \centering
    \includegraphics[width=0.48\linewidth]{imgs/gmm9/media_images_quantile_trajectory_slices_t_1.0000_slice_05_249999_6230325aad868d392e93.png}%
    \hfill
    \includegraphics[width=0.48\linewidth]{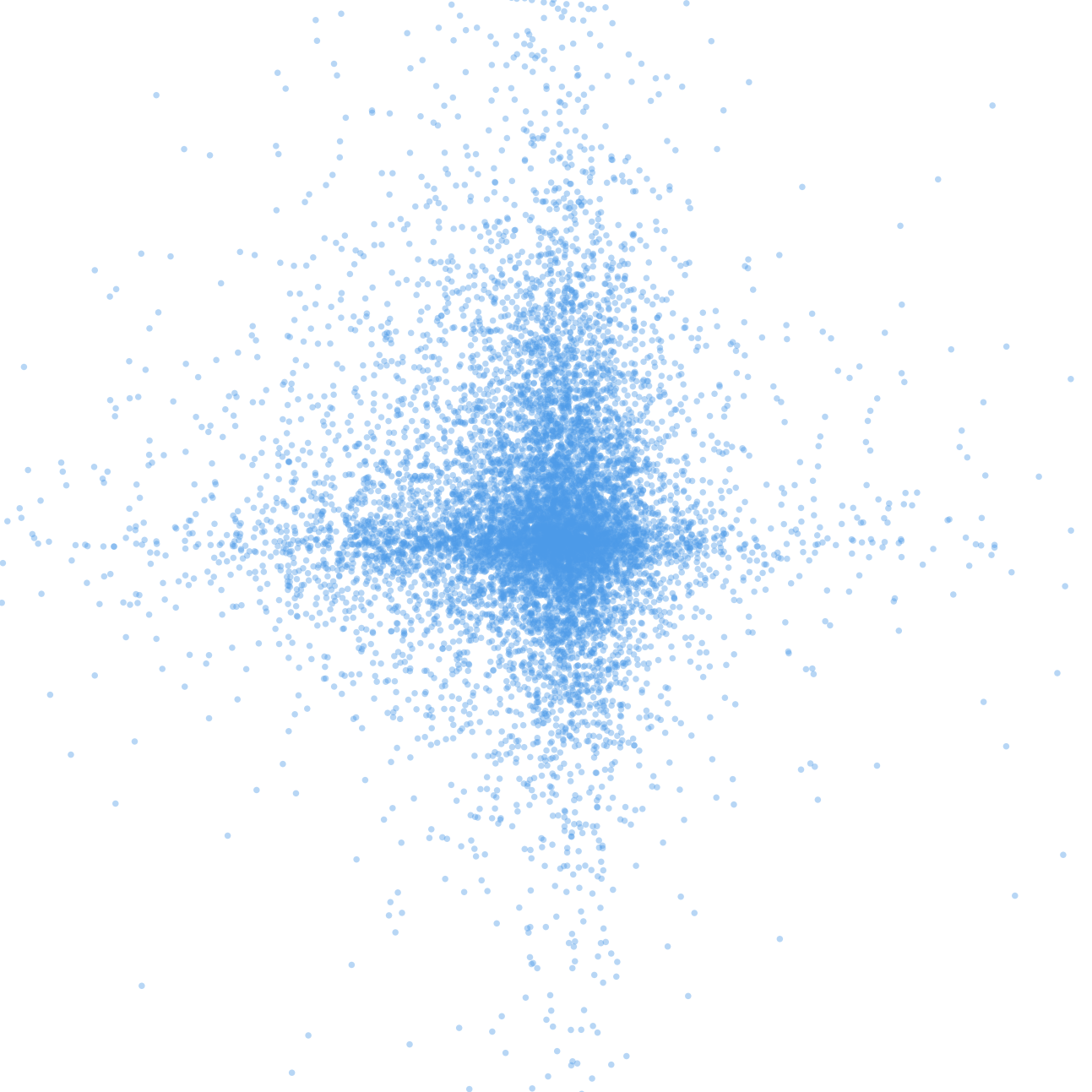}
    \caption{Gaussian mixture (cf.\ Figure \ref{plot_traj_GMM}). Left: $\lambda = 50$, Right: $\lambda = 0$.}
  \end{subfigure}
  \caption{Effect of the regularization weight $\lambda$ on learned latent distributions for the checkerboard and GMM target distributions.}
  \label{fig:ablation_lambda}
\end{figure}

\subsection{Image Data Set: MNIST}
For MNIST, Figure~\ref{hist_mnist} visualizes the learned latent distribution $\mathbf Q_\phi$ at different pixel locations (e.g., the top-left corner, the center, and the mid-right region). The learned latent matches the correct mean, while the increased variance in the top-right region is induced by the differential-entropy regularization (see Section~\ref{subsec: Exp Image}). We again plot FID over training steps for the capacity-constrained networks, and additionally report the corresponding parameter counts in Figure~\ref{ablation_mnist}.

{\begin{figure}[H] 
    \centering
    \includegraphics[width = 0.95\textwidth]{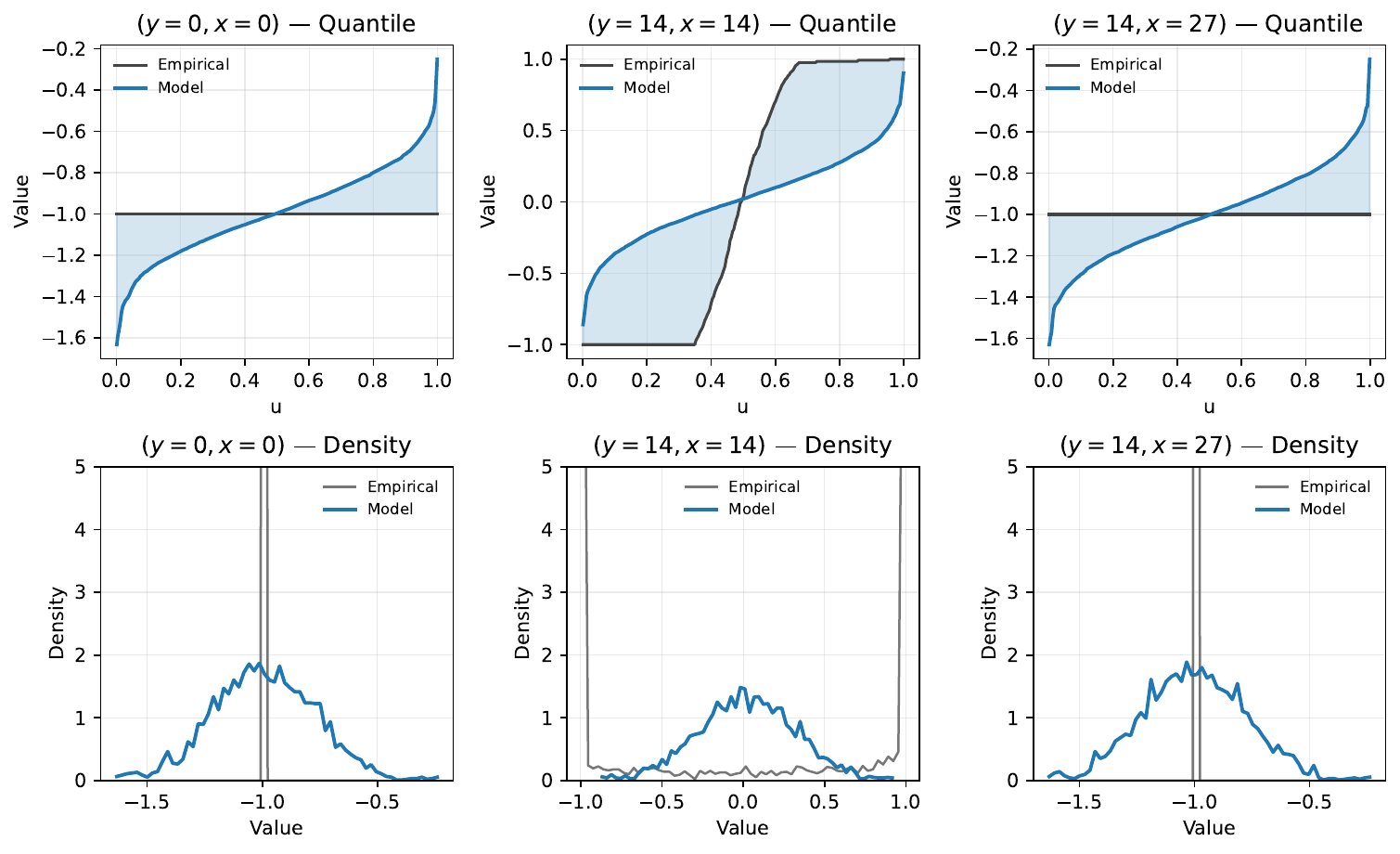} 
    \caption{
    {Comparison of the empirical and learned probability density functions and their quantile functions at different pixel locations $(y,x)$, averaged over images from the MNIST dataset. The blue area illustrates the difference between the quantiles, corresponding to the one-dimensional Wasserstein distance; see Eq.~\ref{eq:1d_continuous_quantile_formula}}.
    }
    \label{hist_mnist}
\end{figure}}
\vspace{0.5cm}

{\begin{figure}[H] 
    \centering
    \includegraphics[width=0.5\linewidth]{imgs/ablation/mnist/fig_fid_curves.pdf}
     \hspace{1cm}
    \includegraphics[width = 0.35\textwidth]{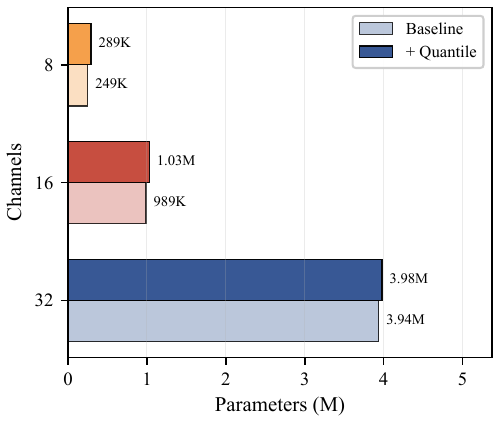}
     \caption{{Ablation study over capacity of the U-Net for sampling from the MNIST dataset. The FID curves show that our method achieves significantly lower FIDs for lower capacities. Note the difference in parameters is approximately $40$k.}}
    \label{ablation_mnist}
\end{figure}}

%\newpage

\subsection{Image Dataset: CIFAR10}
In Figure~\ref{fid_complete}, we present the full ablation over the regularization parameter (see Section~\ref{subsec: Exp Image}). Most regularization settings outperform the standard Gaussian noise distribution.

Figure~\ref{ablation:cifar_all} summarizes the training dynamics over iterations, including the Wasserstein distance, the differential-entropy regularization term, and the gradient norms of both the quantile and velocity-field networks.
\begin{figure}[h] 
\centering
\begin{minipage}{0.53\linewidth}
    \centering
    \small
    \renewcommand{\arraystretch}{1.2}
    \begin{tabular}{lcc}
        \toprule
        $\beta$ & FID (20 steps) & FID (100 steps) \\
        \midrule
        0.1 & 8.93$_{\pm 0.04}$ & 5.22$_{\pm 0.02}$ \\
        0.2 & 7.81$_{\pm 0.04}$ & 4.75$_{\pm 0.02}$ \\
        0.3 & \textbf{7.48}$_{\pm 0.05}$ & 4.53$_{\pm 0.05}$ \\
        0.4 & 7.60$_{\pm 0.05}$ & 4.54$_{\pm 0.01}$ \\
        0.5 & 7.66$_{\pm 0.03}$ & 4.49$_{\pm 0.02}$ \\
        0.6 & 7.70$_{\pm 0.05}$ & 4.47$_{\pm 0.03}$ \\
        0.7 & 7.93$_{\pm 0.05}$ & 4.59$_{\pm 0.02}$ \\
        0.8 & 7.77$_{\pm 0.05}$ & \textbf{4.42}$_{\pm 0.02}$ \\
        0.9 & 8.09$_{\pm 0.04}$ & 4.56$_{\pm 0.02}$ \\
        1.0 & 8.35$_{\pm 0.03}$ & 4.66$_{\pm 0.04}$ \\
        1.1 & 8.60$_{\pm 0.04}$ & 4.80$_{\pm 0.03}$ \\
        \midrule
        Baseline & 8.42$_{\pm 0.07}$ & 4.63$_{\pm 0.05}$ \\
        \bottomrule
    \end{tabular}
   
\end{minipage}\hfill%
\begin{minipage}{0.43\linewidth}
    \centering
    \includegraphics[width=\linewidth]{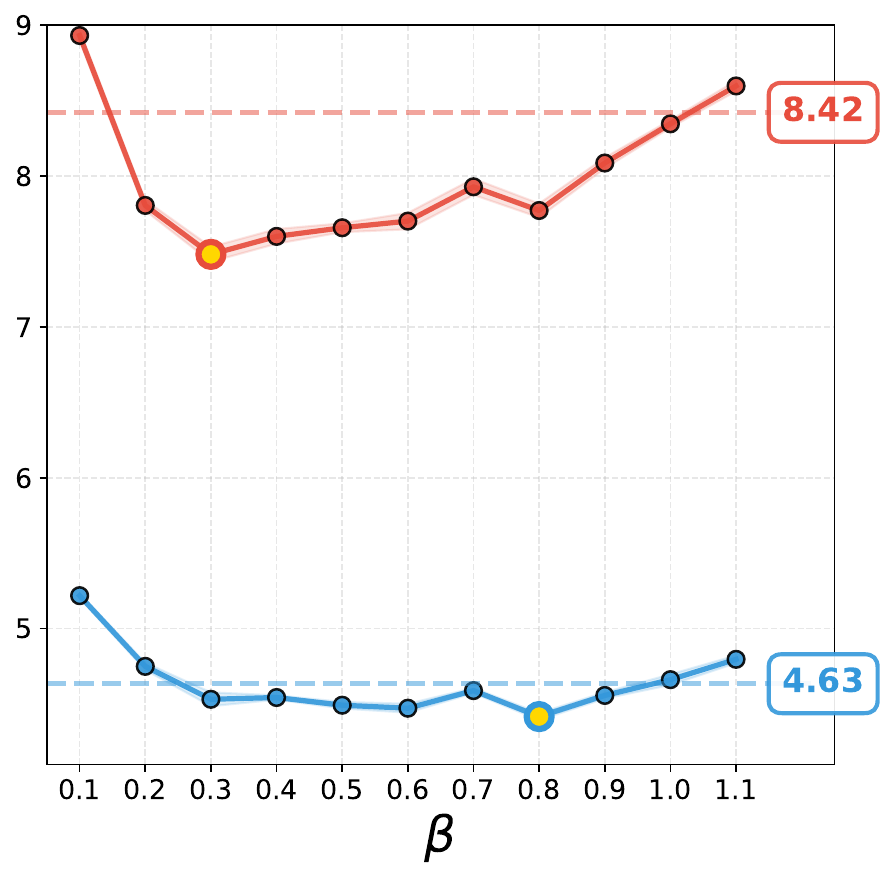}
   
\end{minipage}

  \caption{{Complete FID scores on CIFAR-10 for all $\beta$ values. Our method reached the best validation FID after 320k steps, while the baseline took 340k. We used those checkpoints for the evaluation.  We evaluated the FID using 5 seeds and report the mean as well as the standard deviation. Red denotes 20 step FID, blue 100 step FID, dotted line refers to baseline.}}
   \label{fid_complete}
\end{figure}

\begin{table}[h]
\centering
\small
\setlength{\tabcolsep}{6pt}
\begin{tabular}{lcccccccc}
\toprule
& \multicolumn{4}{c}{MNIST} & \multicolumn{4}{c}{CIFAR-10} \\
\cmidrule(lr){2-5} \cmidrule(lr){6-9}
Method & W1 $\downarrow$ & KS $\downarrow$ & Std ($\to 1$) & Sinkhorn $\downarrow$ & W1 $\downarrow$ & KS $\downarrow$ & Std ($\to 1$) & Sinkhorn $\downarrow$ \\
\midrule
Gauss Noise& $0.922$ & $0.651$ & $26.37$ & $705.5$ & $0.397$ & $0.179$ & $2.02$ & $1841$ \\
Quant Noise& $\mathbf{0.310}$ & $\mathbf{0.522}$ & $\mathbf{5.93}$ & $\mathbf{114.7}$ & $\mathbf{0.311}$ & $\mathbf{0.159}$ & $\mathbf{1.83}$ & $\mathbf{1558}$ \\
\bottomrule
\end{tabular}
\caption{Distribution-matching metrics comparing Gaussian and Quantile noise. Lower is better for W1, KS, and Sinkhorn; Std ratio should approach $1$.}
\label{tab:noise_match}
\end{table}

\begin{figure}[ht!]
  \centering

  % Row 1 (alone, full width)
  \begin{minipage}[t]{\textwidth}
    \centering
    \includegraphics[width=0.9\linewidth]{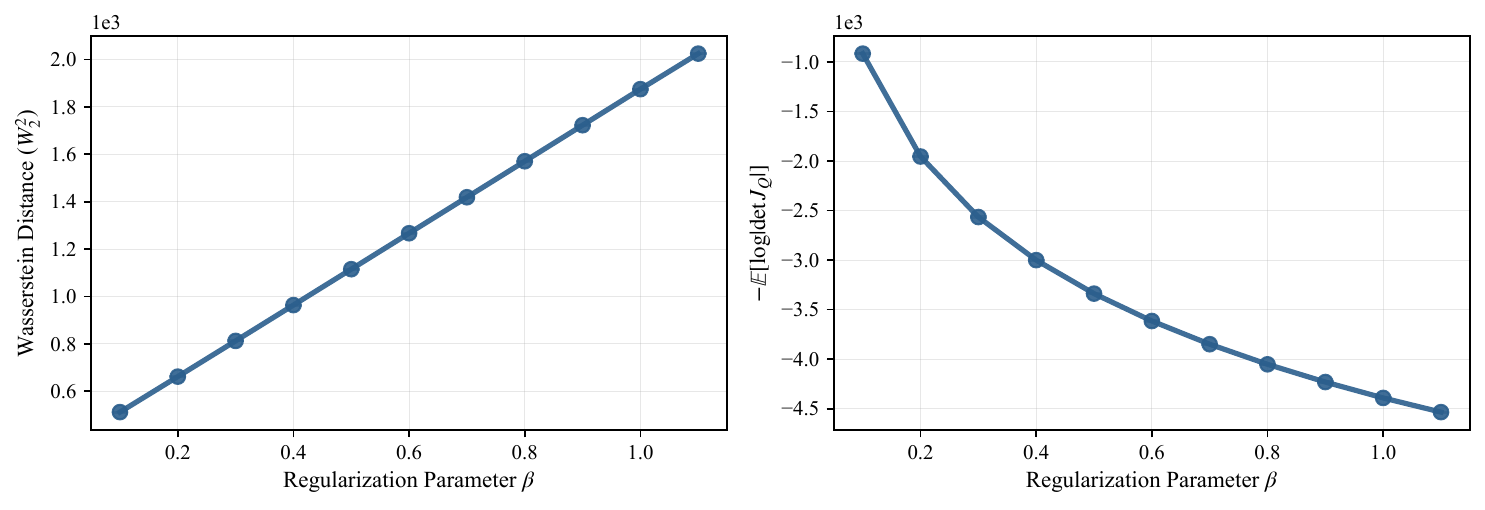}
    \caption*{(f) Final performance metrics at 55k training steps vs.\ regularization parameter $\beta$.}
  \end{minipage}

  \vspace{0.8em}

  % Row 2 (two side-by-side)
  \begin{minipage}[t]{0.49\textwidth}
    \centering
    \includegraphics[width=\linewidth]{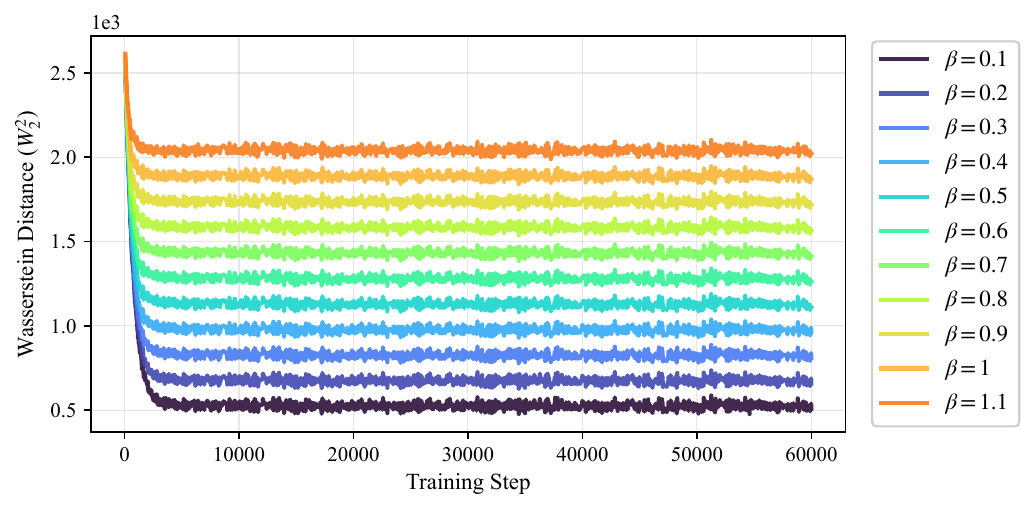}
    \caption*{(b) Wasserstein distance evolution during training for different $\beta$.}
  \end{minipage}\hfill
  \begin{minipage}[t]{0.49\textwidth}
    \centering
    \includegraphics[width=\linewidth]{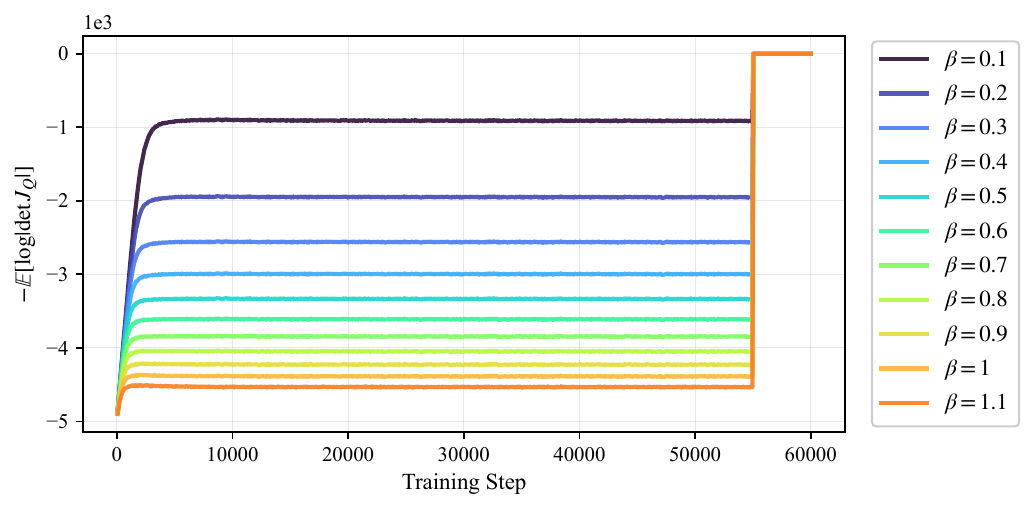}
    \caption*{(c) Regularization loss $-\mathbb{E}[\log|\det J_Q|]$ across $\beta$ values.}
  \end{minipage}

  \vspace{0.8em}

  % Row 3 (two side-by-side)
  \begin{minipage}[t]{0.49\textwidth}
    \centering
    \includegraphics[width=\linewidth]{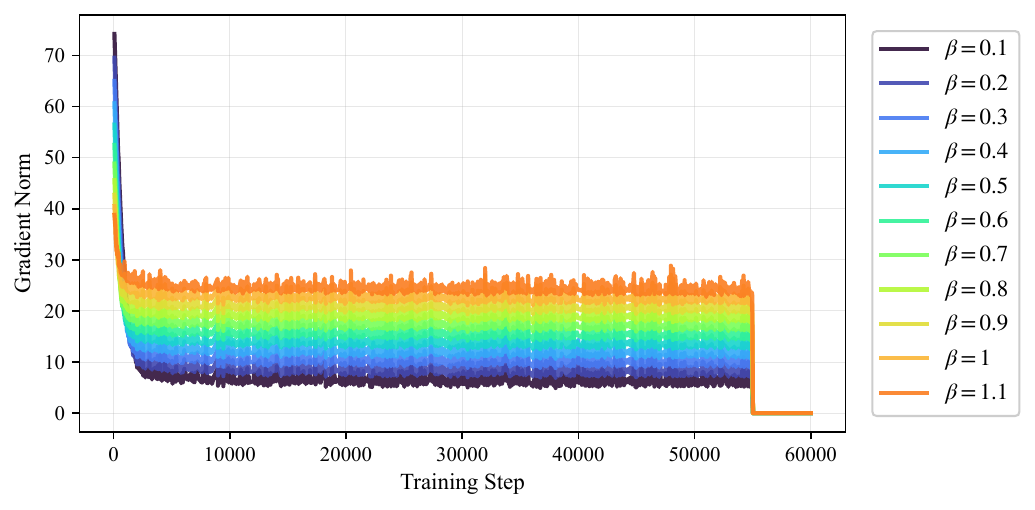}
    \caption*{(d) Gradient norm of the quantile function during training.}
  \end{minipage}\hfill
  \begin{minipage}[t]{0.49\textwidth}
    \centering
    \includegraphics[width=\linewidth]{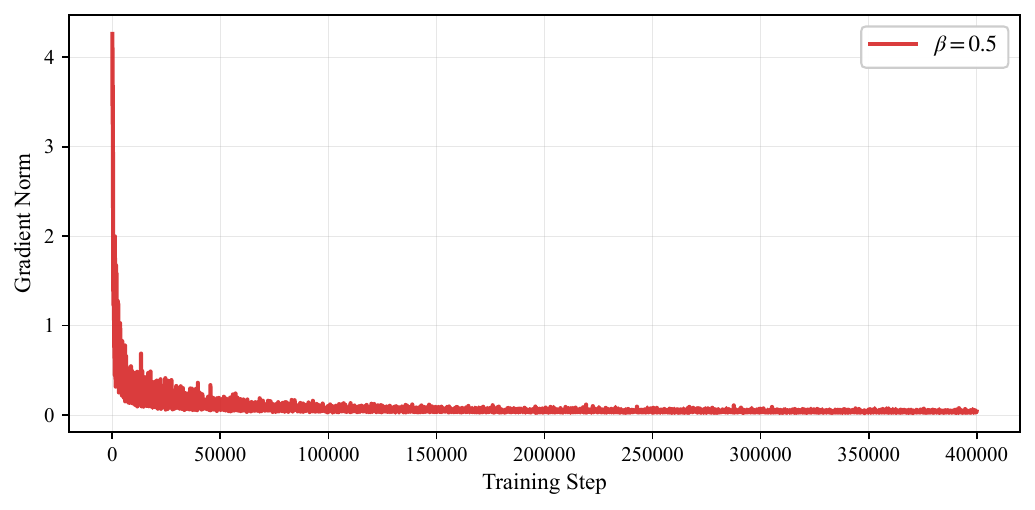}
    \caption*{(e) Gradient norm of the velocity field for fixed $\beta=0.5$ over training.}
  \end{minipage}

  \caption{Ablation studies showing the effect of regularization and model capacity on training dynamics and final performance.}
  \label{ablation:cifar_all}
\end{figure}

%%%%%%%%%%%%%%%%%%%%%%%%%%%%%%%%%%%%%%%%%%%%%%%%%%%%%%%%%%%%%%%%%%%%%%%%%%%%%%%
%%%%%%%%%%%%%%%%%%%%%%%%%%%%%%%%%%%%%%%%%%%%%%%%%%%%%%%%%%%%%%%%%%%%%%%%%%%%%%%

\end{document}